\newcommand{\tabincell}[2]{\begin{tabular}{@{}#1@{}}#2\end{tabular}}
\newcommand{\dd}{\mathrm{d}}
\newcommand{\E}{\mathbb{E}}
\newcommand{\p}{\mathbb{P}}
\newcommand{\f}{\mathcal{F}}
\DeclareMathOperator*{\argmin}{arg\,min}
\newtheorem{assumption}{Assumption}
\newtheorem{eg}{Example}
\newtheorem{proposition}{Proposition}
\newtheorem{theorem}{Theorem}
\newtheorem{lemma}{Lemma}
\begin{document}
\pagenumbering{arabic}
\providecommand{\keywords}[1]
{
\small	
\textbf{\textit{Keywords:}} #1
}
\providecommand{\jel}[1]
{
\small	
\textbf{\textit{JEL Classification:}} #1
}

\title{Policy Evaluation and Temporal-Difference Learning in Continuous Time and Space: A Martingale Approach}

\author{Yanwei Jia\thanks{Department of Industrial Engineering and Operations Research, Columbia University, New York, NY 10027, USA. Email: yj2650@columbia.edu.} ~ ~ ~ Xun Yu Zhou\thanks{Department of Industrial Engineering and Operations Research \& Data Science Institute, Columbia University, New York, NY 10027, USA. Email: xz2574@columbia.edu.}}

\maketitle
\begin{abstract}
\singlespacing
We propose a unified framework to study policy evaluation (PE) and the associated temporal difference (TD) methods for reinforcement learning in continuous time and space. We show that PE is equivalent to maintaining the martingale condition of a process. From this perspective, we find that the mean--square TD error approximates the quadratic variation of the martingale and thus is not a suitable objective for PE. We present two methods to use the martingale characterization for designing PE algorithms. The first one minimizes a ``martingale loss function", whose solution is proved to be the best approximation of the true value function in the mean--square sense. This method interprets the classical gradient Monte-Carlo algorithm. The second method is based on a system of equations called the ``martingale orthogonality conditions" with test functions. Solving these equations in different ways recovers various classical TD algorithms, such as TD($\lambda$), LSTD, and GTD. Different choices of test functions determine in what sense the resulting solutions approximate the true value function. Moreover, we prove that any convergent time-discretized algorithm converges to its continuous-time counterpart as the mesh size goes to zero, and we provide the convergence rate. We demonstrate the theoretical results and corresponding algorithms with numerical experiments and applications.
\end{abstract}

\keywords{Continuous time and space, reinforcement learning, policy evaluation, temporal difference, martingale.}

\section{Introduction}
Policy evaluation (PE) is a crucial step in most critic-related reinforcement learning (RL) algorithms such as actor-critic algorithms and policy iteration. Its objective is to estimate/predict the value function of a given policy using samples, generally without knowledge about the environment. Existing PE methods have predominantly been limited to discrete-time problems with finite-state Markov decision processes (MDPs). For instance,
Monte Carlo methods use samples to estimate expectations assuming the whole sample trajectories can be repeatedly presented for training; hence they are compatible with offline learning.
The most popular PE methods are based on the temporal difference (TD) error. These are  incremental learning procedures driven by the error between temporally
successive predictions. \cite{sutton1988learning} argues that predictions of the TD methods are both more
accurate and easier to compute than other methods. More importantly, these methods
can learn the value  in real-time before a task terminates; hence it can be used both online and offline \citep{sutton2011reinforcement}.

Despite the fast development and vast applications, there are two major limitations in the current study on RL in general and on PE in particular.
First, most algorithms are developed for MDPs, and little attention has been  paid to problems with continuous time and space. The  few existing studies in the  continuous setting  have been largely restricted to deterministic systems; see for example  \citet{baird1993advantage,doya2000reinforcement,fremaux2013reinforcement,vamvoudakis2010online} and \citet{lee2021policy}, where the state processes follow ordinary differential equations (ODEs) and there are no environmental noises. In particular, \citet{baird1993advantage} and \citet{doya2000reinforcement} are the first to propose some continuous-time versions of the TD methods.
In real life, however, there are abundant examples in which an agent can or indeed needs to interact with a {\it random} environment at ultra-high frequency,
e.g., high-frequency stock trading, autonomous driving, and robots navigation.
Second, while there have been numerous PE algorithms proposed using function approximation such as residual
gradient, gradient
Monte Carlo, and TD($\lambda$), they were usually devised in heuristic and ad hoc manners and their underlying objectives were not always clearly stated.\footnote{See Appendix A, Table \ref{tab:summary algorithms}, for a list of names of existing PE algorithms for MDPs.}
Although many of them are proved to be convergent, the limiting functions are not always interpreted properly especially if the function approximators do not
contain the true solutions. In short, there seems a lack of a {\it unified}  framework to study PE and there is
need for a continuous time and
space perspective, from which many well-known algorithms appear as
discretizations.

The goal of this paper is to bridge these  gaps by providing a unified theoretical
underpinning of PE in continuous time and space with general Markov diffusion processes. Instead of discretizing time, state, and action from the start and then applying the existing discrete techniques and results, we carry out all our theoretical analysis for the continuous setting and discretize time only at the final, algorithmic stage. The advantage of doing so is two-fold. On one hand, as
\citet{doya2000reinforcement} argues, the control performance with this approach will be smoother and the right granularity for discretization will be guided by the function approximation. On the other hand, and indeed more importantly, for analyses in a continuous setting, we have plenty of well-developed tools at our disposal including those of stochastic calculus, differential equations, and stochastic control, which, in turn, will  provide better interpretability/explainability  to the underlying learning technologies. 

Stochastic optimization in continuous time and space, also known as stochastic control, has a long history starting from the 1960s. However, its theory is model-based, namely, the system dynamics and the objective functions are assumed to be given and known. The problem can then be solved by well-established approaches such as Pontryagin's maximum principle and Bellman's dynamic programming. For full accounts of the stochastic control theory see, e.g., \citet{YZbook} and \citet{fleming2006controlled}.
On the other hand, to our best knowledge, the study on model-free RL for diffusion processes started only recently.
\citet{wang2020reinforcement} propose an entropy-regularized, stochastic relaxed control formulation for trading off exploration and exploitation in continuous time and space, and derive the continuous version of the Boltzmann distribution  (Gibbs measure) as the optimal exploratory policy. When the problem is linear--quadratic (LQ), namely the dynamic is linear and the payoff is quadratic in state and action, the optimal strategy specializes to Gaussian exploration. \citet{wang2020continuous} apply this general theory to a mean--variance financial portfolio selection problem, which is inherently of an LQ structure, and design an algorithm for extensive simulation and empirical experiments. \citet{DDJ2020} further consider the equilibrium mean--variance strategies addressing the time-inconsistent issue of the problem. \citet{GuoXZ2020}
extend the formulation and results of \citet{wang2020reinforcement}  to mean-field games. \citet{GXZ2020} use the idea of \citet{wang2020reinforcement} to a non-learning problem -- simulated annealing for nonconvex optimization formulated as controlling the temperature of a Langevin diffusion.

For PE, there are generally two aspects one should address. First and more fundamentally,
one specifies a mathematical objective against which a learning task is evaluated. Usually, such an objective is described by either an optimization problem (to minimize a loss/error function) or a system of equations. Second and on the implementation front, one designs an algorithm to achieve the objective. Many papers have contributed to the second aspect, namely, to develop more efficient numerical solvers to accelerate convergence, reduce variance, or  save computational cost; see, e.g., \citet{xu2002efficient,liu2016proximal,du2017stochastic}. In contrast to that line of research, the present paper focuses on the first aspect aiming at building a unified theoretical framework for PE. We propose and analyze several common objectives in the continuous setting, and demonstrate that they generate continuous counterparts of
some of the best-known PE algorithms for MDPs. This not only leads to PE algorithms for the continuous problems but also provides additional foundations for the discrete ones. As our algorithms designed for our continuous setting are discretized in time for implementation, their convergence with a fixed discretization mesh size
has been already established by existing results. Moreover, we show that, as the discretization gets finer, the limiting point of a convergent discrete-time algorithm also converges to the corresponding solution to the continuous problem, and we further provide the convergence rate.

The entire theoretical analysis of the current paper is premised upon the fact that
the value function along the state process combined with the accumulated running payoff is a martingale. 
This martingality naturally gives rise to a target for offline learning: the value of the martingale at any given time is the least square estimate of that at the terminal time. On the other hand, the martingality leads to orthogonality conditions that in turn generate algorithms corresponding to many existing well-known TD algorithms for MDPs.

A similar martingale condition can also be derived for discrete-time MDPs, which is equivalent to the so-called Bellman equation or the Bellman consistency. In Appendix C we provide such a derivation.
However, to our best knowledge, in the existing RL literature such a condition has not been explicitly presented  -- even if it is rather straightforward to deduce -- nor has it been employed to study PE. Instead, the Bellman equation has been the predominant tool to devise PE algorithms. We demonstrate that the change of perspective from the Bellman equation to the martingality is crucial
in our analysis.

Specifically, our main contributions can be summarized as follows:
\begin{enumerate}[(i)]
\item We show that the continuous analogue of the na\"ive residual gradient method, which minimizes the mean-square TD error \citep{barnard1993temporal, baird1995residual,doya2000reinforcement,wang2020continuous}, converges to the minimizer of the {\it quadratic variation} of the aforementioned martingale. It is, therefore,  {\it inconsistent} with the learning objective. This in turn provides a theoretical explanation why the method is not a desired approach for PE when the environment is stochastic.
\item We propose a martingale loss function based on the total mean-square error between the said martingale process and its terminal value. We prove that minimizing such a loss function is equivalent to minimizing the mean-square error between the approximate value function and the true one. This loss function is implementable on samples, and justifies the  Monte Carlo PE with function approximation \citep{sutton2011reinforcement} in the classical MDP and RL literature.
\item 
We provide a unified perspective  to  interpret TD errors and the related algorithms, including TD($\lambda$), least square TD (LSTD), and gradient TD (GTD and its variants), based on the martingale orthogonality conditions. Specifically, by introducing a finite number of suitable test functions to these conditions, the learning problem is transformed into a system of equations called moment conditions. From this vantage point, we realize that TD($\lambda$) is nothing but to directly apply stochastic approximation to solve such equations, LSTD is  to solve them explicitly when they form a linear system, and GTD methods are to solve various quadratic forms of the moment conditions. In addition, different choices of the test functions determine in what sense the true value function is approximated. For example, TD($\lambda$) essentially correspond to different test functions for different values of $\lambda$, and hence may converge to {\it different} limits.
\end{enumerate}

For reader's easy reference, we present Table 1 in Appendix A summarizing popular PE methods and algorithms, and the interpretations
we will have discovered in this paper in terms of the  objectives  and the convergent limits of the algorithms.

As the conditional expectation in the expression of the value function is connected to both a partial differential equation (PDE) through the Feynman--Kac formula \citep{karatzas2014brownian} and to a backward stochastic differential equation (BSDE) through a martingale representation theorem \citep{el1997backward}, the results of the current paper have natural implications on applying  machine learning methods to numerically solve (high-dimensional) PDEs in search of breaking the  ``curse of dimensionality". The latter has been a hotly pursued topic lately; see for example \citet{raissi2018deep} and \citet{hure2019some}. \citet{han2018solving} propose a deep learning approach to solving PDEs by solving the associated BSDEs via simulation. All these works need to assume that the coefficients of the PDEs  are known. The results of our paper shed light on solving PDEs
by PE methodologies in a data-driven way, in view of the intimate connection among PDEs, BSDEs and PE for Markov diffusion process.\footnote{It is interesting to note that there seems to be less research on solving recursive Bellman-like equations using MDPs, even though the same curse of dimensionality exists for discrete-time
equations.}

The rest of the paper proceeds as follows. In Section \ref{sec:setup}, we formulate the PE problem in continuous time and space and present the martingale characterization of the value function. In Section \ref{sec:td error}, we extend the classical mean-square TD error to the continuous setting and show why it is not a proper objective when the environment is stochastic through simple simulated counter-examples and theoretical analysis. In Section \ref{sec:martingale perspective}, we propose several objectives for PE from  the martingale perspective, based on which we recover and interpret some well-studied PE algorithms. We also present numerical experiments for demonstration. Section \ref{sec:applications} is devoted to some extensions of
our problem formulation along with applications to option-like payoffs and linear-quadratic problems. In Section \ref{algorithmic} we discuss the choice of test functions and the way to do function approximation from the algorithmic perspective.  We conclude in Section \ref{sec:conclusion}. Appendix contains some supplementary materials and all the proofs.

\section{Problem Formulation and Preliminaries}
\label{sec:setup}

Throughout this paper, by convention all vectors are {\it column} vectors unless otherwise specified, and $\mathbb{R}^k$ is the space of all $k$-dimensional  vectors (hence $k\times 1$ matrices). Let $A$ and $B$ be two matrices of the same size.
We denote by $A \circ B$ the inner product between $A$ and $B$, by $|A|$ the Euclidean/Frobenius norm of $A$, and write $A^2: = AA^\top$, where $A^\top$ is  $A$'s transpose. 

A general continuous-time RL problem can be formulated under the stochastic control framework with controlled It\^o's stochastic differential equations (SDEs), analogous to MDPs in discrete time.  However, since this paper concerns only a part (though a crucial part) of the RL problem, namely policy evaluation (PE)  under a {\it fixed} control policy,
we will formulate the problem {\it without}  the control variable, which is
the continuous-time counterpart of the Markov reward process (MRP) in discrete time.\footnote{PE sometimes is also referred to as the \textit{prediction problem}. A general stochastic control formulation of RL can be founded in \cite{wang2020reinforcement}, which will also be reviewed in Appendix B.} 

Let $d,m$ be given positive integers, $T>0$,  and $b: [0,T]\times \mathbb{R}^d \mapsto \mathbb{R}^d$ and $\sigma:
[0,T]\times \mathbb{R}^d\mapsto \mathbb{R}^{d\times m}$ be  given functions. 
The {\it state} (or {\it feature}) dynamic follows a Markov diffusion process  governed by an SDE:
\begin{equation}
\label{eq:model classical}
\dd X_s = b(s,X_s)\dd s + \sigma(s,X_s) \dd W_s,
\end{equation}
such that for any given initial time--state pair $(t,x)\in [0,T]\times \mathbb{R}^d$, the SDE (\ref{eq:model classical}) with $X_t=x$ admits a solution $X=\{X_s,t\leq s\leq T\}$ on a certain filtered probability space $\left( \Omega ,\mathcal{F},\mathbb{P};\{\mathcal{F}_s\}_{s\geq t}\right) $ along with a standard $\{\mathcal{F}_s\}_{s\geq t}$-adapted $m$-dimensional  Brownian motion $W=\{W_{s},$ $s\geq t\}$. Note here we are concerned with the {\it weak} solution which includes the filtered probability space and the Brownian motion as part of the solution. See \cite{karatzas2014brownian} for various notions of solutions to an SDE.

Assuming the weak solution of (\ref{eq:model classical}) is unique (i.e. all the solutions
have identical probability distribution, even if possibly with different sample paths), we define the \textit{value function}
\begin{equation}
\label{J}
J(t,x) =\E\left[\int_t^T r(s,X_s)\dd s + h(X_T)\Big|X_t = x\right],
\end{equation}
where $r$ is an (instantaneous) reward (cost) function (i.e. rate of reward/cost conditioned on time and state)
and $h$ the lump-sum reward (cost) function applied at the end of the planning period, $T$.

Unlike most RL problems that are formulated in an infinite planning horizon (known as {\it continuing tasks}), the current paper mainly focuses on a finite horizon setting (known as {\it episodic tasks}).\footnote{We will briefly discuss the infinite horizon case with exponentially discounted payoff  in Subsection \ref{sec:discount}.} Finite horizons reflect limited lifespans of real-life tasks, 
e.g., a trader sells a financial contract with a maturity date, a robot finishes a task before a deadline, and a video gamer strives to pass a checkpoint given a time limit.

The PE task is, for a fixed given policy (which is suppressed in the formulation above due to the reason we stated earlier), to devise a numerical procedure to find $J(t,x)$ as a {\it function} of $(t,x)$ using multiple sample trajectories of the process $\{ s,X_s,r( s,X_s)\}_{t\leq s\leq T}$,
where $\{X_s,\;t\leq s\leq T\}$ is the solution to
(\ref{eq:model classical}), {\it without} the knowledge of the model parameters (the functional forms of $b,\sigma,r,h$).
Hence we cover the settings of on-policy (i.e., the samples are generated under a {\it target} policy)\footnote{\cite{sutton2008convergent} uses ``behavioral policy'' to describe the policy to generate observations and ``target policy'' to describe the policy we want to evaluate. Off-policy means  training on data from a behavioral policy in order to learn the value of a target policy, and on-policy means that the behavioral policy coincides with the target policy in learning.}, episodic (i.e., the same learning task is repeated for many episodes/multiple trajectories), offline (i.e., the approximated function is updated after a full episode/trajectory has been run) and  online (i.e., the approximated function is updated in real time as we go). We emphasize that for a finite-horizon problem, it is generally  too ambitious to expect an effective algorithm that learns from a {\it single} trajectory with no resets, due to the limited sample size. Learning with a single trajectory is usually done in an infinite horizon setting.

We make the following standard regularity assumptions on the coefficients of \eqref{eq:model classical} and the reward function \eqref{J} to ensure  the theoretical  well-posedness of the problem:
\begin{assumption}
\label{ass:sde regularity}
The following conditions hold true:
\begin{enumerate}[(i)]
\item $b,\sigma,r,h$ are all continuous functions in their respective arguments;
\item $b,\sigma$ are uniformly Lipschitz in $x$, i.e., for $\varphi = b,\ \sigma$, there exists a constant $C>0$ such that
\[ |\varphi(t,x) - \varphi(t,x')| \leq C|x-x'|,\;\;\forall t\in [0,T],\ x,x'\in \mathbb{R}^d; \]
\item $b,\sigma$ have linear growth in $x$, i.e., for $\varphi = b,\ \sigma$, there exists a constant $C>0$ such that
\[|\varphi(t,x)| \leq C(1+|x|) ,\;\;\forall (t,x)\in [0,T] \times \mathbb{R}^d;\]
\item $r$ and $h$ both have polynomial growth  in  $x$,  i.e., 
there exist constants $C>0$ and $\mu\geq 1$ such that
\[
|r(t,x)| \leq C(1+|x|^{\mu}) ,\;\;|h(x)| \leq C(1+|x|^{\mu}),\; \forall (t,x)\in [0,T] \times \mathbb{R}^d .\]
\end{enumerate}
\end{assumption}

Under Assumption \ref{ass:sde regularity}(i)-(iii), the SDE \eqref{eq:model classical} admits a unique strong solution (and hence a unique weak solution) whose moments of any given order are uniformly bounded; see, e.g.,  \cite{karatzas2014brownian}. The unique existence of a weak solution alone requires much weaker assumptions; see e.g. \cite{SV79}, but we will not pursue along that line. On the other hand, Assumption \ref{ass:sde regularity}(iv) is to ensure that
$J(t,x)$ is finite for any $(t,x)$.

We now recall some existing results on Markov diffusion processes underpinning  the theoretical analysis in this paper.
First, $J$ can be characterized by a PDE based on the celebrated Feynman--Kac formula (\citealp{karatzas2014brownian}):\footnote{This PDE is a spacial case of the (nonlinear) Hamilton-Jacobi-Bellman (HJB) equation in  continuous-time stochastic control  when the control variable is fixed.}
\begin{equation}
\label{eq:pde characterization}
\left\{\begin{array}{l}
\mathcal{L}J (t,x) + r\big(t,x\big) = 0,\; (t,x)\in [0,T)\times \mathbb{R}^d,\\
J(T,x) = h(x),
\end{array}\right.
\end{equation}
where
\[ \mathcal{L}J (t,x): = \frac{\partial J}{\partial t}(t,x) + b\big( t,x\big) \circ \frac{\partial J}{\partial x}(t,x) + \frac{1}{2}\sigma^2\big( t,x\big) \circ \frac{\partial^2 J}{\partial x^2}(t,x)\]
is known as the {\it infinitesimal generator} associated with the diffusion process \eqref{eq:model classical}. Here, $\frac{\partial J}{\partial x} \in \mathbb{R}^d$ is the gradient, and $\frac{\partial^2 J}{\partial x^2}\in \mathbb{R}^{d\times d}$ is the Hessian.

The above PDE would be fully specified had the model been completely known.\footnote{Some of this PDE's theoretical properties, such as  existence, uniqueness, and regularity, have been well studied in terms of viscosity solution; see, e.g., \cite{crandall1992user,beck2021nonlinear}.} If the state space has a dimension up to 4 (i.e. $d\leq 4$), the equations can be efficiently solved numerically by methods such as Monte-Carlo and finite element algorithms. Unfortunately, in many practical applications the model parameters are not known, nor is the dimension small. Here,
to avoid unnecessary technicality, we assume
\begin{assumption}
\label{ass:fk pde}
The PDE \eqref{eq:pde characterization} admits a classical solution $J\in C^{1,2}([0,T) \times \mathbb{R}^d)$ satisfying the polynomial growth condition, i.e.,
there exist constants $C>0$ and $\mu\geq 1$ such that
\[
|J(t,x)| \leq C(1+|x|^{\mu}), \  \forall (t,x)\in [0,T] \times \mathbb{R}^d .\]
\end{assumption}

Second, the PDE (\ref{eq:pde characterization}) is related to the following forward--backward stochastic differential equation (FBSDE):
\begin{equation}
\label{eq:fbsde}\left\{\begin{array}{l}
\dd X_s = b(s,X_s)\dd s + \sigma(s,X_s) \dd W_s,\
s\in[t,T]; \;\;X_{t} = x,\\

\dd Y_s = -r\big(s,X_s\big)\dd s + Z_s\dd W_s,s\in[t,T]; \ Y_T = h(X_T).
\end{array}\right.
\end{equation}
Its solution, $\{(X_s, Y_{s}, Z_{s}),\;t\leq s\leq T\}$, has the following representations in terms of $J$:
\begin{equation}
\label{eq:link}
Y_s=J(s,X_s),\;\;Z_{s} =  \frac{\partial J}{\partial x}(s,X_s)^\top\sigma\big(s,X_s\big),\;\;s\in[t,T].
\end{equation}
The above relationship can be easily seen by applying It\^o's formula to $J$; for details see \citet{el1997backward}.

For any fixed $(t,x)\in [0,T]\times \mathbb{R}^d$ and $\{X_s,\;t\leq s\leq T\}$ solving the first equation of (\ref{eq:fbsde}), define
\begin{equation}
\label{eq:m} M_s: = J(s,X_{s}) + \int_{t}^s r\big(s',X_{s'}\big)\dd s'
\equiv Y_s+\int_{t}^s r\big(s',X_{s'}\big)\dd s',\;\;s\in[t,T].
\end{equation}


The following result is the theoretical foundation of this paper, which characterizes the value function $J$ by the martingality of $M$.

\begin{proposition}
\label{proposition:martingale bsde}
Suppose Assumptions \ref{ass:sde regularity} and \ref{ass:fk pde} hold. For any fixed $(t,x)\in [0,T]\times \mathbb{R}^d$ and $\{X_s,\;t\leq s\leq T\}$ solving the first equation of (\ref{eq:fbsde}), the process $M = \{M_s,t\leq s\leq T \}$ is a square-integrable martingale. Conversely, if there is a continuous function $\tilde{J}$ such that for all $(t,x)\in [0,T]\times \mathbb{R}^d$, $\tilde M = \{\tilde M_s,t\leq s\leq T \}$ is a square-integrable martingale, where $\tilde{M}_s:= \tilde{J}(s,X_{s}) + \int_{t}^s r\big(s',X_{s'}\big)\dd s'$, and $\tilde{J}(T,x) = h(x)$, then $\tilde{J}\equiv J$ on $[0,T]\times \mathbb{R}^d$.
\end{proposition}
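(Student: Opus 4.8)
The plan is to treat the two directions separately: the forward (martingale) assertion via It\^o's formula combined with the Feynman--Kac PDE, and the converse by evaluating the martingale property at the two endpoints of the interval. For the forward direction, I would apply It\^o's formula to $J(s,X_s)$, which is legitimate since $J\in C^{1,2}([0,T)\times\mathbb{R}^d)$ by Assumption \ref{ass:fk pde} (applying it on $[t,T-\varepsilon]$ and passing to the limit to handle the endpoint $T$, using continuity of $J$ up to $T$). This gives
\[
\dd J(s,X_s) = \mathcal{L}J(s,X_s)\,\dd s + \frac{\partial J}{\partial x}(s,X_s)^\top \sigma(s,X_s)\,\dd W_s .
\]
Adding $r(s,X_s)\,\dd s$ and invoking the PDE \eqref{eq:pde characterization}, namely $\mathcal{L}J + r \equiv 0$ on $[0,T)\times\mathbb{R}^d$, the finite-variation (drift) part cancels exactly, so that $\dd M_s = \frac{\partial J}{\partial x}(s,X_s)^\top \sigma(s,X_s)\,\dd W_s$. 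Hence $M$ is a continuous local martingale, being a stochastic integral against $W$; this is also consistent with the FBSDE link \eqref{eq:link}.

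The next step is to upgrade $M$ from a local martingale to a genuine, square-integrable martingale. Rather than trying to control the gradient $\frac{\partial J}{\partial x}$ directly (which the assumptions do not bound), I would bound $M$ pathwise using only the polynomial growth of $J$ and $r$ from Assumptions \ref{ass:sde regularity}(iv) and \ref{ass:fk pde}:
\[
\sup_{t\le s\le T}|M_s| \;\le\; \sup_{t\le s\le T}|J(s,X_s)| + \int_t^T |r(s',X_{s'})|\,\dd s' \;\le\; C\Big(1+\sup_{t\le s\le T}|X_s|^{\mu}\Big).
\]
Under Assumption \ref{ass:sde regularity}(i)--(iii) the solution of \eqref{eq:model classical} has finite moments of every order, in particular $\E\big[\sup_{t\le s\le T}|X_s|^{2\mu}\big]<\infty$, so that $\E\big[\sup_{t\le s\le T}|M_s|^{2}\big]<\infty$. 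A continuous local martingale whose running supremum is $L^1$-integrable is a true (uniformly integrable) martingale on $[t,T]$, and the stronger $L^2$ bound yields square-integrability; this closes the forward direction.

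For the converse, the key observation is that the two processes share the same terminal value. Since $\tilde J(T,\cdot)=h$, we have $\tilde M_T = h(X_T)+\int_t^T r(s',X_{s'})\,\dd s'$. Fixing any $(t,x)\in[0,T]\times\mathbb{R}^d$ and starting the diffusion at $X_t=x$, note that at the initial time $\tilde M_t = \tilde J(t,X_t) = \tilde J(t,x)$ because the time integral vanishes. Using the assumed martingality of $\tilde M$ and evaluating it at $s=t$ (so that conditioning on $\mathcal{F}_t$ reduces to the plain expectation, $X_t=x$ being deterministic),
\[
\tilde J(t,x) = \tilde M_t = \E\big[\tilde M_T \mid \mathcal{F}_t\big] = \E\Big[h(X_T)+\int_t^T r(s',X_{s'})\,\dd s'\ \Big|\ X_t=x\Big] = J(t,x),
\]
where the last equality is precisely the definition \eqref{J}. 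As $(t,x)$ is arbitrary, this yields $\tilde J\equiv J$.

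I expect the only real technical obstacle to lie in the forward direction: the moment bookkeeping for $\sup_{t\le s\le T}|X_s|$ and the justification of the local-to-true martingale passage (together with the limiting argument needed to apply It\^o up to the terminal time $T$, where $J$ is only continuous). The converse, by contrast, is conceptually clean and requires neither It\^o calculus nor the PDE --- it rests entirely on recognizing that the shared terminal value $\tilde M_T = M_T$ plus the endpoint evaluation of the martingale condition pins down $\tilde J$, which is exactly the ``martingale characterization'' the paper wishes to emphasize.
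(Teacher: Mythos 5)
Your proof is correct, but your forward direction takes a genuinely different route from the paper's. The paper never touches It\^o's formula or the PDE for this step: it simply writes $J(s,X_s)=\E\big[\int_s^T r(s',X_{s'})\,\dd s'+h(X_T)\,\big|\,X_s\big]$ straight from the definition \eqref{J}, invokes the Markov property to replace conditioning on $X_s$ by conditioning on $\mathcal{F}_s$, and concludes $M_s=\E[M_T\mid\mathcal{F}_s]$ by the tower property --- a two-line argument that needs no regularity of $J$ beyond measurability and integrability. Your route instead goes through the Feynman--Kac PDE \eqref{eq:pde characterization} and It\^o's formula to exhibit $M$ as a stochastic integral, then upgrades from local to true martingale via the $L^2$ bound on $\sup_s|M_s|$ coming from polynomial growth and the moment estimates on $X$. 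This is valid (and you correctly flag the two technical points it requires: the limiting argument near $s=T$ where $J$ is only continuous, and the local-to-true passage), and it buys something the paper's argument does not: the explicit representation $\dd M_s=\frac{\partial J}{\partial x}(s,X_s)^\top\sigma(s,X_s)\,\dd W_s$, which is exactly the link \eqref{eq:link} and the source of the quadratic variation formula \eqref{eq:qv} used later. The cost is that you lean on Assumption \ref{ass:fk pde} (classical solvability of the PDE) for a direction where the paper's probabilistic argument does not need it; in effect you are re-deriving Feynman--Kac rather than using the definition of $J$ directly. Your converse is essentially identical to the paper's --- the paper establishes $\tilde J(s,X_s)=J(s,X_s)$ for all $s\in[t,T]$ and then sets $s=t$, whereas you evaluate at $s=t$ immediately; both rest on the shared terminal value $\tilde M_T=M_T$ and neither needs It\^o calculus.
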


This proposition inspires a martingale approach to PE in continuous-time RL, which will be developed in this paper. Essentially, the approach exploits  the equivalence between PE (Feynman--Kac formula) and the martingality.

Finally, for a square-integrable semi-martingale $M = \{M_t,0\leq t\leq T \}$, its quadratic variation process, denoted by $\langle M \rangle = \{\langle M \rangle_t,0\leq t\leq T \}$, is defined to be \citep{karatzas2014brownian}
\[\langle M \rangle_t = \lim_{||\Delta|| \to 0}  \sum_{i=0}^{K-1} (M_{\tau_i} - M_{\tau_{i-1}})^2<\infty, \]
where $\Delta : 0 = \tau_0 < \cdots < \tau_K = t$ is an arbitrary partition of the interval $[0,t]$, and $||\Delta|| = \max_{1\leq i \leq K}\{ \tau_{i} - \tau_{i-1}\}$ is the largest mesh size. 
For $M$ defined by (\ref{eq:m}), we have
\begin{equation}
\label{eq:qv}
\langle M \rangle_t=\langle Y \rangle_t=\int_0^t|Z_s|^2ds,\;\;t\in[0,T].
\end{equation}


Introduce
\[ L^2_{\f}([0,T]) = \bigg\{ \kappa = \{\kappa_t, 0\leq t\leq T \} \text{ is real-valued and $\f_t$-progressively measurable} : \E\int_0^T \kappa_t^2 \dd t < \infty \bigg\}.\]
It is a Hilbert space with $L^2$-norm $||\kappa||_{L^2} = (\E\int_0^T \kappa_t^2 \dd t)^{\frac{1}{2}}$.
More generally, for any semi-martingale $Y=\{Y_{s},$ $s\geq 0\}$, we denote
\[
L^2_{\f}([0,T];Y) = \bigg\{ \kappa = \{\kappa_t, 0\leq t\leq T \}:  \mbox{ $\kappa$ is $\f_t$-progressively measurable and } \E\int_0^T |\kappa_t|^2 \dd \langle Y \rangle_t < \infty \bigg\}. \]

\section{Temporal Difference Error in Continuous Time}
\label{sec:td error}
In this section, we first review \cite{doya2000reinforcement}'s TD error approach  for deterministic dynamics and then explain why we can {\it not} extend that approach
to the stochastic setting.

\subsection{Doya's TD algorithm for deterministic dynamics}
Many RL algorithms for discrete-time MDPs use TD error to evaluate policies. \cite{doya2000reinforcement} extends the TD approach to RL with continuous time and space, albeit only for {\it deterministic} dynamics. For readers' convenience and for highlighting the key differences between deterministic and stochastic settings, we briefly review \cite{doya2000reinforcement}'s approach here.

In our setting with $\sigma=0$ (and hence all the expectations are dropped), Doya's approach starts with the obvious identity
\begin{equation}\label{learning_value}
J(t,X_t)=\int_{t}^{t' }r(s,X_s)\dd s+J(t', X_{t'}),\;\;t'\in(t,T].
\end{equation}
Rearranging this equation and dividing both sides by $t'-t$, we obtain
\begin{equation}\label{delta}
\frac{J(t', X_{t'})-J(t,X_t)}{t'-t}+\frac{1}{t'-t}\int_t^{t'}r(s,X_s)\dd s=0.
\end{equation}
Letting $t'\rightarrow t$ on the left hand side motivates the definition of the {\it TD error rate}:\footnote{\cite{doya2000reinforcement} still refers this term as ``TD error'', while we add ``rate'' in its definition to reflects that it is indeed the {\it instantaneous} temporal difference  at a given time $t$. However, we will use both terms interchangeably in this paper.}
\begin{equation}
\label{eq:td error}
\delta_t:=\dot{J}_t+r_t,
\end{equation}
where $\dot{J}_t:=\frac{d}{dt}J(t,X_t)$ is the total derivative of $J$ along $(t,X_t)$, and $r_t:=r(t,X_t)$.

The {\it function approximation} approach widely employed for PE first approximates $J$ by a parametric family of functions $J^{\theta}$
(upon using linear spans of basis functions or neural networks, or taking advantage of any known or plausible structure of the underlying problem), with $\theta \in \Theta \subseteq \mathbb{R}^L$. Henceforth, we always use $\theta$-superscripted functions to denote
those corresponding to the parameterized functions. For instance, $\delta_t^\theta:=\dot{J}_t^\theta+r_t$.

\citet{doya2000reinforcement} determines $\theta$ by minimizing the \textit{mean--square TD error} (MSTDE)
\begin{equation}
\label{eq:mean squared td error}
\operatorname{MSTDE}(\theta):=\frac{1}{2}\int_0^T |\delta_t^{\theta}|^2\dd t\equiv \frac{1}{2}\int_0^T \big|\dot{J}^{\theta}_t+r_t\big|^2\dd t,
\end{equation}
in view of the fact that this error {\it ought} to be zero theoretically.

To approximate and compute $\operatorname{MSTDE}(\theta)$, we
discretize $[0,T]$ into small intervals $[t_i,t_{i+1}]$, $i=0,1,\cdots,K-1$, with
an equal length $\Delta t$, where $t_0=0$ and $t_{K}=T$. This leads to
\begin{equation}
\label{eq:MSTDE}
\operatorname{MSTDE}(\theta)\approx \frac{1}{2}\sum_{i=0}^{K-1} \bigg( \frac{J^{\theta}(t_{i+1}, X_{t_{i+1}}) - J^{\theta}(t_{i}, X_{t_{i}})}{t_{i+1} - t_{i}} + r_{t_{i}} \bigg)^2\Delta t =:\operatorname{MSTDE}_{\Delta t}(\theta).
\end{equation}
A gradient descent algorithm is then applied to obtain the minimizer $\theta^*$ of $\operatorname{MSTDE}_{\Delta t}$ which in turn determines $J(t,x)=J^{\theta^*}(t,x)$. Namely,
\begin{equation}
\label{eq:gradient squared td error}
\theta \leftarrow \theta - \alpha \sum_{i=0}^{K-1} \bigg( \frac{J^{\theta}(t_{i+1}, X_{t_{i+1}}) - J^{\theta}(t_{i}, X_{t_{i}})}{t_{i+1} - t_{i}} + r_{t_{i}} \bigg) \bigg( \frac{\partial J^{\theta}}{\partial \theta}(t_{i+1}, X_{t_{i+1}})  - \frac{\partial J^{\theta}}{\partial \theta}(t_{i}, X_{t_{i}})  \bigg),
\end{equation}
where $\alpha$ is the learning rate (step size). This updating rule is also referred to as the \textit{na\"ive
residual gradient} method \citep{barnard1993temporal, baird1995residual}. 

The above algorithm is stated in  the offline setting; namely, one uses the {\it whole} sample trajectory to update $\theta$. However, TD-learning is often advocated for {\it online} learning: instead of observing the full sample path over $[0,T]$, one updates the estimate of the value function at each discrete time point using all available historical information.
Take the most popular one-step method for example. With the time discretization described above,
this method updates $\theta$ by
\[\theta \leftarrow  \theta - \alpha \bigg( \frac{J^{\theta}(t_{i+1}, X_{t_{i+1}}) - J^{\theta}(t_{i}, X_{t_{i}})}{t_{i+1} - t_{i}} + r_{t_{i}} \bigg) \bigg( \frac{\partial J^{\theta}}{\partial \theta}(t_{i+1}, X_{t_{i+1}})  - \frac{\partial J^{\theta}}{\partial \theta}(t_{i}, X_{t_{i}})  \bigg). \]
Notice that this increment is just one term in that of  \eqref{eq:gradient squared td error}.

The most important feature of these TD-based algorithms that makes them implementable for learning is that one can {\it observe}
the payoffs $r_{t_i}$ and the states $X_{t_{i}}$, and hence can compute $J^{\theta}(t_i,X_{t_i})$, $i=0,1,\cdots,K-1$, through samples,
{\it without} having to know the model parameters.

\subsection{Mean-square TD error for stochastic dynamics}
\label{sec:td toy example}
If we are to extend the MSTDE approach {\it na\"ively} from \cite{doya2000reinforcement}'s deterministic setting to the current stochastic (diffusion) setting, then we first note that the following equation, which is similar to (\ref{learning_value}), holds
\begin{equation}\label{learning_value_st}
J(t,X_t)=\E\left[\int_{t}^{t' }r(s,X_s)\dd s+J(t', X_{t'})\Big|\mathcal{F}_t\right].\;\;t'\in(t,T].
\end{equation}
This equation is called {\it Bellman's consistency}. 
Then
\begin{equation}\label{deltas}
\E\left[\frac{J(t', X_{t'})-J(t,X_t)}{t'-t}+\frac{1}{t'-t}\int_t^{t'}r(s,X_s)\dd s\right]=0.
\end{equation}

We may then be tempted to define a stochastic version of the TD error rate as in (\ref{eq:td error}). Unfortunately, the
path-wise total derivative $\dot{J}_t=\frac{d}{dt}J(t,X_t)$ no longer exists in the current diffusion case; hence, the TD error rate $\delta_t$ is not well defined now. This issue stems from the intrinsic non-differentiability of (non-degenerate) diffusion processes. For instance, it is well-known that with probability one, the sample trajectory of a Brownian motion is nowhere differentiable.

To facilitate our analysis without getting overly technical, we make the following regularity assumption on the value function approximators $J^{\theta}$ we use in this paper:
\begin{assumption}
\label{ass:regularity}
$J^{\theta}(t,x)$ is a sufficiently smooth function of $(t,x,\theta)$ so that all the derivatives required exist in the classical sense. Moreover, for all $\theta \in \Theta$, $J^{\theta}(\cdot,X_{\cdot}),\  \mathcal{L} J^{\theta}(\cdot,X_{\cdot}) + r_{\cdot},\  \big| \frac{\partial J^{\theta}}{\partial x}(\cdot,X_{\cdot})^\top \sigma(\cdot,X_{\cdot}) \big| \in L^2_{\f}([0,T])$, and their $L^2$-norms are continuous functions of $\theta$.
\end{assumption}

Given an approximator $J^{\theta}$, a theoretically well-defined error based on (\ref{learning_value_st}) in continuous time is the so-called \textit{Bellman's error rate}:
\begin{equation}
\label{eq:bellman error pdf}
\lim_{t' \to t+} \frac{1}{t'-t} \E\left[\int_{t}^{t' }r(s,X_s)\dd s+J^{\theta}(t', X_{t'}) - J^{\theta}(t,X_t)\Big|\mathcal{F}_t\right] = \mathcal{L} J^{\theta}(t,X_t) + r\big(t,X_t\big).
\end{equation}
This can be derived by applying It\^o's formula to $J^{\theta}(t,X_t)$.

If there is no randomness in the environment, the conditional expectation in \eqref{eq:bellman error pdf} vanishes and hence Bellman's error coincides with TD error \eqref{eq:td error} in the deterministic case. In a non-degenerate stochastic environment, however, only
Bellman's error $\{  \mathcal{L} J^{\theta}(t,X_t) + r\big(t,X_t\big),0\leq t\leq T\}$ is  well defined on  sample trajectories. 
Note that this error is zero everywhere for the true value function, according to (\ref{eq:pde characterization}). So it seems natural to set a PE objective to minimize  Bellman's error.
Unfortunately, this error accounts for the conditional expectation and thus
represents the {\it average} of temporal differences over infinitely many sample trajectories that are distributed according to the SDE \eqref{eq:model classical}. Therefore, the knowledge about the state dynamics is {\it required} in computing the
conditional expectation or, equivalently, in applying the operator $\mathcal{L}$.
This knowledge is nevertheless unknown to the agent in our RL setting. In other words,
in sharp contrast to the TD error, Bellman's error and its discretization version cannot be computed with only samples in a black-box environment.

On the other hand, even though MSTDE does not exist theoretically in the continuous-time stochastic setting, we can still define and compute its {\it discretization} version, in a way analogous to (\ref{eq:MSTDE}):

\begin{equation}\label{TD-error}
\begin{aligned}
\operatorname{MSTDE}_{\Delta t}(\theta):=	& \frac{1}{2}\E\left[\sum_{i=0}^{K-1} \bigg( \frac{J^{\theta}(t_{i+1}, X_{t_{i+1}}) - J^{\theta}(t_{i}, X_{t_{i}})}{t_{i+1} - t_{i}} + r_{t_{i}} \bigg)^2\Delta t\right].
\end{aligned}
\end{equation}
Indeed, \citet{wang2020continuous} use this version to develop a PE algorithm for the mean--variance problem.
A natural question is whether minimizing  $\operatorname{MSTDE}_{\Delta t}(\theta)$ (or equivalently applying the stochastic version of the residual gradient algorithm \eqref{eq:gradient squared td error}) will lead to the correct solution in the stochastic environment. The answer is unfortunately negative, as illustrated by the following example.
\begin{eg}
\label{eg:toy 1}
{\rm
Let us find a function that represents the conditional expectation $J(t,x) = \E[X_1|X_t = x]$, where $X_t = W_t$ is a Brownian motion. This is probably the simplest example possible. Because the Brownian motion is a martingale, we know the {\it ground truth} solution $J(t,x) = x$. Pretending we do not know this solution, we proceed to minimize $\operatorname{MSTDE}_{\Delta t}(\theta)$ to learn $J$ based on the simulated sample paths of
the state process $X\equiv W=\{W_t,\;0\leq t\leq 1\}$, which is a standard Brownian motion starting from $W_0=0$.

We first use a parameterized family $J^{\theta}(t,x) = [\theta(1-t) + 1] x$ to approximate $J$. This family contains the true function when $\theta=0$. The discretized MSTDE is
\[\operatorname{MSTDE}_{\Delta t}(\theta) = \frac{1}{2}
\E\left[ \sum_{i=0}^{K-1} \left( \frac{\big(\theta (1-t_{i+1}) + 1\big) X_{t_{i+1}} - \big(\theta (1-t_{i}) + 1\big) X_{t_{i}}}{t_{i+1} - t_{i}} \right)^2\Delta t \right].\]
We apply the stochastic gradient decent (SGD) with the updating rule
\[ \theta \leftarrow \theta - \alpha \sum_{i=0}^{K-1} \bigg( \frac{\big(\theta (1-t_{i+1}) + 1\big) X_{t_{i+1}} - \big(\theta (1-t_{i}) + 1\big) X_{t_{i}}}{t_{i+1} - t_{i}} \bigg) \left[(1-t_{i+1})X_{t_{i+1}} - (1 - t_i)X_{t_i}\right] . \]

In our simulation  we use multiple independent episodes for training. We take the time grid size as $\Delta t = 0.01$, 
initialize
the parameter to be $\theta^{(0)} = -1$, and apply the above updating rule with the learning rate $\alpha=0.01$.

Figure \ref{fig:td error example 1} illustrates the convergence of $\theta$ to $\theta^*_{\operatorname{MSTDE}} = -\frac{3}{2}$ which is {\it not} the true solution $\theta_{\text{true}} = 0$. In other words, the value function is not correctly learned by MSTDE. Equivalently, it does not solve the PDE \eqref{eq:pde characterization} or the FBSDE \eqref{eq:fbsde} correctly.}
\end{eg}

\begin{figure}[h]
\centering
\includegraphics[width = 0.5\textwidth]{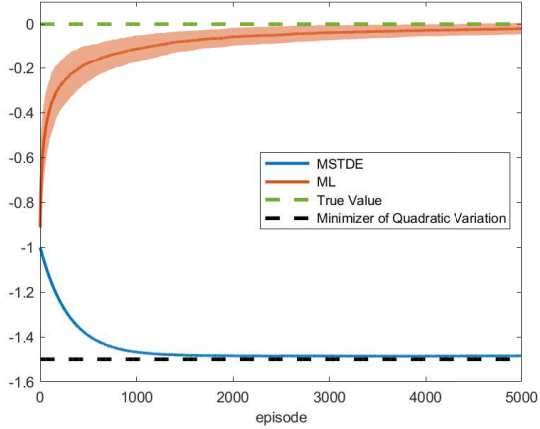}
\caption{\textbf{The paths of parameters over episodes with different objectives for Example \ref{eg:toy 1}.} The true solution is $\theta_{\text{true}} = 0$. Applying SGD to minimize $\operatorname{MSTDE}_{\Delta t}$ leads to $\theta^*_{\operatorname{MSTDE}} = -\frac{3}{2}$. Applying SGD to minimize the martingale loss function generates  the correct solution. We repeat the experiment for 100 times to calculate the standard deviations, which are represented as the shaded areas. The width of each shaded area is twice the corresponding standard deviation.}
\label{fig:td error example 1}
\end{figure}

\subsection{Theoretical characterization of mean-square TD error}
To understand {\it theoretically} why taking the objective of minimizing MSTDE does not work for stochastic problems, recall the processes $(X,Y,Z)$ and the martingale $M$ defined
through (\ref{eq:fbsde})--(\ref{eq:m}) in which we take $t=0$. Then
\[
\begin{aligned}
& \sum_{i=0}^{K-1} \bigg( \frac{J(t_{t+1}, X_{t_{i+1}}) - J(t_{i}, X_{t_{i}})}{t_{i+1} - t_{i}} + r_{t_{i}} \bigg)^2 \Delta t\\
= & \frac{1}{ \Delta t} \sum_{i=0}^{K-1} \bigg( J(t_{i+1}, X_{t_{i+1}}) - J(t_{i}, X_{t_{i}}) + \int_{t_{i}}^{t_{i+1}} r_{t_s} \dd s + O\big( (\Delta t)^2\big) \bigg)^2 \\
\approx & \frac{1}{ \Delta t} \langle M \rangle_T =\frac{1}{( \Delta t)^2} \langle Y \rangle_T
= \frac{1}{ \Delta t} \int_0^T |Z_t|^2 \dd t,
\end{aligned}
\]
which is {\it not} zero, unlike in the deterministic setting. Hence, minimizing the MSTDE is wrong, because it is equivalent to minimizing the expected {\it quadratic variation} of the martingale $M$, which should {\it not} be minimized as the objective for estimating the value function.\footnote{A related notion in financial econometrics is the {\it realized variance} of a time series, which is proved to be an unbiased estimate of the integrated variance or the quadratic variation, see, e.g., \citet{barndorff2002estimating}.}

Take Example \ref{eg:toy 1} again:
\setcounter{eg}{0}
\begin{eg}[Continued]
{\rm
Let us use the previously taken parameterized family $Y^{\theta}_t = J^{\theta}(t,X_t) = [\theta(1-t) + 1] X_t$ to approximate $J$. Then
\[ \dd M^{\theta}_t\equiv \dd Y^{\theta}_t =  [\theta(1-t) + 1] \dd W_t, \]
leading to   $$\langle M^{\theta} \rangle_1 = \int_0^1 \big[1 + \theta(1-t)\big]^2\dd t = \frac{1}{3}\theta^2 + \theta+1,$$
which is minimized at $\theta^* = -\frac{3}{2}$, instead of the desired value $\theta_{\text{true}} = 0$. This theoretical value matches the simulation result reported in Figure \ref{fig:td error example 1}.}
\end{eg}

We now present a slightly more involved example, one that includes a running reward term.
\begin{eg}
\label{eg:toy running reward}
{\rm 	We seek a function representing the conditional expectation $J(t,x) = \E[X_1^2 - \int_t^1 \dd s|X_t = x]$ where $X_t=W_t$ is a Brownian motion. Theoretically, the problem is equivalent to solving the following BSDE:
\[ \dd Y_t = \dd t+  Z_t\dd W_t,\ Y_1 = X_1^2 .\]
The true solution is $Y_t = X_t^2$, $Z_t = 2X_t$, namely, $J(t,x) = x^2$. If we use a parameterized family $Y^{\theta}_t = J^{\theta}(t,X_t) = [\theta_0(1-t) + 1] X_t^2 + \theta_1(1-t) X_t + \theta_2(1-t)$ to approximate $J$, then the desired parameter values are $\theta_{\text{true}} = (0,0,0)$.

%
Let us compute the quadratic variation of $M_t^\theta:=Y_t^\theta-t$. By It\^o's lemma and replacing $X_t$ by $W_t$, we obtain
\[ \dd M^\theta_t = (-\theta_2 - \theta_1 W_t - \theta_0 W_t^2 -1)\dd t + \big[2 \big(\theta_0(1-t) + 1\big)W_t + \theta_1(1-t) \big]\dd W_t. \]
Then its expected quadratic variation is
\[
\begin{aligned}
	\E[\langle M^{\theta} \rangle_1] = & \E\int_0^1 \big[2 \big(\theta_0(1-t) + 1\big)W_t + \theta_1(1-t) \big]^2 \dd t  \\
	= & \int_0^1 \left[ 4 \big(\theta_0(1-t) + 1\big)^2 t + \theta_1^2(1-t)^2 \right] \dd t \\
	= & 4\left(\frac{1}{12}\theta_0^2 + \frac{1}{3}\theta_0 + \frac{1}{2}\right) + \frac{1}{3}\theta_1^2,
\end{aligned} .\]
which attains minimum at $\theta^*_0 = -2, \theta_1^* = 0$.

Here, the parameter $\theta_2$ is not present in the expected quadratic variation,
and hence remains undetermined. However, due to numerical errors in computing the TD error, we can determine $\theta_2$ by minimizing the high-order small term in the quadratic variation, given the minimizer, $(\theta_0^*,\theta_1^*)$,  of the leading term. To do this, recall we have the following expansion:
\[(\dd M_t^{\theta})^2 = \underbrace{\cdots}_{\text{leading-order term}}(\dd W_t)^2 + \underbrace{\cdots}_{\text{high-order small term}}(\dd t)^2 + \underbrace{\cdots}_{\text{mean-zero term}}\dd W_t\dd t.  \]
So, parameters will be determined first through the leading term in the quadratic variation. Parameters that do not show up in the leading term have much smaller but non-negligible  impact on the TD error, which can be determined through the second term in the above. Finally, the mean-zero term can be ignored because it will be averaged out.

Therefore, in the current example, $\theta_2$ will be determined through minimizing the following:
\[
\E \int_0^1 (-\theta_2 - \theta_1^* W_t - \theta_0^* W_t^2 - 1)^2 \dd t   =\int_0^1\left[ (\theta_2 + 1)^2 + 2(\theta_2+1)\theta_0^* t + 3{\theta_0^*}^2 t^2\right] \dd t.
\]
The minimizer is $\theta_2^* = 0$. So, optimal parameters to minimize the MSTDE are $\theta^*_{\operatorname{MSTDE}} =(-2,0,0)$ and hence the resulting learned function  is $J(t,x)=(2t-1)x^2$. However, the true function is $J(t,x)=x^2$.\footnote{Even though the two parameters $\theta_1^*,\theta_2^*$ agree with the correct ones, this seems to be a coincidence and the final learned  value function still deviates from the true one.}
}
\end{eg}

We now verify this analysis by simulation.
The discretized mean-square TD error is
\[
\operatorname{MSTDE}_{\Delta t}(\theta) = \frac{1}{2} \E\left[ \sum_{i=0}^{K-1} \left( \frac{J^{\theta}(t_{i+1}, X_{t_{i+1}}) - J^{\theta}(t_{i}, X_{t_{i}})}{t_{i+1}- t_{i}} - 1 \right)^2 \Delta t \right]  .\]
We initialize the parameter to be $\theta^{(0)} = (-1,-1,-1)$, and use the SGD algorithm. The learning rate is taken as $0.01$. The result, shown in Figure \ref{fig:td error example 2}, is consistent with the above theoretical analysis, which incidentally justifies our scheme of
determining some of the parameters through the high-order term.


\begin{figure}[!htbp]
\centering
\begin{subfigure}{0.32\textwidth}
\centering
\includegraphics[width = 1\textwidth]{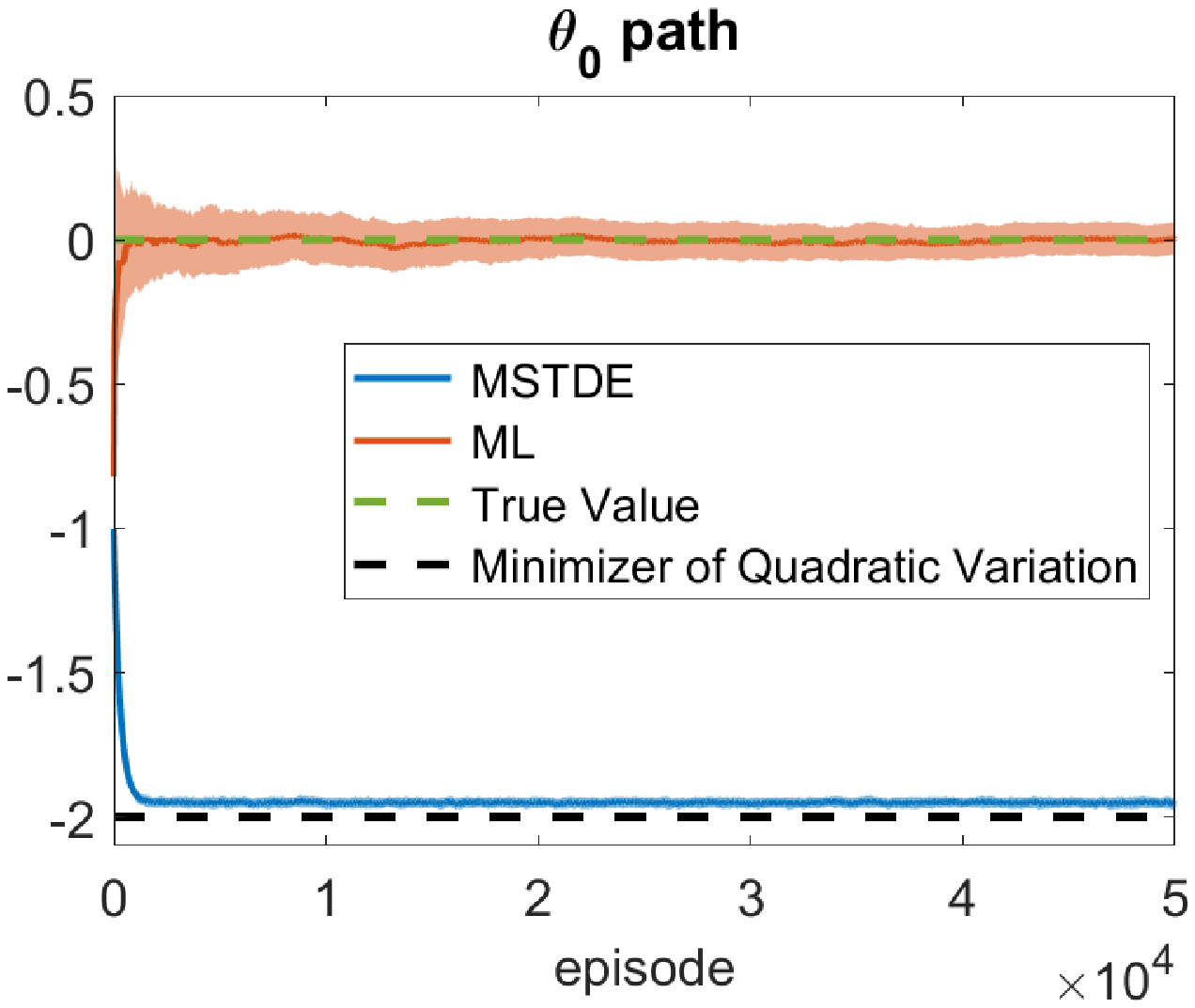}
\end{subfigure}
\begin{subfigure}{0.32\textwidth}
\centering
\includegraphics[width = 1\textwidth]{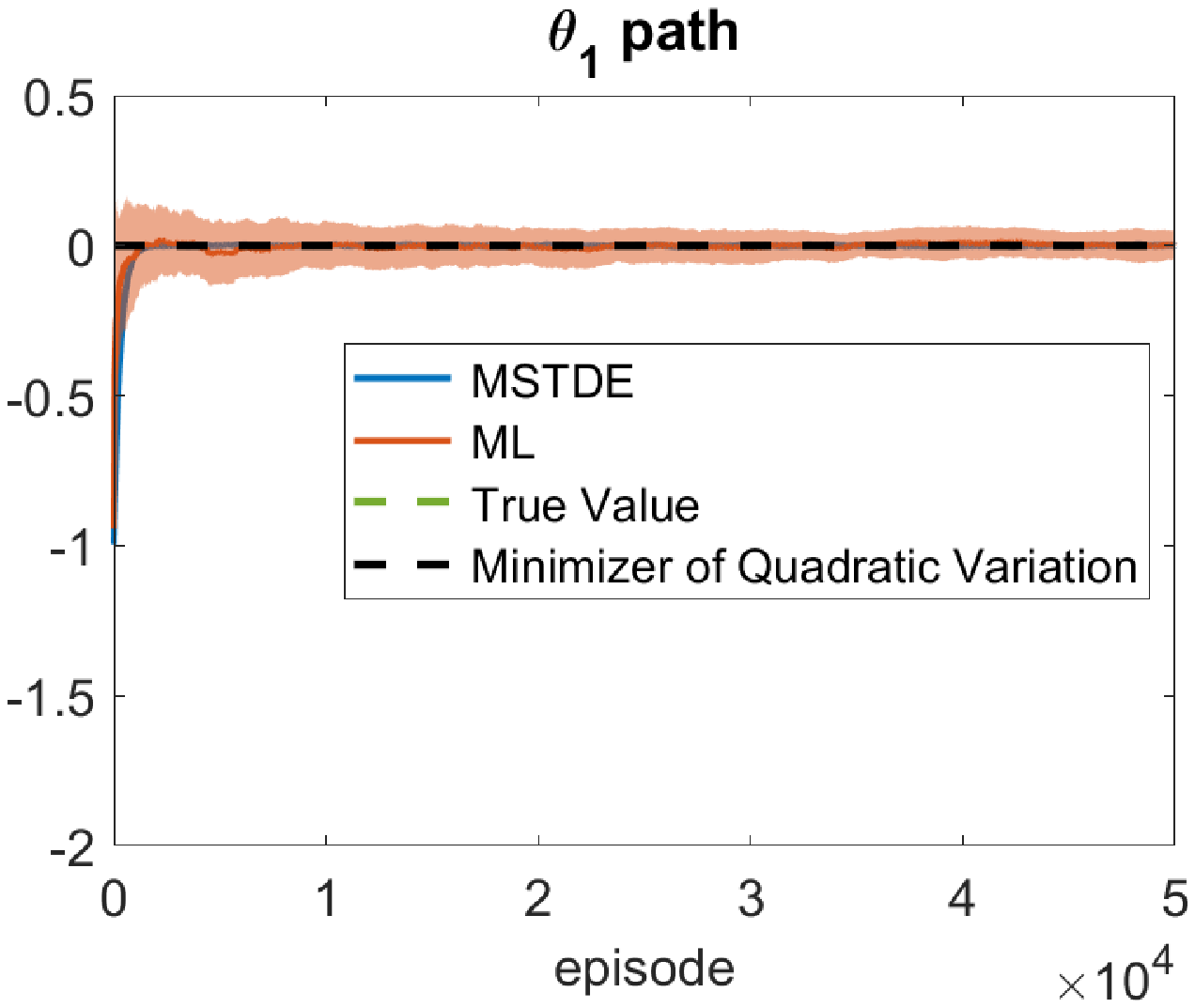}
\end{subfigure}
\begin{subfigure}{0.32\textwidth}
\centering
\includegraphics[width = 1\textwidth]{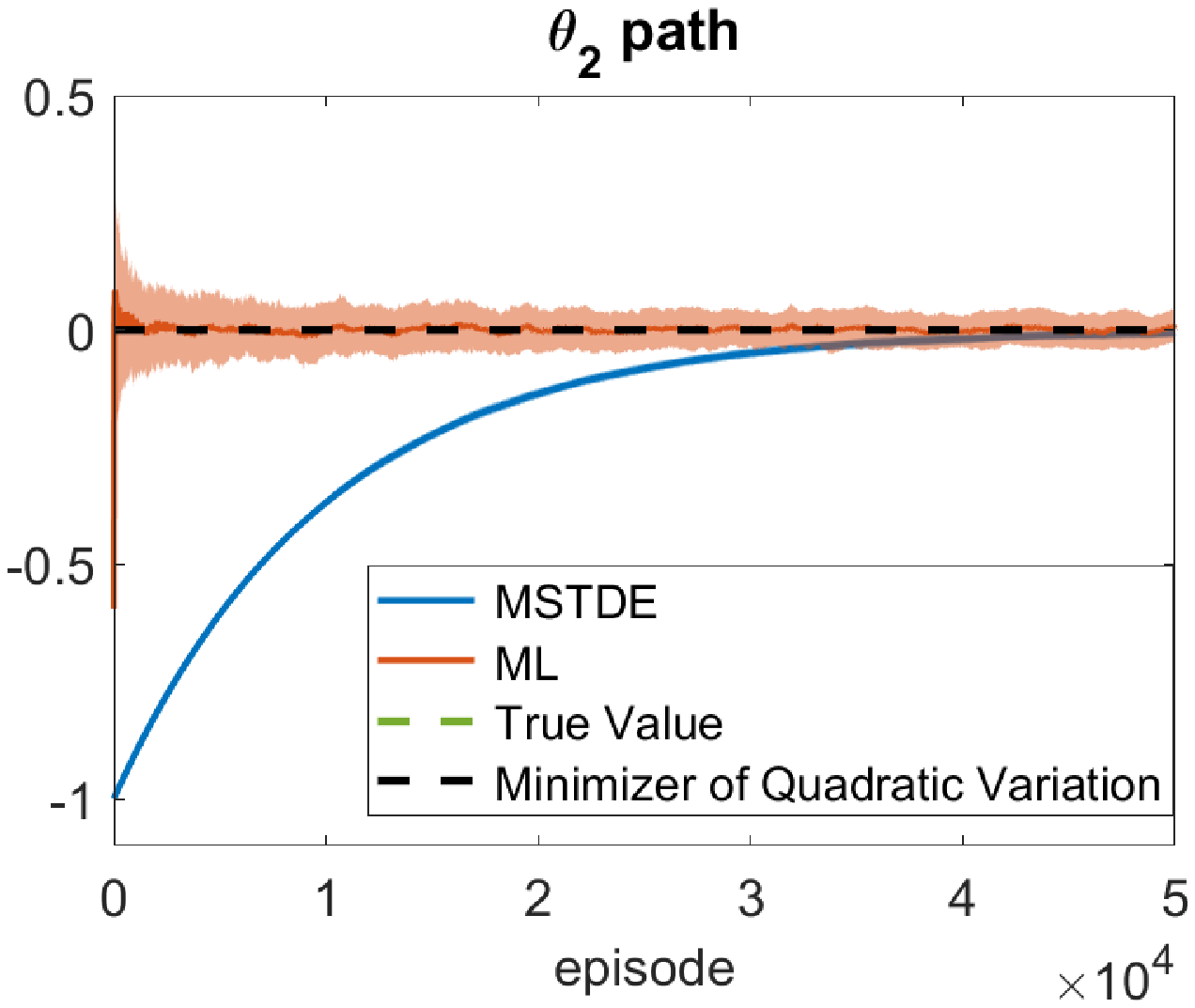}
\end{subfigure}
\caption{\textbf{The paths of parameters over episodes with different objectives for Example \ref{eg:toy running reward}.} The true solution is $\theta_{\text{true}} = (0,0,0)$. Applying SGD to minimize mean-square TD error leads to $\theta^*_{\operatorname{MSTDE}} = (-2,0,0)$. Applying SGD to minimize the martingale loss function leads to the desired solution. We repeat the experiment for 100 times to calculate the standard deviations of the predicted parameters, which are represented as the shaded areas. The width of each shaded area is twice the corresponding standard deviation.}
\label{fig:td error example 2}
\end{figure}

\medskip

Next, we present a general result stipulating that any algorithm minimizing $\operatorname{MSTDE}_{\Delta t}$ indeed converges to the minimizer of the quadratic variation of $M^\theta$.


First, it follows from It\^o's lemma that
\[ dM_t^\theta = \left[  \mathcal{L} J^{\theta}(t, X_t) + r_t\right]\dd t
+\left(\frac{\partial J^{\theta}(t, X_t)}{\partial x}\right)^\top \sigma (t,X_t) \dd W_t.
\]
Hence
\begin{equation}
\label{eq:de}  d\langle M^{\theta} \rangle_t\equiv \big( \dd M^{\theta}_t \big)^2 = \underbrace{\left[  \left(\frac{\partial J^{\theta}(t, X_t)}{\partial x}\right)^\top \sigma (t,X_t)\right]^2 (\dd W_t )^2}_{\text{leading-order term: quadratic variation}} + \underbrace{\left[  \mathcal{L} J^{\theta}(t, X_t) + r_t\right]^2 (\dd t)^2 + (\cdots) \dd W_t\dd t}_{\text{high-order small term}}.
\end{equation}
The first term on the right hand side is the leading order term since $(\dd W_t)^2 = \dd t$. Therefore, minimizing the left hand side is essentially to minimize this
leading term.

Before presenting the theorem that formalizes this result, let us note that in the time-discretization throughout this paper, $X_{t_i}$, $i = 0,1,\cdots,K$, are discrete {\it observations} of the continuous-time process $X$ which is the {\it exact} solution to \eqref{eq:model classical}, instead of  its approximation resulting from any numerical approximation scheme (such as the ones in \citealp{kloeden1992stochastic}). So, in our paper the only approximation happens in evaluating the cumulative reward between two consecutive observations -- we use the instantaneous reward $r_{t_i}$ observed at time $t_i$ multiplied by the length of the time window to approximate the total reward: $r_{t_i}\Delta t \approx \int_{t_{i}}^{t_{i+1}} r_{t_s} \dd s$ -- and in calculating the integral on $[0,T]$ by a discrete sum with the forward Euler scheme.

Clearly, the error of approximating cumulative reward is 0 if the running reward is a constant.
When it is not a constant, the convergence rate of the approximation requires some conditions, which we put forward as an assumption.

\begin{assumption}
\label{ass:growth in reward}
There exist constants $C > 0$ and $\mu_1,\mu_2,\mu_3\geq 0$, such that
\[ |r(t',x') - r(t,x)| \leq C|t'-t|^{\mu_1}|x'-x|^{\mu_2}(|x'|^{\mu_3}+|x|^{\mu_3}),\forall t',t\in [0,T], \ x',x\in \mathbb{R}^d .\]
\end{assumption}

\begin{theorem}
\label{thm:squared td minimizer}
Suppose Assumptions \ref{ass:sde regularity}, \ref{ass:fk pde}, and \ref{ass:regularity} hold. Let
\[\theta^*_{\operatorname{MSTDE}}(\Delta t)\in \arg\min_{\theta \in \Theta} \operatorname{MSTDE}_{\Delta t}(\theta)\]
and assume that $\theta^*_{\operatorname{MSTDE}}:=\lim_{\Delta t\to 0}\theta^*_{\operatorname{MSTDE}}(\Delta t)$ exists. Then
\[ \theta^*_{\operatorname{MSTDE}} \in \arg\min_{\theta \in \Theta}\E\int_{0}^T \left|  \left(\frac{\partial J^{\theta}(t, X_t)}{\partial x}\right)^\top \sigma (t,X_t)  \right|^2 \dd t.\]
Moreover, if Assumption \ref{ass:growth in reward} also holds true, then
\[ \E\int_{0}^T \left|  \left(\frac{\partial J^{\theta^*_{\operatorname{MSTDE}}(\Delta t)}(t, X_t)}{\partial x}\right)^\top \sigma (t,X_t)  \right|^2 \dd t - \min_{\theta \in \Theta}\E\int_{0}^T \left|  \left(\frac{\partial J^{\theta}(t, X_t)}{\partial x}\right)^\top \sigma (t,X_t)  \right|^2 \dd t \leq C \Delta t, \]
for some constant $C$.
\end{theorem}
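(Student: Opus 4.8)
The plan is to show that, after the natural rescaling, $\operatorname{MSTDE}_{\Delta t}$ equals the expected quadratic-variation functional
\[
G(\theta):=\E\int_0^T\Big|\big(\tfrac{\partial J^\theta}{\partial x}(t,X_t)\big)^\top\sigma(t,X_t)\Big|^2\dd t
\]
plus a remainder that vanishes as $\Delta t\to 0$ (at rate $\Delta t$ once Assumption \ref{ass:growth in reward} is available), locally uniformly in $\theta$. Using the It\^o decomposition $M^\theta=M_0^\theta+A^\theta+N^\theta$ read off from \eqref{eq:de}, with $\dd A_t^\theta=f_t\,\dd t$, $f_t:=\mathcal{L}J^\theta(t,X_t)+r_t$, and $\dd N_t^\theta=g_t\,\dd W_t$, $g_t:=(\tfrac{\partial J^\theta}{\partial x}(t,X_t))^\top\sigma(t,X_t)$, I write $\Delta A_i^\theta,\Delta N_i^\theta,\Delta M_i^\theta$ for the increments over $[t_i,t_{i+1}]$ and set the reward-discretization error $\epsilon_i:=\int_{t_i}^{t_{i+1}}(r_s-r_{t_i})\dd s$. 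Since the $i$-th summand of $\operatorname{MSTDE}_{\Delta t}$ is $\tfrac{1}{\Delta t}(\Delta M_i^\theta-\epsilon_i)^2$, I first record the exact identity $2\,\Delta t\,\operatorname{MSTDE}_{\Delta t}(\theta)=\E\sum_{i}(\Delta A_i^\theta+\Delta N_i^\theta-\epsilon_i)^2$. Expanding the square and applying the It\^o isometry $\E(\Delta N_i^\theta)^2=\E\int_{t_i}^{t_{i+1}}|g_s|^2\dd s$ gives $\E\sum_i(\Delta N_i^\theta)^2=G(\theta)$ \emph{exactly}, so that
\[
2\,\Delta t\,\operatorname{MSTDE}_{\Delta t}(\theta)=G(\theta)+R_{\Delta t}(\theta),
\]
where $R_{\Delta t}$ collects the drift-squared term $\E\sum_i(\Delta A_i^\theta)^2$, the cross terms $2\E\sum_i\Delta A_i^\theta\Delta N_i^\theta$ and $-2\E\sum_i\Delta M_i^\theta\epsilon_i$, and $\E\sum_i\epsilon_i^2$.

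The heart of the matter is the estimate $\sup_\theta|R_{\Delta t}(\theta)|\le C\Delta t$, with the supremum over any bounded subset of $\Theta$ and $C$ controlled through the $\theta$-continuous $L^2$-norms of Assumption \ref{ass:regularity}. The drift-squared term is immediate: Cauchy--Schwarz gives $\E\sum_i(\Delta A_i^\theta)^2\le\Delta t\,\E\int_0^T f_t^2\,\dd t=O(\Delta t)$, finite by Assumption \ref{ass:regularity}. The reward terms are where Assumption \ref{ass:growth in reward} enters: combining its H\"older bound on $r$ with the moment estimate $\E|X_s-X_{t_i}|^{2\mu_2}=O((s-t_i)^{\mu_2})$ (from Assumption \ref{ass:sde regularity} and the uniformly bounded moments of $X$) yields $\E(r_s-r_{t_i})^2=O((s-t_i)^{2\mu_1+\mu_2})$, whence $\E\epsilon_i^2=O((\Delta t)^{2+2\mu_1+\mu_2})$ and $\E\sum_i\epsilon_i^2=O((\Delta t)^{1+2\mu_1+\mu_2})=O(\Delta t)$.

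The delicate point, which I expect to be the main obstacle, is the drift--martingale cross term $\E\sum_i\Delta A_i^\theta\Delta N_i^\theta$: a blunt Cauchy--Schwarz on the sum only gives $O(\sqrt{\Delta t})$, too weak for the claimed rate. The remedy is to exploit the conditional-mean-zero structure of the It\^o increment. Writing $\E[\Delta A_i^\theta\Delta N_i^\theta]=\int_{t_i}^{t_{i+1}}\E\big[f_s\int_{t_i}^{t_{i+1}}g_u\,\dd W_u\big]\dd s$ by Fubini, the martingale property collapses the inner integral to $\int_{t_i}^{s}$; subtracting the $\mathcal F_{t_i}$-measurable value $f_{t_i}$, which integrates to zero against the increment, leaves $\E\big[(f_s-f_{t_i})\int_{t_i}^{s}g_u\,\dd W_u\big]$, and Cauchy--Schwarz with the $L^2$-modulus of continuity $\E(f_s-f_{t_i})^2=O(s-t_i)$ upgrades the naive $O(\sqrt{s-t_i})$ to $O(s-t_i)$. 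Integrating over $[t_i,t_{i+1}]$ and summing over the $K\sim T/\Delta t$ subintervals then produces $O(\Delta t)$ instead of $O(\sqrt{\Delta t})$. The required $L^2$-H\"older-$\tfrac12$ continuity of $f$ comes from the smoothness of $\mathcal{L}J^\theta$ (Assumption \ref{ass:regularity}) with standard SDE moment bounds for its $X$-argument, plus Assumption \ref{ass:growth in reward} for the $r$-part; the same device, applied to $\E\sum_i\Delta N_i^\theta\epsilon_i$, disposes of the reward cross term.

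With $\sup_\theta|R_{\Delta t}(\theta)|\to0$ in hand, both conclusions follow by soft arguments. Since $G$ is the square of a norm declared continuous in $\theta$ by Assumption \ref{ass:regularity}, it is continuous; passing to the limit along $\Delta t\to0$ in the minimizing inequality $G(\theta^*_{\operatorname{MSTDE}}(\Delta t))+R_{\Delta t}(\theta^*_{\operatorname{MSTDE}}(\Delta t))\le G(\theta)+R_{\Delta t}(\theta)$ (valid since $\theta^*_{\operatorname{MSTDE}}(\Delta t)$ minimizes $\operatorname{MSTDE}_{\Delta t}$) gives $G(\theta^*_{\operatorname{MSTDE}})\le G(\theta)$ for every $\theta$, i.e. $\theta^*_{\operatorname{MSTDE}}\in\arg\min_\theta G(\theta)$; the eventual boundedness of $\theta^*_{\operatorname{MSTDE}}(\Delta t)$ makes the local uniformity of $R_{\Delta t}$ sufficient. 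For the rate, write $F_{\Delta t}:=2\Delta t\,\operatorname{MSTDE}_{\Delta t}=G+R_{\Delta t}$, fix $\bar\theta\in\arg\min_\theta G(\theta)$, and use $F_{\Delta t}(\theta^*_{\operatorname{MSTDE}}(\Delta t))\le F_{\Delta t}(\bar\theta)$ to obtain
\[
G(\theta^*_{\operatorname{MSTDE}}(\Delta t))-G(\bar\theta)\le R_{\Delta t}(\bar\theta)-R_{\Delta t}(\theta^*_{\operatorname{MSTDE}}(\Delta t))\le 2\sup_\theta|R_{\Delta t}(\theta)|\le C\Delta t,
\]
which is exactly the asserted bound.
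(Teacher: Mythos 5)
Your proposal is correct in its overall architecture and that architecture coincides with the paper's: both expand the squared one-step increment via It\^o, use It\^o's isometry to identify the martingale part of the sum \emph{exactly} with $\operatorname{QV}(\theta)=\E\int_0^T|(\partial_x J^\theta)^\top\sigma|^2\dd t$, collect everything else into a remainder $R_{\Delta t}(\theta)$ that is controlled uniformly on compact subsets of $\Theta$, and then conclude with the same stability-of-minimizers argument (the paper packages this as Lemma \ref{lemma:minimizer limit}(a), which is exactly your ``soft argument'' at the end, including the $2\sup_\theta|R_{\Delta t}|$ bound for the rate).

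Where you genuinely diverge is the drift--martingale cross term, and this is worth dwelling on. The paper bounds $\E\big[\big(\int_{t_i}^{t_{i+1}}f_s\dd s\big)\big(\int_{t_i}^{t_{i+1}}g_s\dd W_s\big)\big]$ by plain Cauchy--Schwarz; as printed it invokes $\E\big(\int_{t_i}^{t_{i+1}}f_s\dd s\big)^2\le(\Delta t)^2\,\E\int_{t_i}^{t_{i+1}}f_s^2\dd s$, whereas Cauchy--Schwarz only gives the factor $\Delta t$, not $(\Delta t)^2$. With the correct constant, the paper's own chain of inequalities yields a cross term of order $\sqrt{\Delta t}$, not $\Delta t$ --- precisely the obstruction you identify as ``a blunt Cauchy--Schwarz on the sum only gives $O(\sqrt{\Delta t})$.'' Your conditioning device (collapse $\E[f_s\int_{t_i}^{t_{i+1}}g_u\dd W_u]$ to $\E[(f_s-f_{t_i})\int_{t_i}^{s}g_u\dd W_u]$ and pay with an $L^2$-modulus of continuity of $f$) is a genuinely different and sharper argument that does deliver the $O(\Delta t)$ rate claimed in the theorem; for the first, qualitative conclusion both routes suffice, since $O(\sqrt{\Delta t})\to0$ is all that is needed there. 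Two caveats on your version: (i) the $L^2$-H\"older-$\frac12$ continuity of $s\mapsto\mathcal{L}J^\theta(s,X_s)$ is not literally contained in Assumption \ref{ass:regularity} (which only asserts membership in $L^2_{\f}([0,T])$ with $\theta$-continuous norms), so you are implicitly strengthening the hypotheses --- defensibly, under the ``sufficiently smooth'' clause, but you should say so; and (ii) for the $r$-part of $f$ and for the reward cross term $\E\sum_i\Delta N_i^\theta\epsilon_i$, Assumption \ref{ass:growth in reward} only gives $\E(r_s-r_{t_i})^2=O((s-t_i)^{2\mu_1+\mu_2})$, so those pieces contribute $O((\Delta t)^{1/2+\mu_1+\mu_2/2})$, which is $O(\Delta t)$ only when $\mu_1+\mu_2/2\ge\frac12$; in general the honest exponent is $\min\{1,\tfrac12+\mu_1+\tfrac{\mu_2}{2}\}$. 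Neither your write-up nor the paper tracks this last point, but your treatment of the main cross term is the more careful of the two.
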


In contrast, the true value function $J$ solves the PDE \eqref{eq:pde characterization} which  corresponds to the coefficient of the $(\dd t)^2$ term in (\ref{eq:de}). So once again the parameters should minimize the mean-square
Bellman's error (which as discussed earlier depends on the model parameters and hence any algorithm trying to accomplish it is not implementable).
This shows a fundamental discrepancy between the objective of  MSTDE and that of PE in the stochastic diffusion environment.

The undesirability of the na\"ive residual  gradient or MSTDE has actually been noticed in the discrete-time MDP literature. For example, \citet{sutton2009fast} point out the similar problem of MSTDE and present a simple counterexample in an adsorbing three-state Markov chain. \citet[p.272]{sutton2011reinforcement}
comment that ``by penalizing all TD errors it (MSTDE) achieves something more like temporal smoothing than accurate prediction,'' although the authors stop short of explaining
{\it why} it achieves temporal smoothing.
Our theory confirms this intuition by a rigorous analysis showing that, in the diffusion setting, the MSTDE minimizer is primarily determined through minimizing  quadratic variation. As quadratic variation measures the smoothness of a diffusion process, the value function process $\{ J^{\theta}(t, X_t),0\leq t\leq T\}$ has the smoothest trajectory at $\theta=\theta^*_{\operatorname{MSTDE}}$.  

\subsection{Online mean-square TD error algorithms}
So far our discussions have been focused on the offline setting. However, TD is often used for online learning. The question now is whether an online algorithm may correct the errors arising from MSTDE.

Let us take the one-step online method for illustration. Precisely, suppose the time discretization is $0=t_0 < t_1<\cdots < t_K = T$. At each time $t_{i+1}$, $i = 0,\cdots, K-1$,
this method updates $\theta$ by the following SGD algorithm:
\[\theta \leftarrow  \theta - \alpha \bigg( \frac{J^{\theta}(t_{i+1}, X_{t_{i+1}}) - J^{\theta}(t_{i}, X_{t_{i}})}{t_{i+1} - t_{i}} + r_{t_{i}} \bigg) \bigg( \frac{\partial J^{\theta}}{\partial \theta}(t_{i+1}, X_{t_{i+1}})  - \frac{\partial J^{\theta}}{\partial \theta}(t_{i}, X_{t_{i}})  \bigg), \]
and then loop over all episodes. 

Since multiple episodes are used, this procedure, by and large, can be viewed as drawing samples in the time direction uniformly (since the learning rate is constant, one does not differentiate among different times). Therefore,
such a one-step updating rule is equivalent to solving
$$\begin{aligned}
&\min_{\theta} \E_{t \sim \mathcal{U}(0,T)}\left[\bigg( \frac{J^{\theta}({t+\Delta t}, X_{{t+\Delta t}}) - J^{\theta}(t, X_t)}{\Delta t} + r_{t} \bigg)^2\right]\\ \approx &\min_{\theta}\frac{1}{\Delta t}\int_0^T \E[(\dd M_t^{\theta})^2] \approx \min_{\theta} \E[\int_0^T \dd \langle M^{\theta} \rangle_t ] = \min_{\theta} \E[ \langle M^{\theta} \rangle_T ],
\end{aligned}$$
where $\mathcal{U}(0,T)$ is the uniform distribution on $[0,T]$.
This is the same optimization problem as the offline learning when we use the whole trajectory; hence, theoretically, it will lead to the same  undesired solution that is determined by Theorem \ref{thm:squared td minimizer}.

%

We revisit Examples \ref{eg:toy 1} and \ref{eg:toy running reward} and implement  the above online algorithm to minimize the mean-square TD error. Figures \ref{fig:td error example 1 online} and \ref{fig:td error example 2 online} show the results of the learned parameters respectively. As predicted by our analysis, they again converge to the same wrong solutions that are determined by minimizing quadratic variation.
\begin{figure}[!htbp]
\centering
\includegraphics[width = 0.5\textwidth]{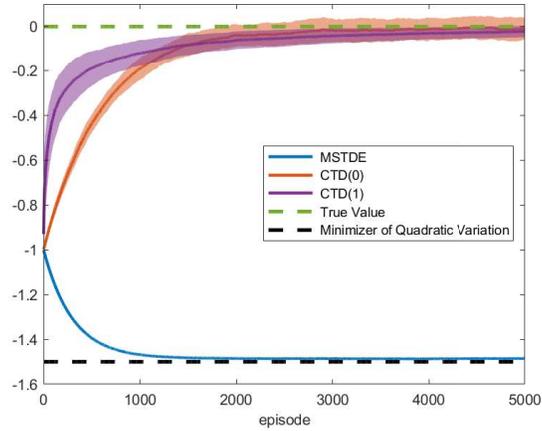}
\caption{\textbf{The paths of parameters over episodes with different objectives under the online setting for Example \ref{eg:toy 1}.} The true solution is $\theta_{\text{true}} = 0$. Applying SGD to minimize one-step $\operatorname{MSTDE}_{\Delta t}$ online leads to  $\theta^*_{\operatorname{MSTDE}} = -\frac{3}{2}$. CTD(0) and CTD(1) lead to the desired solution. We repeat the experiment for 100 times to calculate the standard deviations, which are represented as the shaded areas. The width of each shaded area is twice the corresponding standard deviation.}
\label{fig:td error example 1 online}
\end{figure}

\begin{figure}[!htbp]
\centering
\begin{subfigure}{0.32\textwidth}
\centering
\includegraphics[width = 1\textwidth]{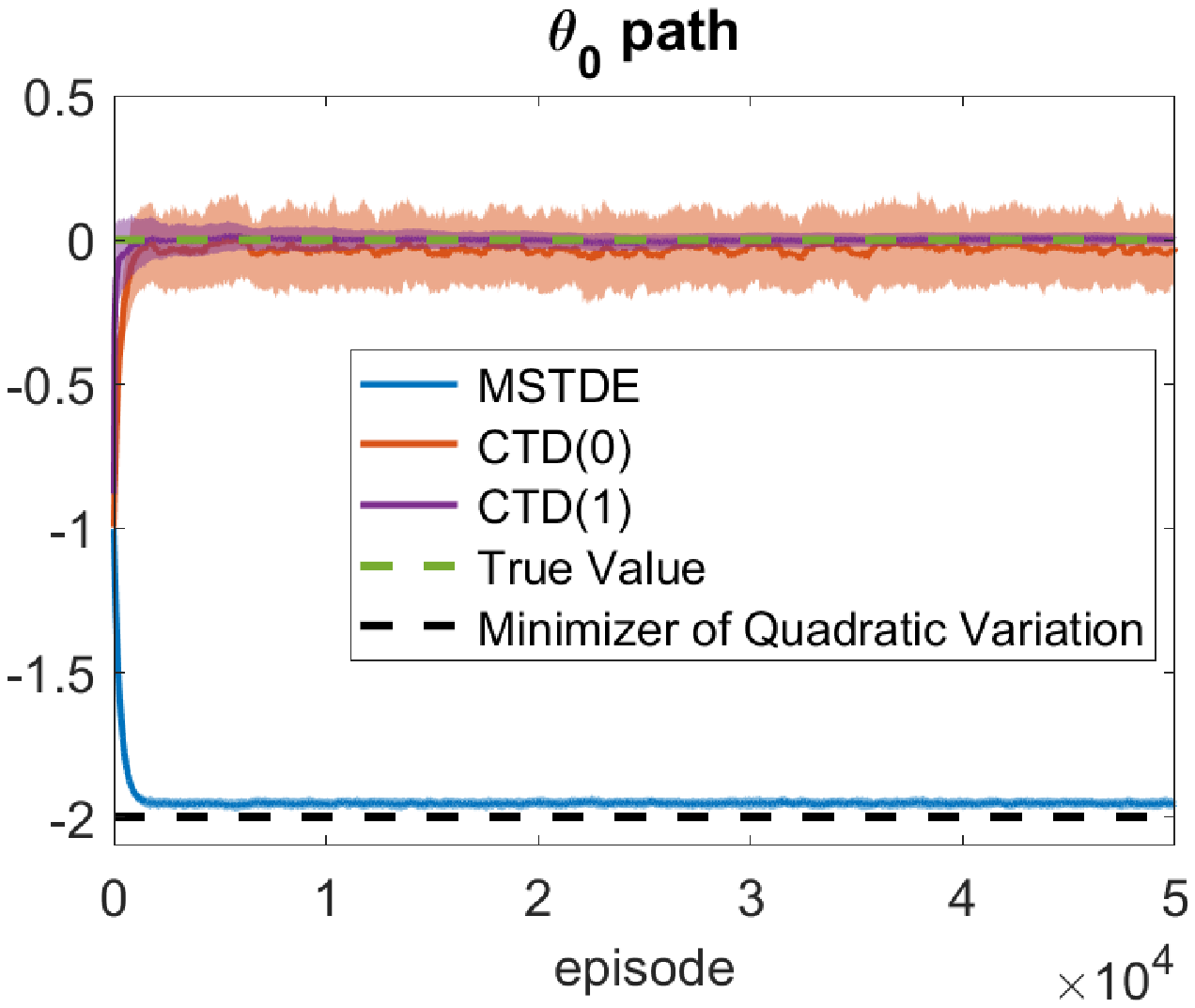}
\end{subfigure}
\begin{subfigure}{0.32\textwidth}
\centering
\includegraphics[width = 1\textwidth]{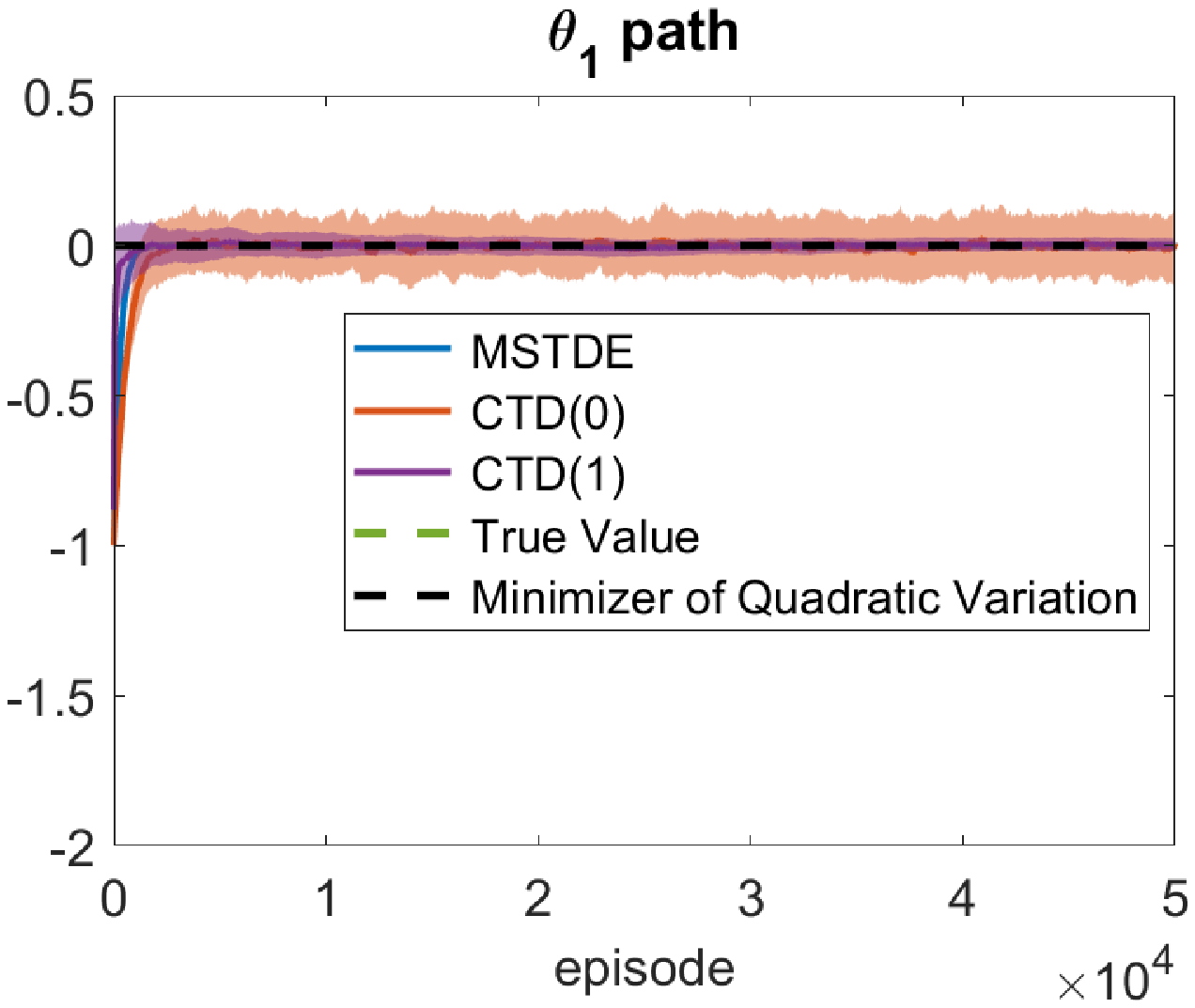}
\end{subfigure}
\begin{subfigure}{0.32\textwidth}
\centering
\includegraphics[width = 1\textwidth]{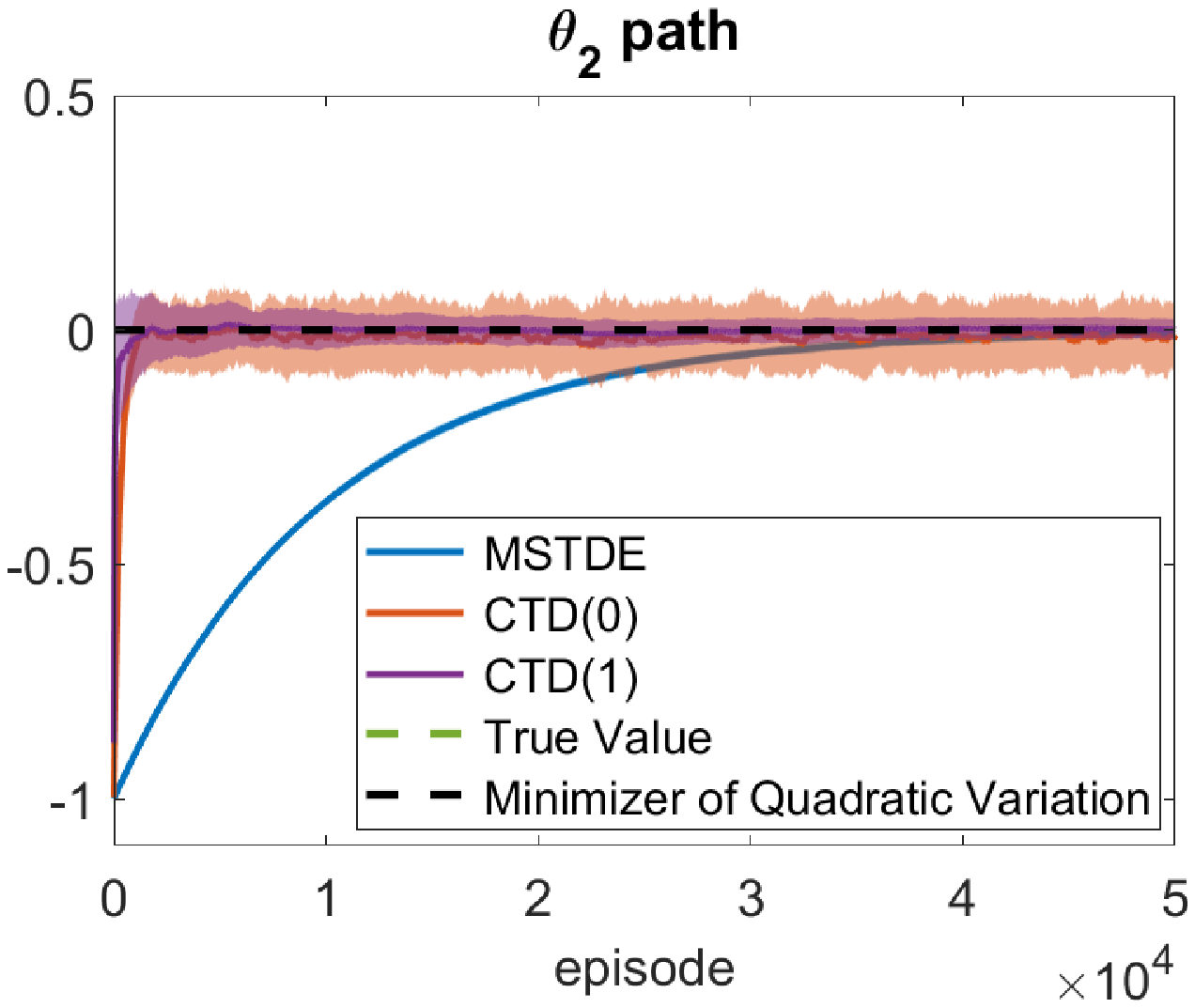}
\end{subfigure}
\caption{\textbf{The path of parameters over episodes for different objectives  under the online setting for Example \ref{eg:toy running reward}.} The true solution is $\theta_{\text{true}} = (0,0,0)$. Based on our analysis of quadratic variation, the minimizer is $\theta^*_{\operatorname{MSTDE}} = (-2,0,0)$. CTD(0) and CTD(1) lead to the desired solution. We repeat the experiment for 100 times to calculate the standard deviations, which are represented as the shaded areas. The width of each shaded area is twice the corresponding standard deviation.}
\label{fig:td error example 2 online}
\end{figure}

\section{Martingale Perspective and Approach}
\label{sec:martingale perspective}

It follows from the previous section that MSTDE is not the right objective/loss function for PE in continuous-time {\it stochastic} RL. In this section we propose and analyze several other objective functions or criteria all based on the martingality of the process $M$, and connect some of them to well-studied alternative TD algorithms for MDPs.

\subsection{Offline learning: Martingale loss function}
\label{sec:ml}
In this subsection, we propose a loss function that uses full sample trajectories  and is therefore applicable for offline learning, and test the corresponding algorithm's  performance.

Let the state process be
$\{X_t,0\leq t\leq T\}$.
Recall that the square-integrable martingality of $M_t = J(t,X_t) + \int_0^t r(s,X_s)\dd s$ characterizes the correct value function.  The martingale condition is further equivalent to
\[ M_t=\E[M_T|\mathcal{F}_t],\;\text{ for all } t\in[0,T],\]
which in turn stipulates that the process at any given time prior to the terminal time $T$ is the expectation of the terminal value conditional on all the information
available at that time. However, a fundamental property of the conditional expectation yields that $M_t$ minimizes the $L^2$-error between $M_T$ and any $\mathcal{F}_t$-measurable random variables, namely,
\[ M_t\equiv \E[M_T|\mathcal{F}_t]=\argmin_{\xi\mbox{ is $\mathcal{F}_t$-measurable}} \E|M_T-\xi|^2,\;\text{ for all } t\in[0,T].\]

This property inspires the following loss function, termed the {\it martingale loss function}:
\begin{equation}
\label{eq:martingale loss function}
\begin{aligned}
\operatorname{ML}(\theta) := & \frac{1}{2}|| M_T - M^{\theta}_{\cdot}||_{L^2}^2 = \frac{1}{2}	\E \int_0^T |M_T - M_t^{\theta}|^2 \dd t\\
\approx &\frac{1}{2} \E\bigg[\sum_{i=0}^{K-1} \bigg( h(X_T) - J^{\theta}(t_i,X_{t_i}) + \int_{t_i}^T r(s, X_s)\dd s \bigg)^2 \Delta t \bigg]  \\
\approx &\frac{1}{2} \E\bigg[  \sum_{i=0}^{K-1} \bigg( h(X_{t_K}) - J^{\theta}(t_i,X_{t_i}) + \sum_{j=i}^{K-1} r(t_{j}, X_{t_j}) \Delta t \bigg)^2 \Delta t \bigg]\\
= & \frac{1}{2}\E\bigg[  \sum_{i=0}^{K-1} \bigg( h(X_{t_K}) + \sum_{j=0}^{K-1} r(t_{j}, X_{t_j}) \Delta t   - J^{\theta}(t_i,X_{t_i}) - \sum_{j=0}^{i-1} r(t_{j}, X_{t_j}) \Delta t \bigg)^2 \Delta t \bigg]\\
=&: \operatorname{ML}_{\Delta t}(\theta),
\end{aligned}
\end{equation}
where $0=t_0 < t_1 <\cdots < t_K = T$ is a mesh grid in time.
Note that this loss function does not rely on the knowledge of the functional forms of $b,\sigma,r$ or $h$.\footnote{In particular, $M_T = h(X_T) + \int_0^T r_s\dd s$ does not depend on the parameter $\theta$ and can be directly observed from samples as the total reward obtained over $[0,T]$. } As long as we can observe the accumulated reward $\sum_{j=0}^{i-1} r(t_{j}, X_{t_j})$ along with the final reward $h(X_{T})$, the loss function can be implemented with the expectation replaced by the average over sample trajectories.  

This loss function uses the whole trajectory to calculate the difference between the predicted value function and the realized reward-to-go, minimizing which naturally leads to an unbiased estimation. This approach is the continuous-time analogue of the so-called  {\it Monte Carlo policy evaluation} with function approximation \citep{sutton2011reinforcement} in the classical MDP and RL literature. It is primarily for offline learning where one observes multiple trajectories offline and updates  estimate after observing one full trajectory.

The martingale loss objective is {\it not} of a TD type; it does not compare the approximate function values at two consecutive times.
To explain the difference between the martingale loss function and the mean-square TD error,
let us assume that the running reward $r\equiv 0$ for simplicity. In this case,
$M_t=J(t,X_t)$ is a martingale. The martingale loss  considers the difference values of $J$ between any time instance and the final time, $ J(X_T)-J(t_i,X_{t_i})=h(X_T) - J(t_i,X_{t_i})$, while the TD concerns the difference between two consecutive time instances, $J(t_{i+1},X_{t_{i+1}}) - J(t_i,X_{t_i})$.
The intuition why the former leads to the right solution is that it always compares
the current value of $J$ with that of the final time, $h(X_T)$, which is observable and thus can serve as an ultimate and correct target. In fact, instead of thinking of $J(t,x)$ as a bivariate function of time $t$ and space $x$, in any numerical procedure one is essentially looking for $K$ functions of $x$, denoted by $J_i(\cdot) = J(t_i, \cdot)$, where $i=0,\cdots,K-1$, with $J_K(x) = h(x)$  known and given.
Therefore, the martingale loss is the aggregated error between $J_i$ and $J_K=h$, minimizing which also minimizes the error between $J_i$ and $J_K$ for {\it each} $i$.
As a result, each $J_i$ converges to the correct value. In contrast, the {\it mean-square} TD error represents the aggregated intertemporal $L^2$ error between $J_i$ and $J_{i+1}$.
When computing this error, each $J_{i}$ except $J_0$ shows up twice in $|J_i - J_{i+1}|^2$ and $|J_i - J_{i-1}|^2$; so the resulting function $J_{i}$ will be twisted away from the true value, leading to a wrong solution.

Finally, we can apply SGD to minimize the proposed martingale loss function and the updating rule is given by
\begin{equation}
\label{eq:gradient martingale loss function}
\theta \leftarrow \theta + \alpha  \sum_{i=0}^{K-1} \bigg( h(X_{t_K}) + \sum_{j=i}^{K-1} r(t_{j}, X_{t_j}) \Delta t   - J^{\theta}(t_i,X_{t_i})\bigg) \frac{\partial J^{\theta}}{\partial \theta}(t_i, X_{t_i})  \Delta t .
\end{equation}
Let us call this the $\operatorname{ML}$ {\it (martingale loss) algorithm}, which is the counterpart of the \textit{gradient Monte Carlo} in classical RL with MDP, when $G(t_i):= h(X_{t_K}) + \sum_{j=i}^{K-1} r(t_{j}, X_{t_j}) \Delta t$ is taken as the Monte Carlo target at each $t_i$ \citep{sutton2011reinforcement}. 
We apply this algorithm to numerically solve Examples \ref{eg:toy 1} and \ref{eg:toy running reward}, and find that it leads to the true solution; see Figures \ref{fig:td error example 1} and \ref{fig:td error example 2}.
In our implementation, the initial value is the same as before and the learning rate is tuned for smoother convergence. In particular, the initial learning rate is set to be 0.01 and decays according to $(\sharp \text{episode})^{-0.67}$ where $\sharp \text{episode}$ denotes
the number of episode.\footnote{This decay schedule satisfies the usual requirement on the decay rate of the learning rate for the convergence of SGD algorithms. Note here our purpose is not to optimize convergence rates, but to confirm the limiting point for a convergent algorithm. Tuning the learning rate is not crucial to our results, as long as the algorithm does converge.}

The next theorem states that minimizing the martingale loss function is equivalent to minimizing the mean-square error between the estimated value function $J^{\theta}$ and the true value function $J$. This error is known as the \textit{mean-square value error} (MSVE):
\begin{equation}
\label{eq:value l2 error}
\operatorname{MSVE}(\theta) = \E \int_0^T |J(t,X_t) - J^{\theta}(t,X_t)|^2 \dd t.
\end{equation}

\begin{theorem}
\label{thm:martingale loss function}
It holds that
\[\arg\min_{\theta \in \Theta}\operatorname{ML}(\theta)  = \arg\min_{\theta \in \Theta}\operatorname{MSVE}(\theta) . \]
Moreover, under Assumptions \ref{ass:sde regularity}, \ref{ass:fk pde}, and \ref{ass:regularity}, as $\Delta t\to 0$, any convergent subsequence of the minimizer of the discretized martingale loss function $\theta^*_{\operatorname{ML}}(\Delta t) \in \arg\min_{\theta \in \Theta} \operatorname{ML}_{\Delta t}(\theta)$ converges to the minimizer of martingale loss function; that is
\[ \lim_{\Delta t \to 0}\theta^*_{\operatorname{ML}}(\Delta t) = \theta^*_{\operatorname{ML}} \in \arg\min_{\theta \in \Theta}\operatorname{ML}(\theta) =  \arg\min_{\theta \in \Theta}\operatorname{MSVE}(\theta).  \]
Furthermore, if in addition Assumption \ref{ass:growth in reward} holds, then
\[ \operatorname{ML}\left(\theta^*_{\operatorname{ML}}(\Delta t) \right) - \min_{\theta \in \Theta}\operatorname{ML}(\theta) \leq C (\Delta t)^{\min\{1,\mu_1 + \frac{\mu_2}{2}\}} \]
for some constant $C>0$.

\end{theorem}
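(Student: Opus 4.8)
The plan is to prove the three assertions in turn: the first is an exact algebraic identity, the second a standard stability-of-minimizers argument, and the third a quantitative discretization estimate.

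\textbf{Step 1 (the argmin identity).} I would obtain the first claim from an orthogonal decomposition. Writing $M_t = J(t,X_t)+\int_0^t r(s,X_s)\dd s$ for the true martingale of Proposition \ref{proposition:martingale bsde} (taking $t=0$) and noting that $M_T = h(X_T)+\int_0^T r(s,X_s)\dd s$ does not depend on $\theta$, I split
\[ M_T - M_t^{\theta} = (M_T - M_t) + \big(J(t,X_t) - J^{\theta}(t,X_t)\big). \]
The second summand is $\f_t$-measurable while $\E[M_T - M_t \mid \f_t]=0$ by the martingality of $M$; hence the two summands are orthogonal in $L^2(\Omega)$ and the cross term vanishes after conditioning on $\f_t$. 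Integrating in $t$ over $[0,T]$ gives
\[ \operatorname{ML}(\theta) = \tfrac12\,\E\int_0^T |M_T - M_t|^2\,\dd t + \tfrac12\,\operatorname{MSVE}(\theta), \]
where the first term is a constant independent of $\theta$, so $\arg\min_\theta \operatorname{ML}(\theta) = \arg\min_\theta \operatorname{MSVE}(\theta)$ follows at once.

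\textbf{Step 2 (convergence of discrete minimizers).} For the second claim I would invoke the classical fact that minimizers are stable under locally uniform convergence of the objectives. The analytic input is that, under Assumptions \ref{ass:sde regularity}--\ref{ass:regularity}, $\operatorname{ML}_{\Delta t}(\theta)\to\operatorname{ML}(\theta)$ as $\Delta t\to 0$, pointwise in $\theta$ and uniformly on bounded sets. This holds because $\operatorname{ML}_{\Delta t}$ differs from $\operatorname{ML}$ only through (i) replacing the outer time integral by its Riemann sum and (ii) replacing $\int r\,\dd s$ by $\sum_j r(t_j,X_{t_j})\Delta t$, both of which converge by continuity of the integrands (Assumption \ref{ass:sde regularity}(i)), the uniform moment bounds on $X$, and dominated convergence, while Assumption \ref{ass:regularity} gives continuity of the limiting $L^2$-norms in $\theta$. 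Given a convergent subsequence $\theta^*_{\operatorname{ML}}(\Delta t_n)\to\bar\theta$, I would pass to the limit in $\operatorname{ML}_{\Delta t_n}\!\big(\theta^*_{\operatorname{ML}}(\Delta t_n)\big)\le \operatorname{ML}_{\Delta t_n}(\theta)$ for arbitrary fixed $\theta$, obtaining $\operatorname{ML}(\bar\theta)\le\operatorname{ML}(\theta)$, i.e. $\bar\theta\in\arg\min\operatorname{ML}=\arg\min\operatorname{MSVE}$ by Step 1.

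\textbf{Step 3 (the rate).} Writing $\hat\theta=\theta^*_{\operatorname{ML}}(\Delta t)$ and fixing $\theta^*\in\arg\min_\theta\operatorname{ML}(\theta)$, the optimality $\operatorname{ML}_{\Delta t}(\hat\theta)\le\operatorname{ML}_{\Delta t}(\theta^*)$ yields the sandwich
\[ \operatorname{ML}(\hat\theta)-\min_\theta\operatorname{ML}(\theta) \le 2\sup_{\theta\in\Theta}\big|\operatorname{ML}(\theta)-\operatorname{ML}_{\Delta t}(\theta)\big|, \]
reducing everything to a uniform discretization estimate, which I would bound by its two sources. The Riemann-sum error in the outer time integral is $O(\Delta t)$ provided $t\mapsto \E|M_T-M_t^{\theta}|^2$ is Lipschitz, which follows from It\^o's formula and the $L^2$-integrability of Assumption \ref{ass:regularity}. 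For the cumulative-reward error, Assumption \ref{ass:growth in reward} together with the increment moment bound $\E|X_s-X_{t_j}|^{2p}\le C|s-t_j|^{p}$ gives, per subinterval, $\big\|\int_{t_j}^{t_{j+1}}\!\big(r(s,X_s)-r(t_j,X_{t_j})\big)\dd s\big\|_{L^2}\le C(\Delta t)^{1+\mu_1+\mu_2/2}$; summing the $K=T/\Delta t$ subintervals gives an accumulated $L^2$-error of order $(\Delta t)^{\mu_1+\mu_2/2}$, and feeding this through the quadratic loss via $|a|^2-|b|^2=(a-b)(a+b)$ and Cauchy--Schwarz (with $\|a\|_{L^2},\|b\|_{L^2}=O(1)$) preserves the order. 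Combining the two sources yields $\sup_\theta|\operatorname{ML}-\operatorname{ML}_{\Delta t}|\le C(\Delta t)^{\min\{1,\mu_1+\mu_2/2\}}$, as claimed.

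\textbf{Main obstacle.} Step 1 is essentially free once the martingale orthogonality is invoked; the real work lies in the uniform-in-$\theta$ discretization analysis underlying Steps 2 and 3. The delicate point is the cumulative-reward approximation: its error accumulates over $\Theta(1/\Delta t)$ subintervals, so one must establish the sharp per-interval rate $1+\mu_1+\mu_2/2$ in order for the total to be $O\big((\Delta t)^{\mu_1+\mu_2/2}\big)$ rather than worse. This is exactly where Assumption \ref{ass:growth in reward} and the H\"older-$\tfrac12$ moment regularity of the diffusion increments enter, and where the $\min$ in the final exponent originates.
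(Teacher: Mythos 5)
Your proposal is correct and follows essentially the same route as the paper: the same martingale orthogonal decomposition $M_T-M_t^\theta=(M_T-M_t)+(J-J^\theta)(t,X_t)$ for the argmin identity, the same stability-of-minimizers principle (the paper's Lemma \ref{lemma:minimizer limit}) applied to the locally uniform convergence $\operatorname{ML}_{\Delta t}\to\operatorname{ML}$, and the same two error sources — an $O(\Delta t)$ Riemann-sum error and an $O((\Delta t)^{\mu_1+\mu_2/2})$ cumulative-reward error via the increment estimate of Lemma \ref{lemma:simple process approximation} — yielding the exponent $\min\{1,\mu_1+\tfrac{\mu_2}{2}\}$. The only cosmetic difference is that you take suprema over $\Theta$ where the paper restricts to compact sets containing the convergent subsequence, which is the form actually needed.
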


Clearly, MSVE is a theoretically sound loss function for learning. However, by itself it does not lead to an {\it implementable} algorithm because the true value $J$ is not observable from data. Theorem \ref{thm:martingale loss function} strengthens the  theoretical foundation of the martingale loss function that is implementable. Moreover, this theorem establishes the convergence together with the convergence rate of applying any convergent algorithm developed for minimizing discrete-time martingale loss as the time step tends to zero. Therefore, it also provides a theoretical foundation for implementing the discretization procedure.

We illustrate this result with the following examples.

\begin{eg}
\label{eg:toy 1 continued wrong para}
{\rm 	Consider the same learning problem in Example \ref{eg:toy 1},  but with a different  parameterized value function given by $J^{\theta}(t,x) = \theta x^3$. Recall $X_t=W_t$ is a Brownian motion. The main difference between this example and the previous ones is that now the parametric family does {\it not} contain the true solution. Indeed, it does not even satisfy the correct terminal condition that $J^{\theta}(1,w)=x$, which could happen in more complex problems when the terminal payoff functions are unknown. Recall the true value function is $J(t,x) = x$. Let us compute the MSVE:
\[ \E\int_0^1 | J(t,X_t) - J^{\theta}(t,X_t) |^2 \dd t  = \E\int_0^1 \big( W_t - \theta W_t^3 \big)^2 \dd t  = \int_0^1 (t - 6\theta t^2 + 15\theta^2 t^3) \dd t. \]
The minimizer is $\theta^* = \frac{4}{15}$. According to Theorem \ref{thm:martingale loss function}, minimizing the martingale loss function should converge to this solution.}
\end{eg}

\begin{eg}
\label{eg:toy 1 continued wrong para 2}
{\rm	Consider the same learning problem in Example \ref{eg:toy 1}, with the parameterized value function $J^{\theta}(t,x) = x + (1-t)e^{\theta x- \frac{1}{2}\theta^2 t + \theta}$. Recall $X_t=W_t$ is a Brownian motion. This time it satisfies the terminal condition, but still does not contain the true solution. Let us compute the MSVE:
\[\begin{aligned}
	& \E\int_0^1 | J(t,X_t) - J^{\theta}(t,X_t) |^2 \dd t  = \E\int_0^1 (1-t)^2e^{2\theta W_t - \theta^2 t + 2\theta} \dd t  \\
	= & \int_0^1 (1-t)^2e^{\theta^2 t + 2\theta} \dd t = -\frac{e^{2\theta}(2-2e^{\theta^2}+2\theta^2+\theta^4)}{\theta^6}.
\end{aligned} \]
The minimizer is $\theta^* \approx -2.12568$. According to Theorem \ref{thm:martingale loss function}, minimizing the proposed martingale loss function should converge to this solution.}
\end{eg}

We test the numerical solutions to Examples \ref{eg:toy 1 continued wrong para} and \ref{eg:toy 1 continued wrong para 2} by applying our ML algorithms with SGD. The initial learning rate is taken as $0.001$ and decays as $(\sharp \text{episode})^{-0.67}$. Figures \ref{fig:toy 1 wrong para} and \ref{fig:toy 1 wrong para 2} confirm the result of Theorem \ref{thm:martingale loss function}. 

\begin{figure}[h]
\centering
\includegraphics[width=0.5\textwidth]{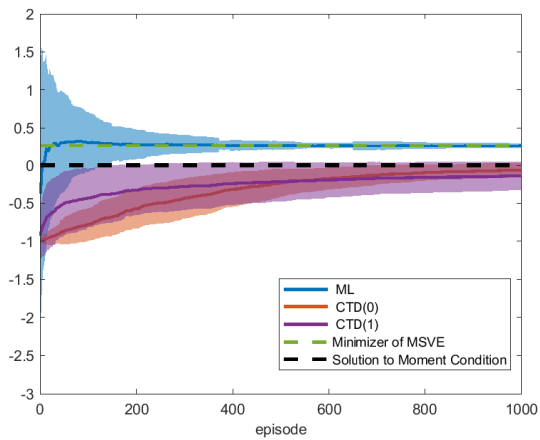}
\caption{\textbf{ML and CTD($\lambda$) methods converge to different points for Example \ref{eg:toy 1 continued wrong para}.} Applying ML algorithm leads to $\theta^*_{\operatorname{ML}} = \frac{4}{15}$, which is the minimizer of MSVE. CTD methods converge to $\theta^*_{\text{moment}} = 0$, which is the solution to the moment condition. In this case, the moment conditions associated with CTD(0) and CTD(1) have the same solution so the two algorithms converge to the same point. We repeat the experiment for 100 times to calculate the standard deviations, which are represented as the shaded areas. The width of each shaded area is twice the corresponding standard deviation.}
\label{fig:toy 1 wrong para}
\end{figure}

\begin{figure}[h]
\centering
\includegraphics[width=0.5\textwidth]{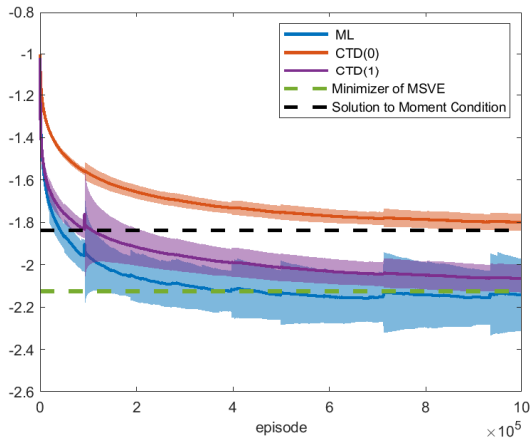}
\caption{\textbf{ML and CTD($\lambda$) methods converge to different points with different values of $\lambda$ for Example \ref{eg:toy 1 continued wrong para 2}.} Applying ML algorithm  leads to  $\theta^*_{ML} \approx -2.12568$, which is the minimizer of MSVE. CTD(0) converges to $\theta^*_{\text{moment},\operatorname{CTD}(0)} \approx -1.83923$, which is the solution to the moment condition associated with the choice of  test function for CTD(0). CTD(1) converges to $\theta^*_{\text{moment},\operatorname{CTD}(1)} \approx -2.12568$, which is the solution to the moment condition associated with the choice of  test function for CTD(1). Because of the different choices of test functions, the two CTD algorithms converge to different points. It is a coincidence that ML and CTD(1) converge to the same point. We repeat the experiment for 100 times to calculate the standard deviations, which are represented as the shaded areas. The width of each shaded area is twice the corresponding standard deviation.}
\label{fig:toy 1 wrong para 2}
\end{figure}

\subsection{Online and offline learning: TD based on martingale orthogonality conditions}
\label{sec:orthogonality}
%
We have proposed a martingale loss function to interpret the Monte Carlo PE. This approach requires the whole sample trajectory over $[0,T]$; so it is inherently offline and is  difficult to extend to the online setting where only historical samples are available when one updates the approximated function in real time. Classically, TD learning was introduced as  a remedy to enable online learning. However, based on our previous discussion, the mean-square TD error is not the correct objective function to learn the value function even though it can indeed be implemented online. 
In this section, we propose a different approach, again based on the martingality of the process $M$, that generates the continuous-time counterparts of several
well-studied TD algorithms and that works both online and offline.

This approach starts with noting that $M$ being a square-integrable martingale implies
\begin{equation}
\label{eq:martingale difference}
\E\int_{0}^{T} \xi_t \dd M_t = 0,
\end{equation}
for any $\xi \in L^2_{{\cal F}}([0,T], M)$ (called a \textit{test function}).\footnote{It would be more appropriate to call it a {\it test process} because $\xi$ needs to be generally an adapted stochastic process. However, we use the more common term ``test function".} 
In fact, the following result shows that this is a necessary and sufficient condition for the parameterized process $M^{\theta}_t$  to be a martingale.

\begin{proposition}
\label{lemma:martingale characterization test function}
In general, a diffusion process $M^{\theta}\in L^2_{{\cal F}}([0,T])$ is a martingale if and only if $\E\int_{0}^{T} \xi_t \dd M^{\theta}_t = 0$ for any $\xi \in L^2_{{\cal F}}([0,T],M^{\theta})$. In the current setting, $\E\int_{0}^{T} \xi_t \dd M^{\theta}_t = \E\int_{0}^{T} \xi_t\big[ \mathcal{L} J^{\theta}(t, X_t) + r_t  \big] \dd t$.
\end{proposition}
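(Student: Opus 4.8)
The plan is to separate the two assertions: the representation formula for $\E\int_0^T \xi_t\,\dd M^\theta_t$, and the martingale characterization itself. I would establish the representation first. Applying It\^o's formula to $J^\theta(t,X_t)$, exactly as in the derivation preceding \eqref{eq:de}, gives the semimartingale decomposition
\[ \dd M^\theta_t = \big[\mathcal{L}J^\theta(t,X_t) + r_t\big]\dd t + \Big(\tfrac{\partial J^\theta(t,X_t)}{\partial x}\Big)^\top\sigma(t,X_t)\,\dd W_t, \]
so that $\int_0^T \xi_t\,\dd M^\theta_t$ splits into a pathwise Lebesgue integral $\int_0^T \xi_t[\mathcal{L}J^\theta(t,X_t)+r_t]\,\dd t$ and an It\^o integral $\int_0^T \xi_t(\partial J^\theta/\partial x)^\top\sigma\,\dd W_t$. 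Since this It\^o process has quadratic variation $\dd\langle M^\theta\rangle_t = |(\partial J^\theta/\partial x)^\top\sigma(t,X_t)|^2\,\dd t$, the defining condition $\xi\in L^2_{\mathcal{F}}([0,T],M^\theta)$, namely $\E\int_0^T \xi_t^2\,\dd\langle M^\theta\rangle_t<\infty$, is precisely the It\^o-isometry requirement making the $\dd W_t$ integral a square-integrable martingale of zero mean. Taking expectations then annihilates that term and yields the stated identity.

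For the equivalence I would argue the two directions separately and, deliberately, \emph{without} routing through the representation formula, so as to sidestep any degeneracy of $\sigma$. For necessity, suppose $M^\theta$ is a square-integrable martingale; then for any $\xi\in L^2_{\mathcal{F}}([0,T],M^\theta)$ the process $N_t:=\int_0^t \xi_s\,\dd M^\theta_s$ is itself a square-integrable martingale with $N_0=0$ (this is the standard construction of the stochastic integral against an $L^2$ martingale, with $\E N_T^2=\E\int_0^T\xi_t^2\,\dd\langle M^\theta\rangle_t<\infty$), whence $\E\int_0^T\xi_t\,\dd M^\theta_t=\E N_T=\E N_0=0$.

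For sufficiency, I would test against the elementary integrands $\xi_u=\mathbf{1}_A\,\mathbf{1}_{(s,t]}(u)$ with $0\leq s<t\leq T$ and $A\in\mathcal{F}_s$. Each such $\xi$ lies in $L^2_{\mathcal{F}}([0,T],M^\theta)$, because $\E\int_0^T\xi_u^2\,\dd\langle M^\theta\rangle_u=\E[\mathbf{1}_A(\langle M^\theta\rangle_t-\langle M^\theta\rangle_s)]\leq\E\langle M^\theta\rangle_T<\infty$, the last bound being finite precisely because $M^\theta$ is square-integrable. For these integrands $\int_0^T\xi_u\,\dd M^\theta_u=\mathbf{1}_A(M^\theta_t-M^\theta_s)$ by the elementary definition of the integral, so the hypothesis forces $\E[\mathbf{1}_A(M^\theta_t-M^\theta_s)]=0$ for every $A\in\mathcal{F}_s$; by the definition of conditional expectation this is $\E[M^\theta_t\mid\mathcal{F}_s]=M^\theta_s$, i.e. the martingale property, the requisite integrability of $M^\theta_t$ at each $t$ being part of the square-integrability hypothesis on $M^\theta$.

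I expect the delicate point to lie not in the equivalence---which the bounded, elementary test functions settle cleanly---but in the representation formula when $\xi$ ranges over the \emph{full} weighted space $L^2_{\mathcal{F}}([0,T],M^\theta)$: if $\sigma$ degenerates, $\dd\langle M^\theta\rangle_t$ can vanish on sets where $\dd t$ does not, so membership in the weighted space need not control $\E\int_0^T\xi_t^2\,\dd t$, and then the drift integral $\E\int_0^T\xi_t[\mathcal{L}J^\theta+r_t]\,\dd t$ may fail to be absolutely convergent. I would handle this by recording the identity for bounded $\xi$ first---these satisfy $\E\int_0^T\xi_t^2\,\dd t\leq\|\xi\|_\infty^2\,T<\infty$ and, via Cauchy--Schwarz together with $\mathcal{L}J^\theta+r_\cdot\in L^2_{\mathcal{F}}([0,T])$ from Assumption \ref{ass:regularity}, make both pieces integrable---and then noting that this subclass already contains the elementary integrands above and suffices for every subsequent use of the proposition.
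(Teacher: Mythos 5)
Your proof is correct, and the sufficiency direction takes a genuinely different route from the paper's. For the representation formula you use the same It\^o decomposition $\dd M^\theta_t = [\mathcal{L}J^\theta(t,X_t)+r_t]\dd t + (\partial J^\theta/\partial x)^\top\sigma\,\dd W_t$ that the paper writes down, and for necessity you spell out the standard fact (the stochastic integral of an $L^2$ integrand against a square-integrable martingale is a mean-zero martingale) that the paper dismisses as ``evident.'' The real divergence is in sufficiency: the paper tests against $\xi_t=\mathrm{sgn}(A_t)\mathbf{1}_{[s,s']}(t)$, where $A_t=\mathcal{L}J^\theta(t,X_t)+r_t$ is the drift, and concludes $\E\int_s^{s'}|A_t|\,\dd t=0$, hence $A\equiv 0$ a.s.\ and $M^\theta$ is a martingale; you instead test against $\mathbf{1}_A\mathbf{1}_{(s,t]}$ with $A\in\mathcal{F}_s$ and read off $\E[M^\theta_t\mid\mathcal{F}_s]=M^\theta_s$ directly. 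Your route never needs the semimartingale decomposition for the equivalence itself, so it is more elementary and extends verbatim to any square-integrable process for which the integral is defined; the paper's route is shorter and pinpoints \emph{what} vanishes (the drift, i.e.\ Bellman's error rate), which is the fact actually exploited downstream. Your observation about degenerate $\sigma$ --- that membership in $L^2_{\mathcal{F}}([0,T],M^\theta)$ need not control $\E\int_0^T\xi_t^2\,\dd t$, so the drift integral in the representation may not converge absolutely for the full weighted class --- is a legitimate point the paper passes over; both your indicator integrands and the paper's $\mathrm{sgn}(A_t)$ are bounded, so neither proof is affected. One small correction: your bound $\E\langle M^\theta\rangle_T<\infty$ is justified not by the square-integrability of $M^\theta$ in the $\E\int_0^T|M^\theta_t|^2\dd t$ sense (for a semimartingale with drift that does not control the quadratic variation) but by Assumption \ref{ass:regularity}, which places $\big|(\partial J^\theta/\partial x)^\top\sigma\big|$ in $L^2_{\f}([0,T])$ so that $\E\langle M^\theta\rangle_T=\E\int_0^T\big|(\partial J^\theta/\partial x)^\top\sigma(t,X_t)\big|^2\dd t<\infty$; the conclusion stands, only the cited reason needs adjusting.
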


We call (\ref{eq:martingale difference}) a family of {\it martingale orthogonality conditions}.
In theory, one should vary all possible test functions and thus this family has {\it infinitely} many equations. For numerical approximation methods, however, we can choose finitely many test functions in special forms. Notice that, for a parametric family $\{J^{\theta}:\theta\in \Theta\subset \mathbb{R}^{L}\}$, in principle, we need at least $L$ equations as our martingale orthogonality conditions in order to fully determine
$\theta$. For example, we can take $\xi_t = \frac{\partial J^{\theta}}{\partial \theta}(t,X_t)\in \mathbb{R}^L$. (Here, and henceforth, $\xi_t$ may be vector-valued and \eqref{eq:martingale difference} is accordingly a vector-valued equation or a system of equations.) In statistics and econometrics, a problem of the type  \eqref{eq:martingale difference} with a finite number of equations is also referred to as \textit{moment conditions}, and a systematic way to analyze and solve it is known as the \textit{generalized methods of moments} (GMM); see, e.g., \cite{hansen1982large}.

To devise algorithms based on this theory, we need to answer the following questions: How to choose these finite number of test functions? And how to solve the resulting moment conditions in an effective and efficient way?
It turns out that answering these two questions suitably in our continuous setting gives rise to algorithms that correspond to several well-known conventional TD learning algorithms in discrete setting.

\begin{itemize}
\item
Choose $\xi_t = \frac{\partial J^{\theta}}{\partial \theta}(t,X_t)$, and use stochastic approximation \citep{robbins1951stochastic} to update parameters after a whole  episode (offline):
\[\theta \leftarrow  \theta + \alpha \int_0^T \frac{\partial J^{\theta}}{\partial \theta}(t,X_t) \dd M^{\theta}_t \approx \theta + \alpha \sum_{i=0}^{K-1}\frac{\partial J^{\theta}}{\partial \theta}(t_i,X_{t_i})\big( J^{\theta}(t_{i+1},X_{t_{i+1}}) - J^{\theta}(t_i,X_{t_i}) + r_{t_i}\Delta t \big) ,  \]
or to update parameters at every time step (online):
\[\theta \leftarrow  \theta + \alpha  \frac{\partial J^{\theta}}{\partial \theta}(t,X_t) \dd M^{\theta}_t \approx \theta + \alpha \frac{\partial J^{\theta}}{\partial \theta}(t_i,X_{t_i})\big( J^{\theta}(t_{i+1},X_{t_{i+1}}) - J^{\theta}(t_i,X_{t_i}) + r_{t_i}\Delta t \big). \]

These algorithms correspond to the TD(0) algorithm \citep{sutton1988learning}.

\item Choose $\xi_t = \int_0^{t}\lambda^{t-s}\frac{\partial J^{\theta}}{\partial \theta}(s,X_s)\dd s$, where $0<\lambda\leq 1$, and use stochastic approximation to update parameters after one episode:
\[\begin{aligned}
\theta \leftarrow  & \theta+ \alpha \int_0^T \int_0^{t}\lambda^{t-s}\frac{\partial J^{\theta}}{\partial \theta}(s,X_s)\dd s \dd M^{\theta}_t \\
& \approx \theta + \alpha \sum_{i=0}^{K-1}\sum_{j=0}^{i}\Delta t \lambda^{(i-j)\Delta t}\frac{\partial J^{\theta}}{\partial \theta}(t_j,X_{t_j})\big( J^{\theta}(t_{i+1},X_{t_{i+1}}) - J^{\theta}(t_i,X_{t_i}) + r_{t_i}\Delta t \big),
\end{aligned} \]
or to update parameters at every time step:
\[\begin{aligned}
\theta \leftarrow  & \theta+ \alpha \int_0^{t}\lambda^{t-s}\frac{\partial J^{\theta}}{\partial \theta}(s,X_s)\dd s \dd M^{\theta}_t \\
& \approx \theta + \alpha \sum_{j=0}^{i}\Delta t \lambda^{(i-j)\Delta t}\frac{\partial J^{\theta}}{\partial \theta}(t_j,X_{t_j})\big( J^{\theta}(t_{i+1},X_{t_{i+1}}) - J^{\theta}(t_i,X_{t_i}) + r_{t_i}\Delta t \big).
\end{aligned} \]
%
%

These algorithms correspond to the
TD($\lambda$) algorithm \citep{sutton1988learning}. Here the parameter $\lambda$ dictates how much weight to be put on historical predictions in the procedure.
When $\lambda=1$, it puts equal weight on all the past predictions. The smaller $\lambda$ becomes, the more weight on more recent predictions. When $\lambda=0$,
past predictions do not matter, resulting in the TD(0) algorithm.

It should be noted that TD(0) and TD($\lambda$)  algorithms are {\it not} exactly gradient based; rather, they use {\it stochastic approximation} as a first-order method to solve \eqref{eq:martingale difference}. In the literature they are also referred to as {\it semi-gradient} TD algorithms \citep{sutton2011reinforcement} because they do include a part of the gradient.  

\item Choose $\xi_t = \frac{\partial J^{\theta}}{\partial \theta}(t,X_t)$ and take the approximated value function to be a linear combination  of a series of basis functions: $J^{\theta}(t,x) = \sum_{j=1}^L \theta_j \phi_j(t,x)$. Then $\frac{\partial J^{\theta}}{\partial \theta}(t,x) = \phi(t,x):=(\phi_1(t,x),\cdots,\phi_L(t,x))^\top \in \mathbb{R}^L$. In this case, \eqref{eq:martingale difference} becomes a system of linear equatins in $\theta$ and can be solved explicitly as
$$\theta = -\left[\E\int_0^{T} \phi(t,X_t)\big( \dd \phi(t,X_t)^{\top}\big)  \right]^{-1} \E\int_0^{T} r_t \phi(t,X_t) \dd t,$$
assuming the matrix inverse  exists.  The expectation can be estimated using sample average across multiple trajectories. Hence, if there are $M$ episodes, we have
\[\begin{aligned}
\theta = & -\bigg(\frac{1}{M} \sum_{k=1}^{M} \int_0^{T}  \phi(t,X^{(k)}_t)\big( \dd \phi(t,X^{(k)}_t)^{\top}\big)  \bigg)^{-1} \bigg( \frac{1}{M}\sum_{k=1}^M \int_0^T r_t^{(k)}\phi(t,X^{(k)}_t)\dd t \bigg) \\
\approx & -\bigg(\frac{1}{M} \sum_{k=1}^{M} \sum_{i=0}^{K-1} \phi(t_i,X^{(k)}_{t_i})\big( \phi(t_{i+1},X^{(k)}_{t_{i+1}})^{\top} - \phi(t_i,X^{(k)}_{t_i})^{\top}\big) \bigg)^{-1} \bigg( \frac{1}{M}\sum_{k=1}^M \sum_{i=0}^{K-1} r^{(k)}_{t_i}\phi(t_i,X^{(k)}_{t_i})\Delta t \bigg),
\end{aligned}  \]
where the superscript $(k)$ signifies that the corresponding observations are taken from the $k$-th episode.
If there is only one trajectory up to time $t = t_j$, then we
estimate the parameter  using long-time average (under certain ergodicity condition) to obtain
\[\begin{aligned}
\theta = & -\bigg( \frac{1}{t}\int_0^{t}  \phi(s,X_s)\big( \dd \phi(s,X_s)^{\top} \big) \bigg)^{-1} \bigg( \frac{1}{t}\int_0^t r_s\phi(s,X_s)\dd s \bigg) \\
\approx & -\bigg(\frac{1}{j} \sum_{i=0}^{j-1} \phi(t_i,X_{t_i})\big( \phi(t_{i+1},X_{t_{i+1}})^{\top} - \phi(t_i,X_{t_i})^{\top}\big) \bigg)^{-1} \bigg( \frac{1}{j} \sum_{i=0}^{j-1} r_{t_i}\phi(t_i,X_{t_i})\Delta t \bigg).
\end{aligned} \]

These algorithms correspond to the (linear) least square TD(0), or LSTD(0), algorithms \citep{bradtke1996linear}. LSTD and its variants \citep{boyan2002technical} are often discussed in  the context of linear function approximation. Despite the name of ``least square'', it does not solve any minimization problem per se; instead it uses the linear structure to obtain the exact solution to  \eqref{eq:martingale difference} and then use sample average to estimate the expectation. \citet{xu2002efficient} and \citet{geramifard2006incremental} develop a more efficient way to implement this solution in a recursive  way. The reason why it is called ``least square'' comes from the instrumental variable approach  to regression problems \citep{ljung1983theory}.\footnote{Instrumental variables are widely used in statistics and econometrics to estimate causal relationship when exploratory variables are endogenous. A necessary condition for being a  instrumental variable is that it must be uncorrelated with the residual term. In the context of TD learning, the residual term is the TD error.} \citet{bradtke1996linear} show that the basis functions in LSTD are indeed instrumental variables. 

\item 
We choose $\xi_t = \frac{\partial J^{\theta}}{\partial \theta}(t,X_t)$, and  minimize the GMM objective function
\[ \operatorname{GMM}(\theta) = \frac{1}{2}\E\left[\int_0^{T} \xi_t \dd M^{\theta}_t\right]^\top A \E\left[\int_0^{T} \xi_t \dd M^{\theta}_t\right],\]
where $A$ is a given matrix. Different choices of $A$ lead to a variety of algorithms corresponding to what are broadly  called {\it gradient TD} (GTD) algorithms for MDPs. For example,
taking $A=I$, the identity matrix, corresponds to GTD(0) by \cite{sutton2009fast}.
Another choice is $A= [\E\int_0^{T} \xi_t\xi_t^\top\dd t]^{-1}$. In this case,
the gradient of the objective in $\theta$ is (noting $\xi_t$ also depends on $\theta$)
\[ \E\left[\int_0^{T} \dd \left( \frac{\partial M^{\theta}}{\partial \theta}(t,X_t) \right) \xi_t^\top \right] u +  \E\left[\int_0^{T} \frac{\partial \xi_t}{\partial \theta}^\top \dd M_t^{\theta}\right] u - \E\left[\int_0^{T} u^\top \xi_t  \frac{\partial \xi_t}{\partial \theta}(t,X_t)^\top u\dd t\right], \]
where $u: = [\E\int_0^{T} \xi_t\xi_t^\top\dd t]^{-1}\E[\int_0^{T} \xi_t \dd M^{\theta}_t]$ and  $\frac{\partial \xi_t}{\partial \theta}$ is the Jacobian matrix. In particular, $\frac{\partial \xi_t}{\partial \theta} = \frac{\partial^2 J^{\theta}}{\partial \theta^2}(t,X_t) $ is the Hessian matrix and hence is symmetric. When $J^{\theta}(t,x) = \sum_{j=1}^L \theta_j \phi_j(t,x)$ is the linear span of basis functions,  the last two terms of the gradient will vanish because $\frac{\partial \xi_t}{\partial \theta} = \frac{\partial^2 J^{\theta}}{\partial \theta^2}(t,X_t) = 0$.

Two GTD algorithms, called
GTD2 and TDC \citep{sutton2008convergent,sutton2009fast}, apply stochastic approximation  at two different time scales to update $u$ and $\theta$ respectively. Specifically, in both algorithms,
$u$ is estimated with long-term average:
\[ u \leftarrow u + \alpha_u \left[ \xi_t \dd M^{\theta}_t - \xi_t\xi_t^\top u\Delta t \right]\approx  u + \alpha_u \big[ \xi_{t_i} ( M^{\theta}_{t_{i+1}} - M^{\theta}_{t_{i}}) - \xi_{t_i}\xi_{t_i}^\top u\Delta t \big], \]
and then $\theta$ is updated with two different one-step sampling methods.
The GTD2 algorithm proceeds as follows:
\[
\begin{aligned}
\theta \leftarrow & \theta - \alpha_{\theta} \left[ \dd \left( \frac{\partial M^{\theta}}{\partial \theta}(t,X_t) \right)\xi_t^\top u +  \frac{\partial \xi_t}{\partial \theta} ^\top \dd M_t^{\theta}u - u^\top\xi_t  \frac{\partial \xi_t}{\partial \theta} ^\top u \Delta t \right] \\
\approx &  \theta - \alpha_{\theta} \bigg[ \left( \frac{\partial M^{\theta}}{\partial \theta}(t_{i+1},X_{t_{i+1}}) - \frac{\partial M^{\theta}}{\partial \theta}(t_{i},X_{t_{i}}) \right)\xi_{t_i}^\top u \\
& + \frac{\partial \xi}{\partial \theta}(t_i, X_{t_i})^\top ( M_{t_{i+1}}^{\theta} -M_{t_{i}}^{\theta} )u - u^\top\xi_{t_{i}}  \frac{\partial \xi}{\partial \theta}(t_i,X_{t_i})^\top u \Delta t \bigg] .
\end{aligned}  \]
The TDC algorithm observes that $\frac{\partial J^{\theta}}{\partial \theta}(t,X_t) = \xi_t$ and hence updates $\theta$  by
\[
\begin{aligned}
\theta \leftarrow & \theta - \alpha_{\theta} \left[ \xi_t \dd M^{\theta}_t +\xi_{t'}\xi_t^\top u \Delta t  + \frac{\partial \xi_t}{\partial \theta}^\top \dd M_t^{\theta}u - u^\top\xi_t  \frac{\partial \xi_t}{\partial \theta}^\top  u \Delta t \right] \\
\approx & \theta - \alpha_{\theta} \bigg[ \xi_{t_i} ( M_{t_{i+1}}^{\theta} -M_{t_{i}}^{\theta} ) +\xi_{t_{i+1}}\xi_{t_i}^\top u \Delta t \\
& + \frac{\partial \xi}{\partial \theta}(t_i,X_{t_i})^\top ( M_{t_{i+1}}^{\theta} -M_{t_{i}}^{\theta} ) u - u^\top\xi_{t_i} \frac{\partial \xi}{\partial \theta}(t_i, X_{t_i})^\top  u \Delta t \bigg].
\end{aligned}  \]

GTD(0), GTD2 and TDC are gradient based methods as well as typical GMM methods to minimize a quadratic form of the conditions \eqref{eq:martingale difference}, where expectations are estimated using long term averages  as in \citet{hansen1996finite}. \citet{sutton2008convergent,sutton2009fast} and \citet{maei2009convergent} study stochastic approximation and incremental implementation of the gradient of  quadratic functions for  linear and non-linear function approximation respectively.

\end{itemize}

All the above  methods can be classified into two types. The first type applies stochastic approximation to solve the moment conditions directly, like TD($\lambda$). This is the classical TD learning. The second type follows GMM to minimize a quadratic function of the moment conditions by computing its gradient and approximating  the expectation by either long-term average or one long sample trajectory. We call it the GTD method, following \citet{sutton2008convergent}. LSTD is  limited to  linear approximation only and hence can be considered  as a special case of the first type when the moment conditions can be explicitly solved so the only computation needed is to estimate the expectation using samples.

It should be noted that although the goal of this paper is to devise PE algorithms for the continuous setting, the {\it actual} implementations of the various algorithms described above are all discrete-time with a {\it fixed} mesh size  $\Delta t$.
These algorithms correspond to some discrete-time versions of the moment conditions. So natural and important theoretical questions are whether such an algorithm converges to the solution of the continuous-time version
of the respective moment conditions as $\Delta t\rightarrow 0$ and, if yes,  what the convergence rate is. The next two theorems answer these questions.

Henceforth we impose the following assumption on the test functions used for moment conditions.
\begin{assumption}
\label{ass:test functions regularity}
A test function $\xi = \{\xi_t,0\leq t\leq T\}$ is an $\mathbb{R}^{L'}$-valued adapted process satisfying $|\xi|\in L^2_{\f}([0,T];M^{\theta})$ and $\E[|\xi_{t'} - \xi_t|^2] \leq C(\theta)|t'-t|^{\alpha}$ for any $t,t'\in [0,T]$, where $C(\theta)$ is a continuous function of $\theta$ and $0 < \alpha \leq 2$ is a given constant.
\end{assumption}

The following is about the convergence of the TD type algorithms when $\Delta t\to 0$.

\begin{theorem}
\label{thm:td equation}
Denote by $\theta^*_{\operatorname{moment}}(\Delta t)$ the solution to the discrete-time moment conditions with mesh size $\Delta t$:
\[ \E\sum_{i=0}^{K-1}\xi_{t_i}(M^{\theta}_{t_{i+1}} - M^{\theta}_{t_i}) = 0. \]
Then, under Assumptions \ref{ass:sde regularity}, \ref{ass:fk pde}, \ref{ass:regularity}, and \ref{ass:test functions regularity}, as $\Delta t\to 0$, any convergent subsequence of $\theta^*_{\operatorname{moment}}(\Delta t)$ converges to the solution to the continuous-time moment conditions \eqref{eq:martingale difference};  that is,
\[ \theta^*_{\operatorname{moment}}:=\lim_{\Delta t \to 0}\theta^*_{\operatorname{moment}}(\Delta t) \]
solves \eqref{eq:martingale difference}.
Moreover, if in addition Assumption \ref{ass:growth in reward}  holds, then
\[ | \E\int_0^T \xi_t\dd M_t^{\theta^*_{\operatorname{moment}}(\Delta t)} | \leq C (\Delta t)^{\min\{\frac{\alpha}{2},\mu_1 + \frac{\mu_2}{2}\}} \]
for some constant $C$.
\end{theorem}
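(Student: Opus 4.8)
The plan is to compare the discrete-time moment functional
\[ g_{\Delta t}(\theta) := \E\sum_{i=0}^{K-1}\xi_{t_i}\big( J^\theta(t_{i+1},X_{t_{i+1}}) - J^\theta(t_i,X_{t_i}) + r_{t_i}\Delta t\big) \]
against the continuous-time functional $g(\theta) := \E\int_0^T \xi_t\,\dd M^\theta_t$, which by Proposition \ref{lemma:martingale characterization test function} equals $\E\int_0^T \xi_t[\mathcal{L}J^\theta(t,X_t) + r_t]\,\dd t$. Here, consistent with the implementable algorithms, the discrete increment of $M^\theta$ is taken with the running reward replaced by its forward-Euler approximation $r_{t_i}\Delta t$. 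The whole theorem reduces to showing that $g_{\Delta t}(\theta)\to g(\theta)$ with the stated rate and that $g$ is continuous in $\theta$; the convergence of the root then follows by a standard subsequence argument.

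For the approximation I would split $g_{\Delta t}(\theta) = \E\sum_i \xi_{t_i}(M^\theta_{t_{i+1}} - M^\theta_{t_i}) + \E\sum_i \xi_{t_i}\varepsilon_i$, where $\varepsilon_i := r_{t_i}\Delta t - \int_{t_i}^{t_{i+1}} r_s\,\dd s$ is the reward error. Applying It\^o's formula as in \eqref{eq:de}, each exact increment of $M^\theta$ splits into a drift integral of $\mathcal{L}J^\theta + r$ and a stochastic integral against $\dd W$. Because $\xi_{t_i}$ is $\mathcal{F}_{t_i}$-measurable and the stochastic integral over $[t_i,t_{i+1}]$ has zero conditional mean, the Brownian term vanishes in expectation. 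What remains is $\E\sum_i \xi_{t_i}\int_{t_i}^{t_{i+1}}[\mathcal{L}J^\theta + r_s]\,\dd s$, whose difference from $g(\theta)$ is $\E\sum_i\int_{t_i}^{t_{i+1}}(\xi_t - \xi_{t_i})[\mathcal{L}J^\theta + r_t]\,\dd t$. Cauchy--Schwarz, the H\"older estimate $\E|\xi_t - \xi_{t_i}|^2\leq C(\theta)(\Delta t)^\alpha$ from Assumption \ref{ass:test functions regularity}, and the $L^2$-boundedness of $\mathcal{L}J^\theta + r$ from Assumption \ref{ass:regularity} bound this by $C(\Delta t)^{\alpha/2}$.

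For the reward term I would write $\varepsilon_i = \int_{t_i}^{t_{i+1}}(r_{t_i} - r_s)\,\dd s$ and use Assumption \ref{ass:growth in reward} to get $|r_{t_i} - r_s|\leq C|s - t_i|^{\mu_1}|X_s - X_{t_i}|^{\mu_2}(|X_{t_i}|^{\mu_3} + |X_s|^{\mu_3})$. After a H\"older split, $|s-t_i|^{\mu_1}$ contributes $(\Delta t)^{\mu_1}$, the SDE moment estimate $\E|X_s - X_{t_i}|^{p}\leq C|s-t_i|^{p/2}$ (valid under Assumption \ref{ass:sde regularity}) contributes $(\Delta t)^{\mu_2/2}$, and $\E|X_t|^{q}$, $\E|\xi_{t_i}|^{q}$ stay uniformly bounded, so each of the $K=T/\Delta t$ summands is $O((\Delta t)^{\mu_1+\mu_2/2+1})$ and the sum is $C(\Delta t)^{\mu_1+\mu_2/2}$. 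Combining gives $|g_{\Delta t}(\theta) - g(\theta)|\leq C(\Delta t)^{\min\{\alpha/2,\ \mu_1+\mu_2/2\}}$. The conclusions then follow quickly: since $g_{\Delta t}(\theta^*_{\operatorname{moment}}(\Delta t)) = 0$, evaluating at $\theta = \theta^*_{\operatorname{moment}}(\Delta t)$ yields the rate directly, as $|\E\int_0^T\xi_t\,\dd M^{\theta^*_{\operatorname{moment}}(\Delta t)}_t| = |g(\theta^*_{\operatorname{moment}}(\Delta t))|$; and along a subsequence with $\theta^*_{\operatorname{moment}}(\Delta t)\to\theta^*_{\operatorname{moment}}$, writing $g(\theta^*_{\operatorname{moment}}) = [g(\theta^*_{\operatorname{moment}}) - g(\theta^*_{\operatorname{moment}}(\Delta t))] + [g(\theta^*_{\operatorname{moment}}(\Delta t)) - g_{\Delta t}(\theta^*_{\operatorname{moment}}(\Delta t))]$ sends the first bracket to zero by continuity of $g$ and the second by the approximation bound (whose constants stay bounded near the limit), giving $g(\theta^*_{\operatorname{moment}}) = 0$.

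I expect the main obstacle to be bookkeeping the $\theta$-dependence of the test function $\xi$ (e.g.\ $\xi_t = \frac{\partial J^\theta}{\partial\theta}(t,X_t)$): one must ensure the H\"older constant $C(\theta)$ and the moments of $\xi$ are controlled uniformly over the compact set traced by the convergent subsequence so that the limit can be passed, and the continuity of $g$ must account for $\xi$ varying with $\theta$ in addition to $J^\theta$ (handled via the continuity of the $L^2$-norms in Assumption \ref{ass:regularity}, of $C(\cdot)$ in Assumption \ref{ass:test functions regularity}, and dominated convergence).
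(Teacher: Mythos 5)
Your proposal is correct and follows essentially the same route as the paper's proof: the same decomposition of the discrete--continuous discrepancy into a test-function increment error (bounded via Cauchy--Schwarz and the H\"older estimate of Assumption \ref{ass:test functions regularity}, yielding $C(\Delta t)^{\alpha/2}$) and a forward-Euler reward error (bounded via Assumption \ref{ass:growth in reward} and SDE moment estimates, yielding $C(\Delta t)^{\mu_1+\mu_2/2}$), followed by passing to the limit along a convergent subsequence exactly as in the paper's Lemma on minimizer/root limits. The points you flag as potential obstacles (uniformity of $C(\theta)$ and the $L^2$-norms over the compact set traced by the subsequence) are precisely how the paper handles the final step, so nothing is missing.
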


The next theorem is on the convergence of the GTD type algorithms when $\Delta t\to 0$.

\begin{theorem}
\label{thm:gmm objective}
Let the discretized GMM objective function be
\[ \operatorname{GMM}_{\Delta t}(\theta):=  \frac{1}{2}\E\left[\sum_{i=0}^{K-1} \xi_{t_i}^{\theta}  (M^{\theta}_{t_{i+1}} - M^{\theta}_{t_i})\right]^\top A_{\Delta t} \E\left[\sum_{i=0}^{K-1} \xi_{t_i}^{\theta}  (M^{\theta}_{t_{i+1}} - M^{\theta}_{t_i})\right], \]
where $A_{\Delta t}$ is a discretized approximation of $A$ satisfying  $|A_{\Delta t} - A| \leq \tilde{C}(\theta)|\Delta t|^{\beta} $,  with $\tilde{C}(\theta)$ being a continuous function of $\theta$ and $\beta > 0$ a constant.\footnote{When $A$ is a constant as in  GTD(0), $A_{\Delta t} = A$. When $A = [\E\int_0^T \xi_t\xi_t^\top\dd t]^{-1}$ as in GTD2 and TDC, $A_{\Delta t}: = [\E\sum_{i=0}^{K-1} \xi_{t_i}\xi_{t_i}^\top\Delta t]^{-1}$ is the discretization approximation of this integral.}
Then, under Assumptions \ref{ass:sde regularity}, \ref{ass:fk pde}, \ref{ass:regularity}, and \ref{ass:test functions regularity}, as $\Delta t\to 0$,
any convergent subsequence of the minimizer of the discretized GMM objective function $\theta^*_{\operatorname{GMM}}(\Delta t) \in \arg\min_{\theta \in \Theta} \operatorname{GMM}_{\Delta t}(\theta)$ converges to the minimizer of the continuous GMM objective function;  that is,
\[ \lim_{\Delta t \to 0}\theta^*_{\operatorname{GMM}}(\Delta t) = \theta^*_{GMM} \in \arg\min_{\theta \in \Theta}\operatorname{GMM}(\theta) .  \]
Moreover, if in addition Assumption \ref{ass:growth in reward} holds, then
\[ \operatorname{GMM}\left(\theta^*_{\operatorname{GMM}}(\Delta t) \right) - \min_{\theta \in \Theta}\operatorname{GMM}(\theta) \leq C (\Delta t)^{\min\{\frac{\alpha}{2},\mu_1 + \frac{\mu_2}{2},\beta\}} \]
for some constant $C$.
\end{theorem}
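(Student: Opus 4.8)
The plan is to write both objectives as quadratic forms of a single ``moment function'', establish that the discretized moment function converges to its continuous-time limit locally uniformly in $\theta$ (with a rate once Assumption \ref{ass:growth in reward} is available), and then invoke the standard comparison argument for minimizers of uniformly perturbed objectives. Set $g(\theta):=\E\int_0^T\xi_t^\theta\,\dd M_t^\theta$ and $g_{\Delta t}(\theta):=\E\sum_{i=0}^{K-1}\xi_{t_i}^\theta(M_{t_{i+1}}^\theta-M_{t_i}^\theta)$, so that $\operatorname{GMM}(\theta)=\tfrac12 g(\theta)^\top A\,g(\theta)$ and $\operatorname{GMM}_{\Delta t}(\theta)=\tfrac12 g_{\Delta t}(\theta)^\top A_{\Delta t}\,g_{\Delta t}(\theta)$, where $A$, $A_{\Delta t}$ may themselves depend continuously on $\theta$ (as in the GTD2/TDC case). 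By Proposition \ref{lemma:martingale characterization test function}, $g(\theta)=\E\int_0^T\xi_t^\theta[\mathcal{L}J^\theta(t,X_t)+r_t]\,\dd t$, which is finite and, by Assumption \ref{ass:regularity}, continuous in $\theta$; hence $\operatorname{GMM}$ is continuous.

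The heart of the argument is the decomposition of $g_{\Delta t}(\theta)-g(\theta)$ obtained by applying It\^o's formula to $M^\theta$. The Brownian part contributes nothing in expectation, since $(\xi_{t_i}^\theta-\xi_s^\theta)(\frac{\partial J^\theta}{\partial x})^\top\sigma$ is adapted and square-integrable; the drift part $\E\sum_i\int_{t_i}^{t_{i+1}}(\xi_{t_i}^\theta-\xi_s^\theta)^\top[\mathcal{L}J^\theta+r_s]\,\dd s$ is $O((\Delta t)^{\alpha/2})$ uniformly on compact $\theta$-sets, by Cauchy--Schwarz together with the H\"older bound $\E|\xi_{t'}^\theta-\xi_t^\theta|^2\le C(\theta)|t'-t|^{\alpha}$ of Assumption \ref{ass:test functions regularity} (its constant being continuous, hence bounded on compacts) and $\mathcal{L}J^\theta+r\in L^2_{\f}([0,T])$; and the remaining piece, the reward discretization error $\E\sum_i\xi_{t_i}^\theta[r_{t_i}\Delta t-\int_{t_i}^{t_{i+1}}r_s\,\dd s]$, tends to zero uniformly on compacts by the continuity and polynomial growth of $r$, the uniform moment bounds on $X$, and the uniform $L^2$-boundedness of $\xi^\theta$ from Assumption \ref{ass:regularity}. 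Thus $g_{\Delta t}\to g$ locally uniformly; this is the estimate already extracted in the proof of Theorem \ref{thm:td equation}, which I would reuse. When Assumption \ref{ass:growth in reward} holds, the same reward term is bounded by $C(\theta)(\Delta t)^{\mu_1+\mu_2/2}$ using the modulus $|r(t',x')-r(t,x)|\le C|t'-t|^{\mu_1}|x'-x|^{\mu_2}(|x'|^{\mu_3}+|x|^{\mu_3})$ and the SDE moment estimates, so altogether $|g_{\Delta t}(\theta)-g(\theta)|\le C(\theta)(\Delta t)^{\gamma_0}$ with $\gamma_0:=\min\{\tfrac{\alpha}{2},\mu_1+\tfrac{\mu_2}{2}\}$ and $C$ continuous.

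Combining with $|A_{\Delta t}-A|\le\tilde C(\theta)(\Delta t)^{\beta}$ and the local boundedness of $g,g_{\Delta t}$, a bilinearity estimate gives $|\operatorname{GMM}_{\Delta t}(\theta)-\operatorname{GMM}(\theta)|\le\bar C(\theta)(\Delta t)^{\gamma}$ with $\gamma:=\min\{\tfrac{\alpha}{2},\mu_1+\tfrac{\mu_2}{2},\beta\}$ and $\bar C$ continuous (without Assumption \ref{ass:growth in reward} this is replaced by locally uniform convergence without a rate). For the convergence of minimizers, take any subsequence with $\Delta t_n\to0$ and $\theta_n:=\theta^*_{\operatorname{GMM}}(\Delta t_n)\to\bar\theta$. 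Optimality gives $\operatorname{GMM}_{\Delta t_n}(\theta_n)\le\operatorname{GMM}_{\Delta t_n}(\theta)$ for every fixed $\theta$; since $\{\theta_n\}$ eventually lies in a compact neighborhood of $\bar\theta$ on which the convergence is uniform, $\operatorname{GMM}_{\Delta t_n}(\theta_n)\to\operatorname{GMM}(\bar\theta)$ by continuity of $\operatorname{GMM}$, while $\operatorname{GMM}_{\Delta t_n}(\theta)\to\operatorname{GMM}(\theta)$. Passing to the limit yields $\operatorname{GMM}(\bar\theta)\le\operatorname{GMM}(\theta)$ for all $\theta$, i.e. $\bar\theta\in\arg\min_\theta\operatorname{GMM}(\theta)$.

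For the rate, fix $\theta_{\Delta t}:=\theta^*_{\operatorname{GMM}}(\Delta t)$ and a minimizer $\theta^*$ of $\operatorname{GMM}$, and split
\[ \operatorname{GMM}(\theta_{\Delta t})-\operatorname{GMM}(\theta^*)=\big[\operatorname{GMM}(\theta_{\Delta t})-\operatorname{GMM}_{\Delta t}(\theta_{\Delta t})\big]+\big[\operatorname{GMM}_{\Delta t}(\theta_{\Delta t})-\operatorname{GMM}_{\Delta t}(\theta^*)\big]+\big[\operatorname{GMM}_{\Delta t}(\theta^*)-\operatorname{GMM}(\theta^*)\big]. \]
The middle bracket is nonpositive by optimality of $\theta_{\Delta t}$, and the two outer brackets are each at most $\bar C(\Delta t)^{\gamma}$ by the estimate above (with $\bar C$ bounded on the compact set containing $\{\theta_{\Delta t}\}$ for small $\Delta t$), which gives the claimed $C(\Delta t)^{\gamma}$. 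I expect the main obstacle to be the moment-function estimate itself, and within it the reward-discretization term, where Assumption \ref{ass:growth in reward} must be married to the SDE moment bounds to produce the exponent $\mu_1+\tfrac{\mu_2}{2}$; a secondary point is ensuring the $\theta$-dependent constants remain uniformly controlled along $\{\theta^*_{\operatorname{GMM}}(\Delta t)\}$, which follows from their continuity and the assumed precompactness of the minimizers.
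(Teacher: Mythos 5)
Your proposal is correct and follows essentially the same route as the paper: both write the objectives as quadratic forms of the moment function $g_{\Delta t}$ versus $g$, reuse the locally uniform estimate $|g_{\Delta t}(\theta)-g(\theta)|\leq C(\theta)(\Delta t)^{\min\{\alpha/2,\,\mu_1+\mu_2/2\}}$ already extracted in the proof of Theorem \ref{thm:td equation}, combine it with $|A_{\Delta t}-A|\leq\tilde C(\theta)(\Delta t)^{\beta}$ through the elementary perturbation bound for $b^\top D b$, and conclude via the standard comparison of minimizers of uniformly perturbed objectives (which the paper packages as Lemma \ref{lemma:minimizer limit} and you re-derive in line). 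The only content in the paper's proof that you omit is the verification that the specific choice $A_{\Delta t}=[\E\sum_i\xi_{t_i}\xi_{t_i}^\top\Delta t]^{-1}$ actually satisfies the hypothesis $|A_{\Delta t}-A|\leq\tilde C(\theta)(\Delta t)^{\beta}$, but that is a supplementary check rather than part of the theorem as stated.
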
	 	

From now on, to distinguish our algorithms from their existing discrete-time counterparts, we will add a prefix ``C", signifying ``continuous", to the names of the algorithms. For instance, we will call them CTD($\lambda$), CLSTD, and so on.

The next important question is in what sense the aforementioned methods approximate the correct value function. First, a convergent CTD(0) or CTD($\lambda$) algorithm should converge to the solution to the moment conditions \eqref{eq:martingale difference} based on the respective choices of test functions. Intuitively, such an algorithm searches for one particular Bellman's error process $\mathcal{L}J^{\theta}(t, X_t) + r_t$ {\it within the parametric family} such that it is orthogonal to the underlying test functions. These TD learning methods are usually easy to implement and work effectively in many applications.
To demonstrate, we re-compute the problems in Examples \ref{eg:toy 1} and \ref{eg:toy running reward} using online CTD(0) and CTD(1) algorithms with stochastic approximation. The learning rate is chosen as 0.01. Both algorithms  converge to the correct values; see Figures \ref{fig:td error example 1 online} and \ref{fig:td error example 2 online}.

However, a caveat is that these algorithms may not always work. On one hand,
due to possible misspecification of the parametric family, solutions to the moment conditions may not exist, in which case the algorithms will not converge; see Example \ref{eg:toy 1 continued wrong para 3} below where the test function is not properly chosen. 
On the other hand, as the following continuations of Examples \ref{eg:toy 1 continued wrong para} and \ref{eg:toy 1 continued wrong para 2} illustrate, even if the solution to the moment conditions exists uniquely and an algorithm converges, the resulting solution may vary depending on the choice of test functions.


\begin{eg}
\label{eg:toy 1 continued wrong para 3}
{\rm Consider the same learning problem in Example \ref{eg:toy 1}, now with the parameterized value function $J^{\theta}(t,x) = x + (1-t)e^{\theta x - \frac{1}{2}\theta^2 t}[(\theta + 1)^2 + 1]$. Recall $X_t=W_t$ is a Brownian motion. This family does not contain the true solution. If we choose the test function to be the constant 1 and use CTD(0), a convergent algorithm should solve
\[ 0 = \E\int_0^1 e^{\theta W_t - \frac{1}{2}\theta^2 t}[(\theta + 1)^2 + 1] \dd t = (\theta + 1)^2 + 1.\]
However, there is no solution to the above equation. Our implementation of CTD(0)
indeed generates a divergent sequence of iterates; see Figure \ref{fig:toy 1 wrong para 3}.

On the other hand, we can use the martingale loss function to get a solution. Indeed, it follows from Theorem \ref{thm:martingale loss function} that the ML algorithm is equivalent to minimizing
\[  \begin{aligned}
	& \E\int_0^1 | J(t,X_t) - J^{\theta}(t,X_t) |^2 \dd t   =  \E\int_0^1 (1-t)^2e^{2\theta W_t -\theta^2 t}[(\theta + 1)^2 + 1]^2 \dd t  \\
	& = \int_0^1 (1-t)^2e^{\theta^2 t}\dd t[(\theta + 1)^2 + 1]^2 = -\frac{2-2e^{\theta^2} + 2\theta^2+\theta^4}{\theta^6}[(\theta + 1)^2 + 1]^2,
\end{aligned} \]
whose minimizer is around $-0.875301$. The implementation of ML confirms this theoretical prediction; see Figure \ref{fig:toy 1 wrong para 3}.

}
\end{eg}

\setcounter{eg}{2}
\begin{eg}[Continued]
{\rm	We revisit  this example where $J^{\theta}(t,x) = \theta x^3$. 
Recall $X_t=W_t$ is a Brownian motion. There is no running reward so $M^{\theta}_t = \theta X_t^3$ and $\dd M^{\theta}_t = 3\theta W_t^2 \dd W_t + 3\theta W_t\dd t$. Hence, any test function that is not identically 0 leads to the {\it only} solution  $\theta^* = 0$.   As a result, any convergent CTD
algorithm  should converge to 0, yielding a value function $J^{\theta^*}(t,x) =0$ that is significantly deviated from the true one $J(t,x) = x$. See Figure \ref{fig:toy 1 wrong para} for the CTD(0) and CTD(1)  experiment results.}
\end{eg}

\begin{eg}[Continued]
{\rm	Consider the parameterized value function $J^{\theta}(t,x) = x + (1-t)e^{\theta x - \frac{1}{2}\theta^2 t + \theta}$. Recall $X_t=W_t$ is a Brownian motion. In this case,  $\dd J^{\theta}(t,X_t) = \dd W_t + (1-t) \theta e^{\theta W_t - \frac{1}{2}\theta^2 t + \theta}\dd W_t - e^{\theta W_t - \frac{1}{2}\theta^2 t + \theta} \dd t$.

If we use the one-step or one-episode CTD(0) algorithm with $\xi_t = \frac{\partial J^{\theta}}{\partial \theta}(t,X_t) = (1-t)e^{\theta X_t - \frac{1}{2}\theta^2 t + \theta}(X_t - \theta t + 1)$, then the moment condition \eqref{eq:martingale difference} becomes
\[\begin{aligned}
	0 = & \E[\int_0^{1} (1-t)e^{\theta W_t - \frac{1}{2}\theta^2 t + \theta}(W_t - \theta t + 1) e^{\theta W_t - \frac{1}{2}\theta^2 t + \theta} \dd t] \\
	& =  \int_0^{1} (1-t)(1+t\theta)e^{(2+t\theta)\theta}  \dd t = \frac{e^{2\theta}[2-\theta+\theta^2-\theta^3+e^{\theta^2}(-2+\theta+\theta^2)]}{\theta^5} .
\end{aligned}   \]
This equation has a unique solution $\theta \approx -1.83923$. A convergent CTD(0) algorithm should converge to this point, which is however {\it different} from the solution produced by the martingale loss function approach.

If we use the one-step or one-episode CTD(1) algorithm with $\xi_t =\int_0^t \frac{\partial J^{\theta}}{\partial \theta}(s,X_s)\dd s = \int_0^t (1-s)e^{\theta W_s - \frac{1}{2}\theta^2 s + \theta}(W_s - \theta s + 1)\dd s$, then the moment condition \eqref{eq:martingale difference} is
\[\begin{aligned}
	0 = & \E[\int_0^{1} \int_0^t (1-\tau)e^{\theta W_{\tau} - \frac{1}{2}\theta^2 \tau + \theta}(W_{\tau} - \theta \tau + 1)\dd \tau e^{\theta W_t - \frac{1}{2}\theta^2 t + \theta} \dd t] \\
	& = \int_0^1 \int_0^t \E\big[ e^{\theta W_{\tau} -\frac{1}{2}\theta^2\tau}(W_{\tau} - \theta \tau + 1) \E[e^{\theta W_t - \frac{1}{2}\theta^2 t}|\f_{\tau}] \big](1-\tau) e^{2\theta}\dd \tau \dd t \\
	& = \int_0^1 \int_0^t \E[e^{2\theta W_{\tau}}(W_{\tau} - \theta \tau + 1)  ](1-\tau)e^{-\theta^2 \tau + 2\theta}  \dd \tau \dd t \\
	& = \int_0^1 \int_0^t (1+\tau \theta)(1-\tau)e^{\theta^2 \tau + 2\theta}  \dd \tau \dd t \\
	& = \frac{e^{2\theta}[ 6 + 2e^{\theta^2}(-3+\theta+\theta^2) - (-1+\theta)\theta(-2+2\theta+\theta^3) ] }{\theta^7} .
\end{aligned}  \]
There is a unique solution $\theta \approx -2.12568$, to which a convergent CTD(1) algorithm converges. This solution coincides with the one by the martingale loss function approach.

The implementations of the above  algorithms are reported in Figure \ref{fig:toy 1 wrong para 2}, which are consistent with the theoretical analysis.}
\end{eg}

When the parametric family is a linear span of some basis functions, the unique solution that solves the moment conditions is theoretically guaranteed under very mild conditions, which is numerically generated by the CLSTD algorithm. More generally, all the CGTD methods aim to minimize some quadratic forms of the moment conditions regardless of whether the existence and/or uniqueness of the solution to the conditions holds, and hence usually produce more robust results. Moreover, these methods have a clear geometric interpretation. Recall that the true value function minimizes Bellman's error to  zero. The space of approximate linear functions may not contain the true function, but the CGTD algorithms minimize the {\it projection} of Bellman's error onto the linear space. This intuition is formalized in
\citet{sutton2009fast} and \citet{maei2009convergent}, who show that the discrete-time GTD minimizers, instead of directly approximating the value function, minimize  the so-called \textit{mean-square projected Bellman's error} (MSPBE). Here, we present a continuous-time version of the result with a more general choice of the test functions.

\begin{theorem}
\label{thm:projected msbe}
For $L'$ linearly independent test functions $\xi^{\theta,(1)}, \cdots, \xi^{\theta,(L')} \in L^2_{\f}([0,T])$, 
denote by $\Pi_{\theta}$  the projection operator onto the linear space spanned by $\{ \xi^{\theta,(1)}, \cdots, \xi^{\theta,(L')}  \}$. Then
\[
\begin{aligned}
& \frac{1}{2}\E\left[\int_0^T \xi_t^{\theta} \dd M^{\theta}_t\right]^\top\left[\E\int_0^T \xi_t^{\theta} (\xi_t^{\theta})^\top \dd t\right]^{-1}\E\left[\int_0^T \xi_t^{\theta} \dd M^{\theta}_t\right] \\
=& \frac{1}{2}\E\left[\int_0^T \big(\mathcal{L} J^{\theta}(t, X_{t}) + r_{t}\big)\xi_t^{\theta}\dd t \right]^\top \left[\E\int_0^T \xi_t^{\theta} (\xi_t^{\theta})^\top \dd t\right]^{-1} \E\left[\int_0^T \big(\mathcal{L} J^{\theta}(t, X_{t}) + r_{t}\big)\xi_t^{\theta}\dd t \right] \\
= & \frac{1}{2}|| \Pi_{\theta} \big(\mathcal{L} J^{\theta}(\cdot, X_{\cdot}) + r_{\cdot} \big)  ||_{L^2}^2 =: \operatorname{MSPBE}(\theta).
\end{aligned}   \]

\end{theorem}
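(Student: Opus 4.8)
The plan is to establish the two equalities in the displayed chain separately. The first equality---replacing $\dd M^\theta_t$ by $\big(\mathcal{L}J^\theta(t,X_t)+r_t\big)\dd t$ inside each expectation---is immediate from Proposition \ref{lemma:martingale characterization test function}, applied componentwise to the vector-valued test function $\xi^\theta=(\xi^{\theta,(1)},\dots,\xi^{\theta,(L')})^\top$. Indeed, that proposition gives $\E\int_0^T \xi^{\theta,(k)}_t\,\dd M^\theta_t=\E\int_0^T \xi^{\theta,(k)}_t\big(\mathcal{L}J^\theta(t,X_t)+r_t\big)\dd t$ for each $k$, so the two vectors $\E\int_0^T \xi^\theta_t\,\dd M^\theta_t$ and $\E\int_0^T\big(\mathcal{L}J^\theta(t,X_t)+r_t\big)\xi^\theta_t\,\dd t$ coincide; substituting one for the other in the quadratic form yields the first equality.

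For the second equality I would work inside the Hilbert space $L^2_\f([0,T])$ with inner product $\langle\kappa,\eta\rangle=\E\int_0^T\kappa_t\eta_t\,\dd t$, and write $g:=\mathcal{L}J^\theta(\cdot,X_\cdot)+r_\cdot$, which lies in $L^2_\f([0,T])$ by Assumption \ref{ass:regularity}. The projection $\Pi_\theta g$ onto $\mathrm{span}\{\xi^{\theta,(1)},\dots,\xi^{\theta,(L')}\}$ is the unique element $\sum_{k} c_k\,\xi^{\theta,(k)}=c^\top\xi^\theta$ characterized by the normal equations $\langle g-\Pi_\theta g,\xi^{\theta,(j)}\rangle=0$ for every $j$. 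Writing $G:=\E\int_0^T\xi^\theta_t(\xi^\theta_t)^\top\,\dd t$ for the Gram matrix, whose $(j,k)$ entry is $\langle\xi^{\theta,(j)},\xi^{\theta,(k)}\rangle$, and $v:=\E\int_0^T g_t\,\xi^\theta_t\,\dd t$ for the vector with entries $\langle g,\xi^{\theta,(j)}\rangle$, the normal equations read $v=Gc$, hence $c=G^{-1}v$; linear independence of the test functions guarantees that $G$ is invertible.

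It then remains to compute the squared norm of the projection. Since $\Pi_\theta g=c^\top\xi^\theta$, I obtain $\|\Pi_\theta g\|_{L^2}^2=\langle c^\top\xi^\theta,\,c^\top\xi^\theta\rangle=c^\top G\,c=(G^{-1}v)^\top G\,(G^{-1}v)=v^\top G^{-1}v$, using the symmetry of $G$. Multiplying by $\tfrac12$ and substituting back the expressions for $v$ and $G$ reproduces exactly the middle quadratic form, closing the chain. The argument is essentially finite-dimensional linear algebra once the first equality is in hand; the only points requiring care are verifying that $g\in L^2_\f([0,T])$ so that $\Pi_\theta g$ is well defined, and that $G$ is nonsingular under the stated linear-independence hypothesis---both guaranteed by Assumption \ref{ass:regularity} and the assumptions of the theorem. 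I do not anticipate any substantial obstacle beyond careful bookkeeping of the vector/matrix notation.
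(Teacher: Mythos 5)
Your proposal is correct and follows essentially the same route as the paper's proof: the first equality via Proposition \ref{lemma:martingale characterization test function}, and the second via the normal equations for the projection, with the Gram matrix identity $v^\top G^{-1}v = c^\top G c = \|\Pi_\theta g\|_{L^2}^2$ (the paper writes $A^\theta$ and $\alpha(\theta)$ where you write $G$ and $c$). Your explicit remarks on the invertibility of the Gram matrix and on $g\in L^2_{\f}([0,T])$ are sensible bookkeeping that the paper leaves implicit.
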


Recall Example \ref{eg:toy 1 continued wrong para 3} in which
the moment condition admits no solution due to the choice of the test function and hence CTD methods such as CTD(0) will not converge.
We now illustrate that CGTD does lead to a solution that is the minimizer of  MSPBE.
\setcounter{eg}{4}
\begin{eg}[Continued]
{\rm Consider the same learning problem in Example \ref{eg:toy 1}  with the parameterized value function $J^{\theta}(t,x) = x + (1-t)e^{\theta x - \frac{1}{2}\theta^2 t}[(\theta + 1)^2 + 1]$. Recall $X_t=W_t$ is a Brownian motion.
%
If we choose the test function to be the constant 1, the projection of $e^{\theta W_t - \frac{1}{2}\theta^2 t}[(\theta + 1)^2 + 1]$ onto the subspace spanned by the test function 1 is $(\theta + 1)^2 + 1$. Theorem \ref{thm:projected msbe} yields that CGTD2 or CTDC algorithms should  minimize $ \E\int_0^1 \big| (\theta + 1)^2 + 1 \big|^2 \dd t $, whose minimizer is $-1$. We implement CTD(0) and CGTD2 (along with ML)  and show the results in Figure \ref{fig:toy 1 wrong para 3}. In our implementation, the initial learning rate for all the three algorithms is 0.01 and decays as $(\sharp\text{episode})^{-0.67}$.
The results confirm our theoretical analysis.}
\end{eg}

%
%

\begin{figure}[h]
\centering
\includegraphics[width=0.5\textwidth]{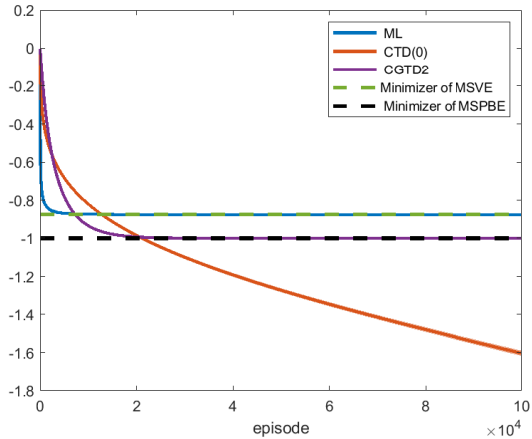}
\caption{\textbf{Minimizing  martingale loss function and CGTD methods converge to the respective minimizers, while CTD(0) diverges, for Example \ref{eg:toy 1 continued wrong para 3}.} Applying SGD to minimize the martingale loss function leads to $\theta^*_{\operatorname{ML}} \approx -0.875301$, CTD(0) does not converge since there is no solution to the moment condition, and CGTD method leads to $\theta^*_{\operatorname{MSPBE}} = -1$, which is equivalent to minimizing the mean-square projected Bellman's error. We repeat the experiment for 100 times to calculate the standard deviations, which are represented as the shaded areas. The width of each shaded area is twice the corresponding standard deviation.}
\label{fig:toy 1 wrong para 3}
\end{figure}

\section{Extensions and Applications}
\label{sec:applications}
In this section, we extend the martingale characterization to the case with an exponential discount factor, and discuss the non-Markovian setting through an example
of a fractional Brownian motion.  Then we present two applications -- a problem with option-like payoff and a linear-quadratic problem in infinite time horizon.
\subsection{Extension to discounted case}
\label{sec:discount}
In many applications the payoff functions involve discounting. We now extend our analysis for PE to such a case. 
Note that, in this case, if we let $T \rightarrow \infty$, then we will have an infinite time horizon problem.

We modify the value function of (\ref{J}) to
\begin{equation}
\label{Jd}
\begin{aligned}
J(t,x) = & \E\left[\int_t^T e^{-\rho (s-t)}r(s,X_s)\dd s + e^{-\rho (T-t)}h(X_T)\Big|X_t= x\right],
\end{aligned}
\end{equation}
where the discount rate $\rho>0$ is known and given.

The PDE (\ref{eq:pde characterization}) is revised to
\begin{equation}
\label{eq:pde characterization discount}
\left\{\begin{array}{l}
\mathcal{L}J (t,x) + r\big(t,x\big) = \rho J(t,x),\; (t,x)\in [0,T)\times \mathbb{R}^d,\\
J(T,x) = h(x),
\end{array}\right.
\end{equation}
and the FBSDE (\ref{eq:fbsde}) becomes
\begin{equation}
\label{eq:fbsded}\left\{\begin{array}{l}
\dd X_s= b(s,X_s)\dd s + \sigma(s,X_s) \dd W_s,\
s\in[t,T]; \;\;X_{t} = x,\\

\dd Y_s = -e^{-\rho (s-t)}r\big(s,X_s\big)\dd s + Z_s\dd W_s,s\in[t,T]; \ Y_T = e^{-\rho (T-t)}h(X_T),
\end{array}\right.
\end{equation}
whereas the relationship (\ref{eq:link}) is now
\begin{equation}
\label{eq:linkd}
Y_s=e^{-\rho (s-t)}J(s,X_s),\;\;Z_{s} =  e^{-\rho (s-t)}\frac{\partial J}{\partial x}(s,X_s)^\top\sigma\big(s,X_s\big),\;\;s\in[t,T].
\end{equation}
Finally
\begin{equation}
\label{eq:md} M_s: = e^{-\rho (s-t)}J(s,X_s) + \int_{t}^s e^{-\rho (s'-t)}r\big(s',X_{s'}\big)\dd s'
\equiv Y_s+\int_{t}^s e^{-\rho (s'-t)}r\big(s',X_{s'}\big)\dd s',\;\;s\in[t,T]
\end{equation}
is a martingale.

Fixing the initial time $t=0$, the above analysis suggests that the martingale loss function should be
\[
\begin{aligned}
\E \int_0^T |M_T - M_t^{\theta}|^2 \dd t  =
&\E \int_0^T \bigg|e^{-\rho T}h(X_T) - e^{-\rho t}J^{\theta}(t,X_{t})+\int_t^Te^{-\rho s}r\big(s,X_s\big)\dd s\bigg|^2 \dd t\\
\approx & \E\bigg[  \sum_{i=0}^{K-1} \bigg( e^{-\rho T}h(X_{T}) - e^{-\rho t_i}J^{\theta}(t_i,X_{t_i}) + \sum_{j=i}^{K-1} e^{-\rho t_j}r(t_{j}, X_{t_j}) \Delta t \bigg)^2 \Delta t \bigg].
\end{aligned}.
\]

On the other hand,  $\dd M^{\theta}_t = e^{-\rho t}\dd J^{\theta}_t - \rho e^{-\rho t}J^{\theta}_t \dd t + e^{-\rho t}r_t\dd t = e^{-\rho t}( \dd J^{\theta}_t + r_t\dd t- \rho J^{\theta}_t \dd t  )$; hence the martingale orthogonality condition   (\ref{eq:martingale difference}) is modified to
\begin{equation}
\label{eq:martingale difference discount}
\E\int_0^{T} \xi_t( \dd J^{\theta}_t + r_t\dd t- \rho J^{\theta}_t \dd t  )  = 0,
\end{equation}
for any test function $\xi\in L^2_{\f}([0,T])$. Note here the discount factor has been absorbed by the test function and thus omitted.

If we set $T = \infty$ in  \eqref{Jd} and assume the payoff does not depend on $t$, then the problem becomes
\[ J(x) = \E\left[\int_0^{\infty} e^{-\rho t} r(X_t) \dd t|X_0 = x\right]. \]
This type of problems occur when the time horizon is sufficiently long or indefinite. Note that in this case the value function does not depend on time explicitly. As a result, there is no longer a terminal condition; instead, it is usually replaced by a growth condition such as $\E[e^{-\rho t}J(X_t)]\to 0$ as $t\to \infty$.

As the martingale loss function requires full trajectories, it may not be directly applicable for the infinite-horizon problem where  we are obviously not able to observe a whole sample until ``infinity". However,  the martingale loss function can still be  defined by truncating at a sufficiently long time $T$ with an artificial terminal condition $e^{-\rho T} h(X_T) = 0$. Therefore, in the episodic setup with repeatedly presented  finite training sets, we can still learn the value function by minimizing the martingale loss function.  However, with a very long horizon, people are more interested in online learning with no reset by observing a single trajectory. As a result, TD learning is a better choice. All the previously discussed  TD learning methods with suitable test functions can be applied based on the conditions \eqref{eq:martingale difference discount}.

\subsection{Extension to non-Markovian setting}
\label{sec:non-markov}
A key assumption of the current paper is that the state process $X$ is Markovian. Indeed, the Markov property determines that the value function $J$, defined through (\ref{J}), is a function of the current state $x$, instead of the whole past history of $X$. However, the martingale perspective may extend beyond the Markovian setting.
While a general non-Markovian PE theory goes beyond the scope of this paper and warrants a full separate research, here we use an example to illustrate.

Recall in Example \ref{eg:toy 1}, the state process $X=W$ is  a Brownian motion, which is Markovnian. Now we consider instead a {\it fractional} Brownian motion $W^H$ with the Hurst index $H$. When $H=\frac{1}{2}$, $W^H$ reduces to a Brownian motion; but when $H\neq \frac{1}{2}$, it is well known to be a non-Markov process. For basic theory and  applications  of fractional Brownian motions, see e.g.  \cite{mandelbrot1968fractional}.

For a non-Markov process, the value function is a functional of the current time $t$ and the entire state trajectory up to $t$. For example,
\[ \E[W_T^H|\f_t] = W_t^H - \int_0^t \Psi_H(T,s|t)\dd W_s^H, \]
where
\[ \Psi_H(T,s|t) = -\frac{\sin\left( \pi(H-\frac{1}{2}) \right)}{\pi}s^{\frac{1}{2} - H}(t-s)^{\frac{1}{2} - H} \int_t^T  \frac{z^{H - \frac{1}{2}} (z-t)^{H - \frac{1}{2}}}{z - s} \dd z; \]
see \cite{sottinen2017prediction}.

For Example \ref{eg:toy 1} with $X_t=W_t$ replaced by $X_t = W^H_t$, the only modification we need is to introduce the value  function $J(t,X_{t\wedge \cdot})$ that is a functional of the  past trajectory with the terminal condition
$J(1,X_{1\wedge \cdot})=\E[X_1|X_{1\wedge \cdot}]=X_1$. 
In our experiment, we use a two-layer fully connected neural network plus an LSTM type of recurrent neural network to approximate such a path-dependent functional satisfying the terminal condition:
\[ J^{\theta}(t, X_{t\wedge \cdot}) = X_t + (1-t) \operatorname{NN}^{\theta}\left( \operatorname{LSTM}^{\theta}(t, X_{t\wedge \cdot}) \right), \]
and then minimize the martingale loss function or apply the CTD(0) algorithm to update the parameters. Here $\operatorname{LSTM}^{\theta}$ maps a sequence of time-series data $(X_{t_0},\cdots,X_{t_k},\cdots,X_{t_K})$ to a sequence of output $(Y_{t_0},\cdots,Y_{t_k},\cdots,X_{t_K})$ recursively where the time-$t_k$ output $Y_{t_k}$  depends on the past trajectory  $(X_{t_0},\cdots,X_{t_k})$. For  details of this network structure, see \citet{hochreiter1997lstm}.

We then compare the mean-square value errors between the learned functional and the ground truth solution with both algorithms. Figure \ref{fig:fbm} shows the trend of convergence  of both the
ML and CTD(0) methods, although the convergence is not as sharp as in the other Markov examples (most likely due to the non-Markovian setting and a fairly general neural network used). We also see that CTD(0) performs better than ML in this case.

\begin{figure}[h]
\centering
\includegraphics[width=0.5\textwidth]{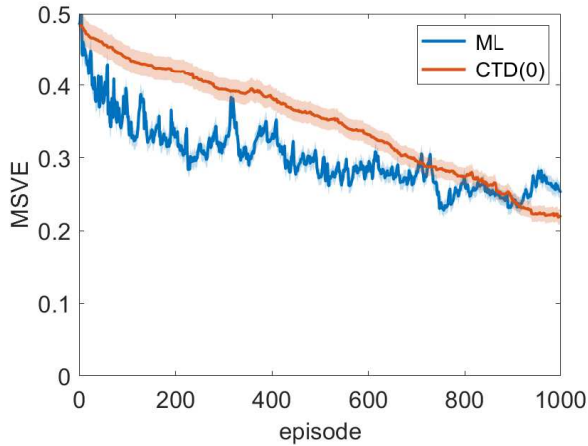}
\caption{\textbf{The mean-square value errors of the learned value functional with ML and CTD(0) algorithm for a fractional Brownian motion with Hurst index $H =0.75$}. The MSVEs are  evaluated with 1000 independent trajectories and standard deviations are computed, which are represented as the shaded areas. The width of each shaded area is twice the corresponding standard deviation.}
\label{fig:fbm}
\end{figure}

\subsection{Option-like payoff }\label{OP}
We apply the theory developed so far to evaluate
\begin{equation}\label{op}
J(t,x) = \E[e^{-r(T-t)}h(X_T)|X_t = x] ,
\end{equation}
where $X=\{X_t,\;0\leq t\leq T\}$ is the state process. This type of evaluation occurs in option pricing in which $X$ is the underlying stock price process and $h$
is the payoff function (usually known and given) at the maturity $T$.

In our simulation, we generate $X$ from
a geometric Brownian motion
$$\frac{\dd X_t}{X_t} = (r-q)\dd t + \sigma \dd W_t,$$
and take $h$ as a call option payoff
$$h(x) = (x - K)^+.$$ Moreover, we set $T = 1, K = 1$, $r=0.01,\ q = 0$, $\sigma = 0.3$. The price process is generated from $X_0 = 1$.

The value function has a theoretical (ground truth) form given by the Black--Scholes formula
\[ J(t,x) = e^{-r(T-t)}[ e^{(r-q)(T-t)}x \Phi(d_+) - K \Phi(d_-) ], \]
where
\[d_{\pm} = \frac{\log(x/K) + (r-q\pm \frac{1}{2}\sigma^2)(T-t)}{\sigma \sqrt{T-t}},  \]
and $\Phi$ is the distribution function of the standard normal.

For learning the value function in our numerical experiment,
we parameterize it by $J^{\theta}(t,x) =  (x - K)^+ + (T-t) \operatorname{NN}^{\theta}(t,x)$, where $\operatorname{NN}^{\theta}$ is a general bivariate neural network  taking both time and space as inputs. This particular form is inspired by that of the payoff function along with the fact that time to maturity, $T-t$, is critical in pricing an option.
We use both ML and online CTD(0) in our experiment, with the martingale loss function being \[ \E\int_0^T \big| e^{-r T}(X_T-K)^+ - e^{-r t}J^{\theta}(t,X_t) \big|^2 \dd t.  \]


In our implementation, we use a simple three-layer fully connected neural network with softplus activation function and 128 and 64 neurons, that is,
\[ \operatorname{NN}^{\theta}(u) = \theta_5 a\bigg( \theta_3 a\big( \theta_1 u + \theta_2\big) + \theta_4\bigg) + \theta_6,\ a(x) = \log(1 + e^x) , \]
where $\theta_1\in \mathbb{R}^{128 \times 2}, \theta_2\in \mathbb{R}^{128 \times 1}$, $\theta_3\in\mathbb{R}^{64 \times 128}, \theta_4\in \mathbb{R}^{64 \times 1}$, $\theta_5\in\mathbb{R}^{64 \times 1},\theta_6\in \mathbb{R}$.

Since now we have hundreds of parameters and the functional forms are complex, instead of comparing the learned parameters, we assess the performance of learning by the following three errors:
\[ J(0,x_0) - J^{\theta}(0,x_0),\;\E\int_0^T \big| J(t,X_t) - J^{\theta}(t,X_t) \big|^2 \dd t , \  \E\int_0^T \big| \frac{\partial J}{\partial x}(t,X_t) - \frac{\partial J^{\theta}}{\partial x}(t,X_t) \big|^2 \dd t ,\]
where $\theta$ is the vector of optimized parameters obtained, and $J$ is the ground truth value function.
In these errors, the first one is in terms of price difference at the initial time $t=0$, and the second one in terms of the averaged total price differences over time. The last one concerns the accuracy in determining $\frac{\partial J}{\partial x}$, the so-called ``Delta" of the option which is the quantity of the underlying stock needed to hedge the option risk.

The PDE (\ref{eq:pde characterization discount}) satisfied by $J$ is nothing else than the  well-known Black-Scholes PDE:
\[ \frac{\partial J}{\partial t} + (r-q)x\frac{\partial J}{\partial x} + \frac{1}{2}\sigma^2x^2\frac{\partial^2 J}{\partial x^2} - rJ = 0,\ J(T,x) =(x-K)^+. \]	
Hence, as discussed earlier, PE can also be regarded as an alternative method to solve such a PDE. This in turn presents a  benchmark in our experiment for comparison purpose, which is the deep learning method in \citet{han2018solving} called the {\it deep BSDE method} for solving PDEs. Note that their method requires the perfect knowledge about the model parameters or, equivalently, not only samples of $\{X_t,0\leq t\leq T\}$ but also samples of $\{ W_t, 0\leq t\leq T\}$ that drives the state process. When implementing the deep BSDE method, we apply a neural network with the same structure to keep the computational cost at the similar level. We use the Adam algorithm for optimization with one trajectory for each episode so that the number of training sample trajectories is also kept the same.\footnote{The Adam algorithm is proposed in \citet{kingma2014adam} and is considered to be an improvement over the vanilla SGD algorithm. We follow \citet{han2018solving} to apply the Adam algorithm. Implementation of neutral networks is through Tensorflow 2. All the  computations are conducted on an Intel(R) Core(TM) i7-1065G7 CPU @ 1.30GHz 1.50 GHz Windows laptop.}

Figure \ref{fig:call option} shows the comparison. For the first two criteria, the errors by the two PE methods developed in this paper, (offline) ML and online CTD(0), both converge to zero very quickly, while it takes some time for those with the BSDE method to be close to zero. For the last criterion, the errors by the PE methods remain close to zero and keeps almost flat from the start, while the BSDE method oscillates dramatically at the start before converging to 0. Indeed, we have shown that minimizing the martingale loss function is equivalent to approximating the value function itself in the mean--square sense, without concerning at all the derivatives of the function. In contrast, the deep BSDE method strives also to learn the derivative term (the $Z_t$ term in FBSDE \eqref{eq:fbsded}) directly and hence requires more knowledge about the system. This example shows that PE methods can be used to learn the function itself effectively but may not provide an accurate approximation to the derivative value. In particular, in terms of estimating the value function,  ML achieves the smallest error and CTD(0) is only slightly behind due to its online setting; but deep BSDE can ultimately learns the derivative. As such, PE methods provide more flexibility for users with tasks such as solving a PDE. It also suggests that for continuous-time RL one should avoid methods relying on the derivatives of the estimated value function.

Finally, we point out that the purpose of this example is to compare our methods with the deep PDE/BSDE method. Because (\ref{op}) holds in the risk-neutral world where data cannot be actually observed, our method cannot be used directly to evaluate an option price. It remains an interesting problem to price options based on physical probability and the real-world data.

\begin{figure}[h]
\centering
\begin{subfigure}{0.32\textwidth}
\centering
\includegraphics[width = 1\textwidth]{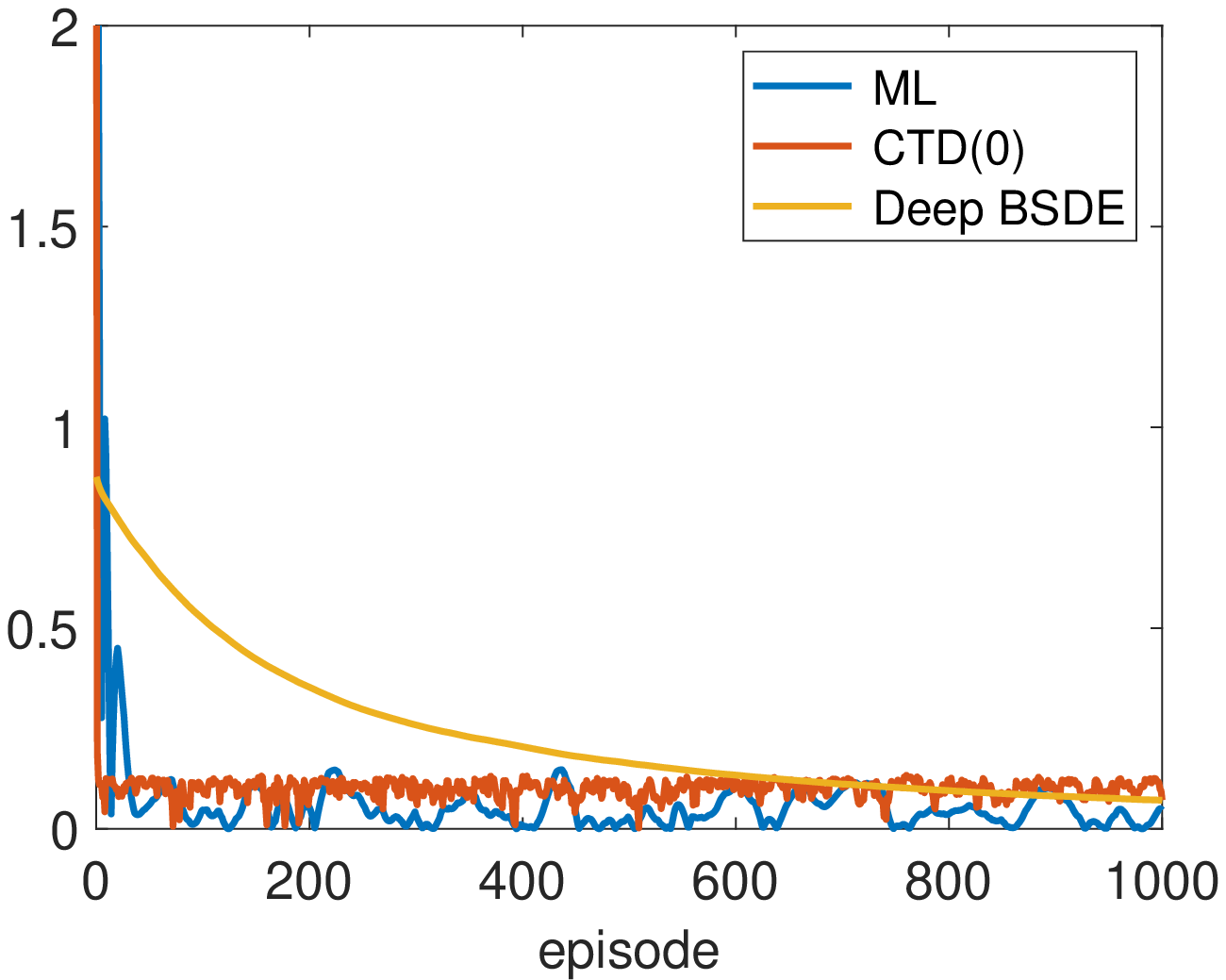}
\end{subfigure}
\begin{subfigure}{0.32\textwidth}
\centering
\includegraphics[width = 1\textwidth]{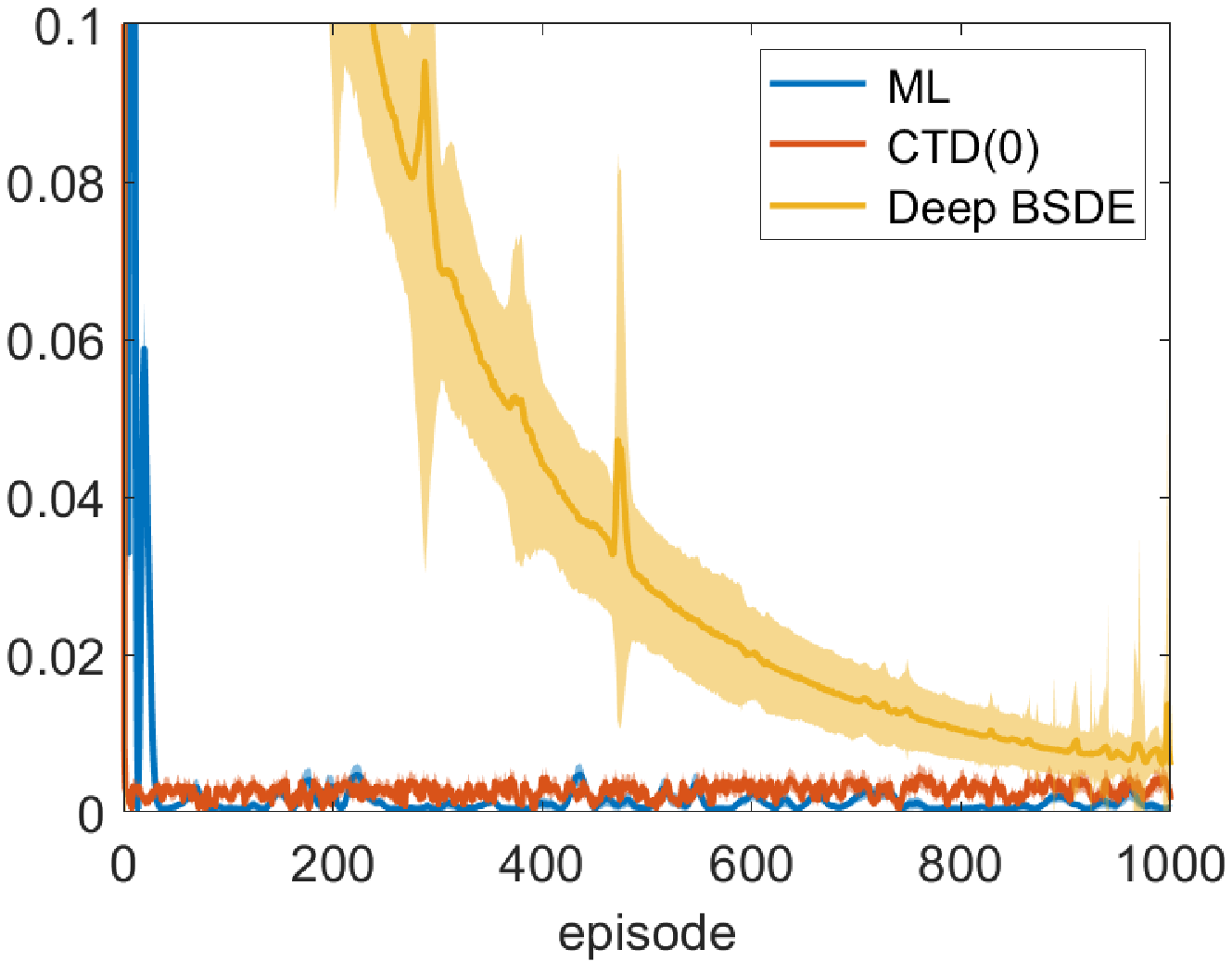}
\end{subfigure}
\begin{subfigure}{0.32\textwidth}
\centering
\includegraphics[width = 1\textwidth]{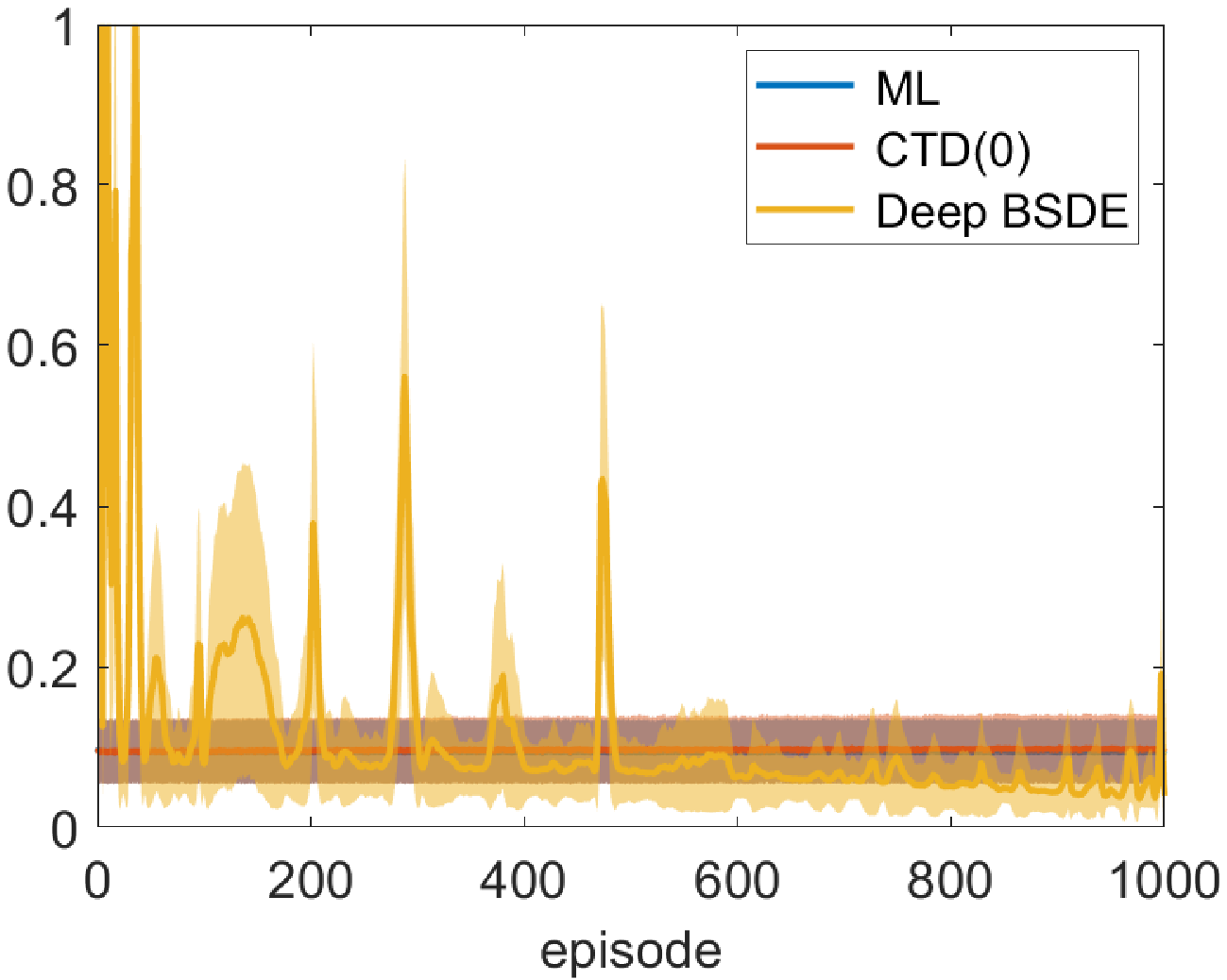}
\end{subfigure}
\caption{{\bf Comparison of learned value functions by ML, online CTD(0) and deep BSDE methods.} From left to right, we show the errors against the true solutions in terms of $|J(0,x_0) - J^{\theta}(0,x_0)|$, $\E\int_0^T \big| J(t,X_t) - J^{\theta}(t,X_t) \big|^2 \dd t$, $\E\int_0^T \big| \frac{\partial J}{\partial x}(t,X_t) - \frac{\partial J^{\theta}}{\partial x}(t,X_t) \big|^2 \dd t$ respectively. These expectations are evaluated using 5000 independent trajectories. Standard deviations are represented as the shaded areas. The width of each shaded area is twice the corresponding standard deviation.}
\label{fig:call option}
\end{figure}

\subsection{Infinite time horizon linear-quadratic problem}
\label{sec:lq}
Consider the following value function 
\[ J(x) = \E\left[\int_0^{\infty} e^{-\rho t} r(X_t) \dd t|X_0 = x\right], \]
where $X=\{X_t,\;0\leq t<\infty\}$ is the state process. In our simulation, we generate $X$ from an Ornstein–Uhlenbeck (OU) process
$$\dd X_t = a(b-X_t)\dd t + \sigma \dd W_t,$$
which is well-known to converge to its unique stationary distribution when $a>0$ and $\sigma>0$. And we take $r$ to be a quadratic function:
$$r(x) = \frac{1}{2}x^2 + q x.$$
This is a discounted linear-quadratic (LQ) control problem in infinite time horizon.

By the standard stochastic control theory  via dynamic programming \citep[Chapter 6]{YZbook} we
can compute the value function  explicitly as  $J(x) = \frac{1}{2}Ax^2+Bx + C$, where
\[ A = \frac{1}{\rho + 2a},\ B = \frac{abA + q}{\rho + a},\ C = \frac{abB + \frac{1}{2}\sigma^2A}{\rho} . \]
We set $a = 1,\ b=1,\  \sigma=0.5,\ \rho=1.5,\ q = 1$, $X_0=0$, and simulate the trajectories up to $T = 2\times 10^5$.
In our experiment, we parameterize the value function by $J^{\theta}(x) = \frac{1}{2}\theta_0 x^2 + \theta_1 x+\theta_2$.
We implement CTD(0), CLSTD(0), and CGTD2 algorithms and report the results in Figure \ref{fig:infinite lq}. Since the parametric functions lie in a linear space, CLSTD(0) explicitly solves the moment conditions, and hence converges fastest. CGTD2 converges faster than CTD(0) because the former is a true gradient-based algorithm.


%
\begin{figure}
\centering
\begin{subfigure}{0.32\textwidth}
\centering
\includegraphics[width = 1\textwidth]{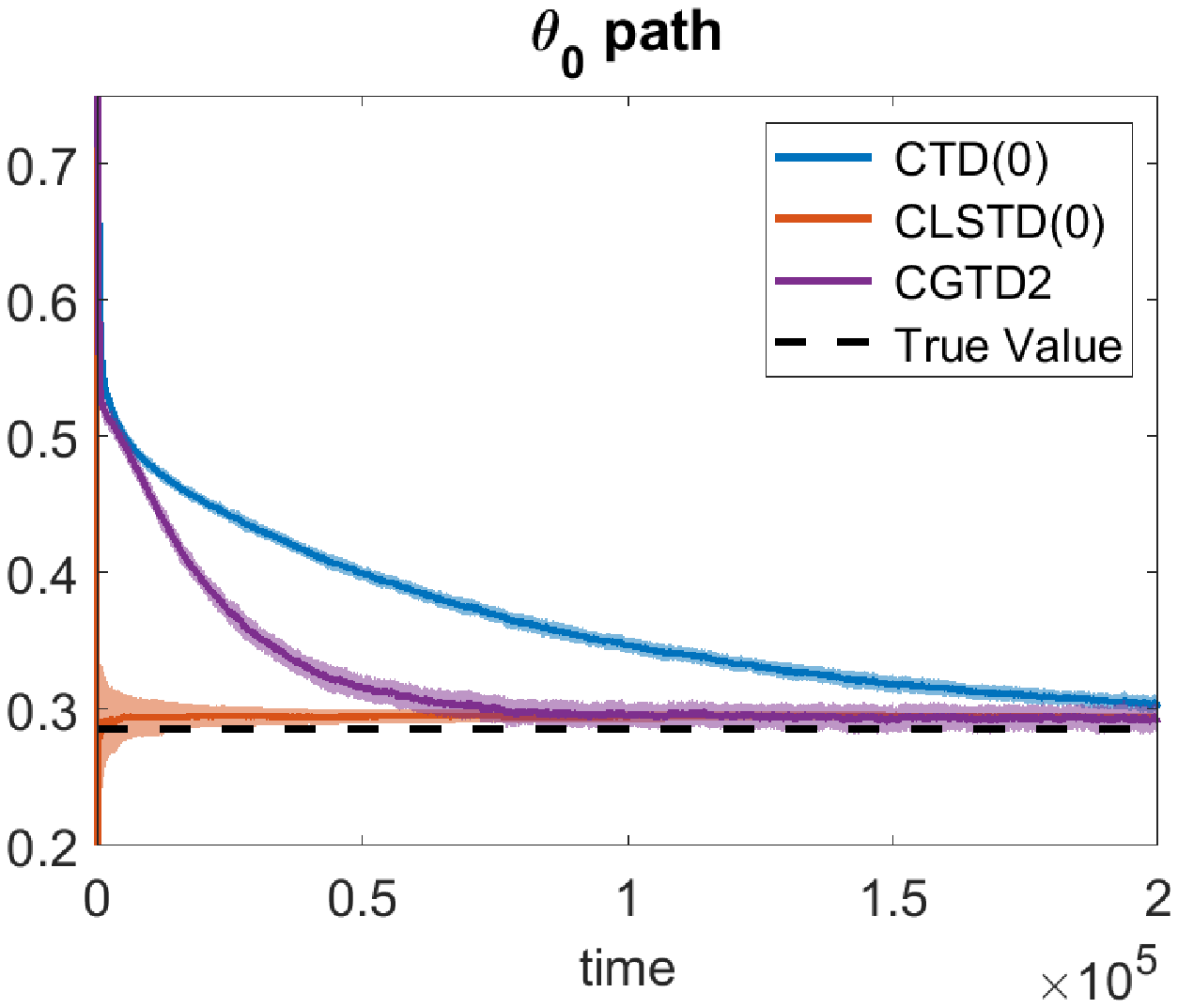}
\end{subfigure}
\begin{subfigure}{0.32\textwidth}
\centering
\includegraphics[width = 1\textwidth]{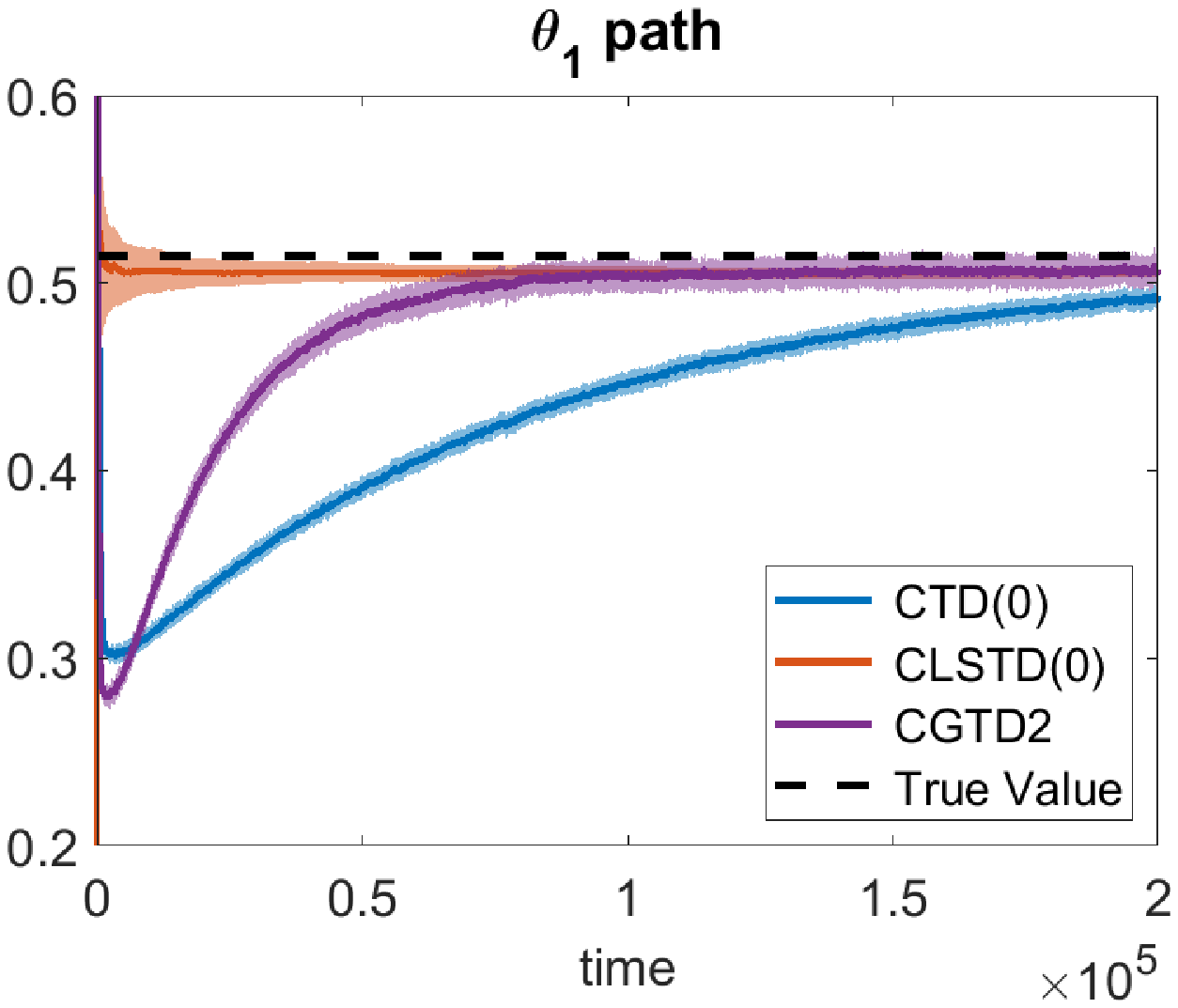}
\end{subfigure}
\begin{subfigure}{0.32\textwidth}
\centering
\includegraphics[width = 1\textwidth]{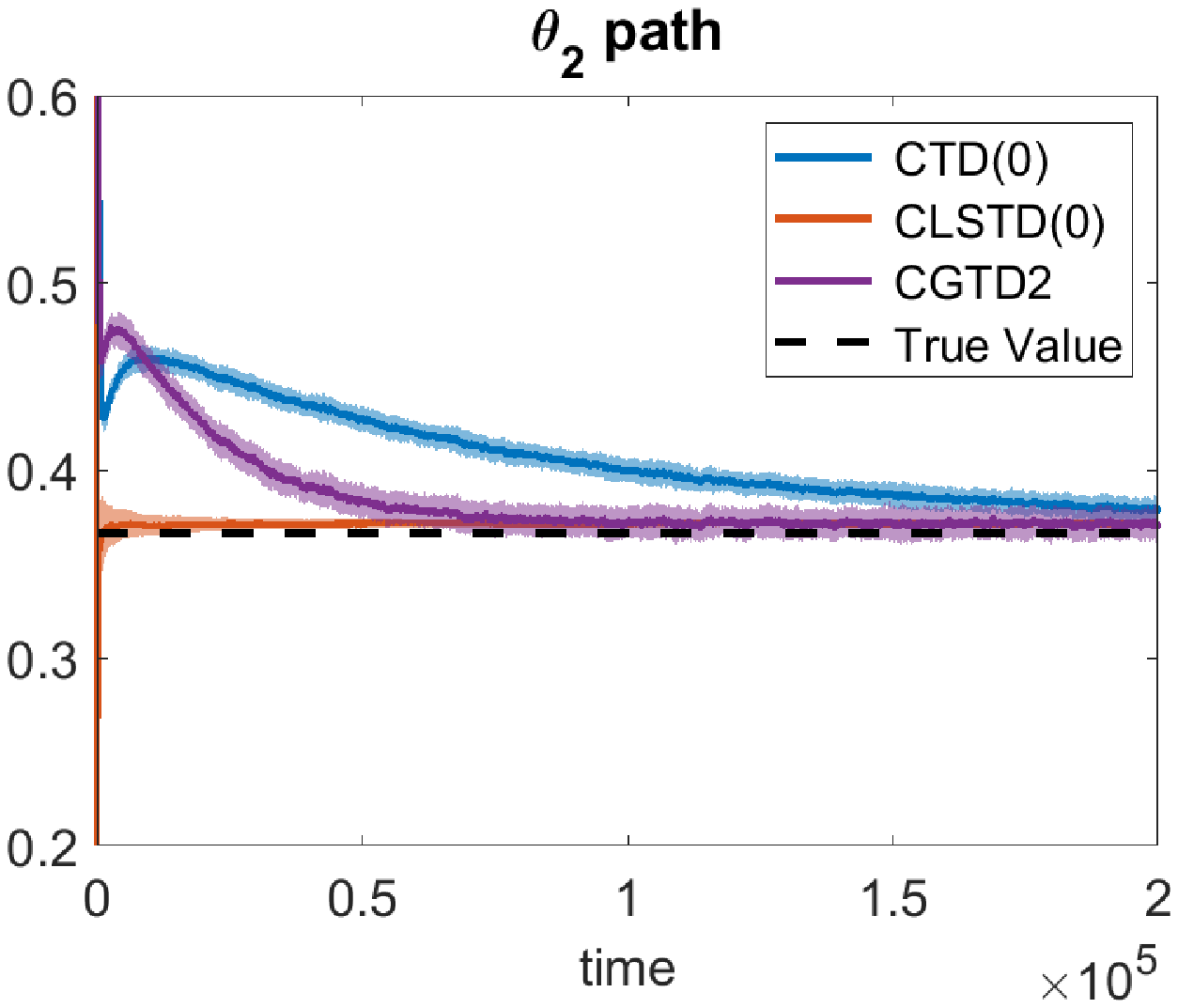}
\end{subfigure}
\caption{{\bf Comparison of learned parameters with different online TD algorithms.} All the algorithms converge to the correct value function. Among them, CLSTD(0) converges the fastest and CTD(0) the slowest. We repeat the experiment for 100 times to calculate the standard deviations, which are represented as the shaded areas. The width of each shaded area is twice the corresponding standard deviation.}
\label{fig:infinite lq}
\end{figure}

\section{Some Algorithmic Aspects}\label{algorithmic}

In this section we discuss two problems from the algorithmic perspective: the choice of test functions and the way to perform function approximation.

\subsection{Choice of test functions}
\label{sec:test function}
One of the most important implications of the martingale viewpoint is the introduction of the test functions.
In Subsection \ref{sec:orthogonality}, we show that  the choice of test functions determines in what sense the true value function is approximated and, hence, a same algorithm with different  test functions may converge differently, as illustrated in Examples \ref{eg:toy 1 continued wrong para} -- \ref{eg:toy 1 continued wrong para 3}. While this characterization remains abstract in theory and provides  little guidance on how to actually select test functions, here we present  a simple example to demonstrate  how test functions may affect the convergence from an algorithmic perspective. 

Consider the same LQ  problem in Subsection \ref{sec:lq}, with $a = 0$,  $\sigma=1$, and $q=0$. 
In this case, $X_t = X_0 + W_t$ now becomes a Brownian motion, which has no stationary distributions.\footnote{Such non-stationary processes are common in practice. Due to the presence of the discount factor, the corresponding LQ problem is still well-posed.}

As before, we parameterize the value function by $J^{\theta} = \frac{1}{2\rho}x^2 + \theta$. This parametric family contains the true value function with $\theta_{\text{true}} = \frac{1}{2\rho^2}$. The conventional choice of the test function in  TD(0) is $\xi_t = \frac{\partial J^{\theta}}{\partial \theta}(X_t) = 1$, leading to the following  updating rule on  $\theta$:
\[  \theta \leftarrow \theta + \alpha[J^{\theta}(X_{t+\Delta t}) - J^{\theta}(X_t) + r(X_t)\Delta t -\rho J^{\theta}(X_t) \Delta t ] .\]
Denote by $\theta_t$ the learned parameter value at time $t$. Then, at the continuous-time limit, $\theta_t$ satisfies an SDE (ignoring the learning rate constant $\alpha$):
\[  \dd \theta_t = \dd J^{\theta_t}(X_t) + r(X_t)\dd t - \rho J^{\theta_t}(X_t)\dd t . \]
By It\^o's lemma, $ \dd J^{\theta_t}(X_t) = \frac{X_t}{\rho}\dd W_t + \frac{1}{2\rho}\dd t$; hence
\[ \dd \theta_t = \left(\frac{1}{2\rho} - \rho \theta_t\right)\dd t + \frac{X_t}{\rho}\dd W_t . \]
Suppose the initial guess of $\theta$ is $\theta_0$.  
Then
$$\E[\theta_t] = \frac{1}{2\rho^2} (2\theta_0 \rho^2e^{-\rho t} +1- e^{-\rho t}) \to \frac{1}{2\rho^2} = \theta_{true} \;\;\mbox{ as } t\to\infty.$$
That is, asymptotically, the conventional choice of the test function indeed leads to an unbiased estimate.
Let us now calculate $\operatorname{Var}(\theta_t)$, the variance of $\theta_t$.
Set $z_t = \theta_t - \frac{1}{2\rho^2} (2\theta_0 \rho^2e^{-\rho t} +1- e^{-\rho t})$, which satisfies the SDE:
\[ \dd z_t = - \rho z_t\dd t + \frac{X_t}{\rho}\dd W_t,\ z_0=0. \]
It\^o's lemma provides
\[ \dd z_t^2 = 2z_t[- \rho z_t\dd t + \frac{X_t}{\rho}\dd W_t] + \frac{X_t^2}{\rho^2}\dd t, \ z_t = 0.  \]
Hence
$$\operatorname{Var}(\theta_t) = \E[z_t^2] = \frac{1}{4\rho^4}\left(e^{-2\rho t} - 1 + \rho t - 2\rho X_0^2 e^{-2\rho t} + 2\rho X_0^2  \right) \to \infty, \mbox{ as }t\to \infty.
$$
So, the conventionally chosen test function does not produce a consistent estimator of $\theta$ due to the blow-up in variance, which in turn is  caused by the non-stationarity of the underlying state process -- a Brownian motion in this example -- whose variance grows linearly in time.

However, this issue can be resolved  by selecting a {\it tailored} test function. Recall the CTD(0) algorithm with a general test function $\xi_t$ updates $\theta$ by
\[  \theta \leftarrow \theta + \alpha\xi_t\left[J^{\theta}(X_{t+\Delta t}) - J^{\theta}(X_t) + r(X_t)\Delta t -\rho J^{\theta}(X_t) \Delta t \right] .\]
Applying the same SDE approximation, we derive
\[ \dd \theta_t = \xi_t\left(\frac{1}{2\rho} - \rho \theta_t\right)\dd t + \xi_t\frac{X_t}{\rho}\dd W_t . \]
Intuitively, to reduce the variance of $\theta_t$, we need to choose a test function that can cancel the growing trend in variance. 
There are many choices to achieve this goal, but a simple one is to take $\xi_t = \frac{1}{|X_t| + 1}$ so that the volatility term above is bounded. We call this a ``tailored choice" of test function for this particular LQ problem. The cost of this variance reduction method is the introduction
of some bias in the mean as some correlation enters into the drift term.

Figure \ref{fig:bmlq} visualizes the result of  a simulation study that confirms our analysis. With the conventional test function $\xi_t=1$, even though the average of the learned parameter values across different experiments tends to be close to the true value, these values become more volatile as time grows larger. On the other hand, with our tailored test function $\xi_t = \frac{1}{|X_t| + 1}$, the variance is reduced dramatically, though the average is slightly off from the true value.

Overall, the study we provide in this subsection shows the promise of our martingale framework in designing more efficient algorithms with suitable choice of test functions, which may at the same time extend the existing literature on RL algorithms for MDPs.
\begin{figure}[h]
\centering
\includegraphics[width=0.5\textwidth]{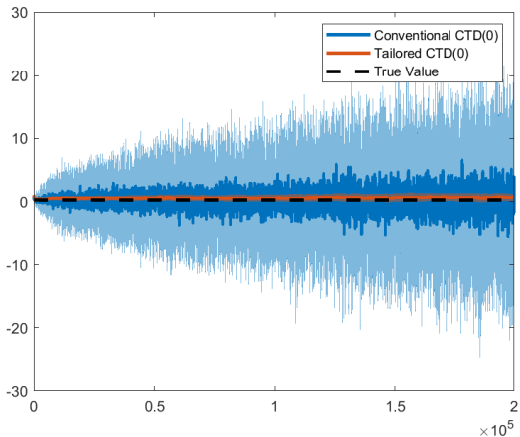}
\caption{\textbf{Comparison of the learned parameters under the conventional test function and the tailored test function}. Conventional CTD(0) refers to the algorithm using test function $\xi_t = \frac{\partial J^{\theta}}{\partial \theta}(X_t) = 1$, and Tailored CTD(0) refers to the one using test function $\xi_t = \frac{1}{|X_t| + 1}$. In the simulation, the problem primitives  are $a=0,\ \sigma=1,\ q = 0,\ \rho = 1.5$, the initial state is $X_0 = 0$ and initial guess of the parameter $\theta$ is $\theta_0 = 1$. The true parameter is $\theta_{\text{true}} = \frac{2}{9} \approx 0.22$. The learning rate is $\alpha=0.1$. We repeat the experiment for 100 times to calculate the standard deviations, which are represented as the shaded areas. The width of each shaded area is twice the corresponding standard deviation.}
\label{fig:bmlq}
\end{figure}

\subsection{Function approximation: global vs sectional}
\label{sec:function approximation}
For a finite-horizon problem, the value function $J(t,x)$ is a {\it bivariate} function of time $t$ and state $x$. Hitherto we have used
a {\it global} approximator $J^{\theta}$ in the sense that we use the same parameter $\theta$ when approximating $J(t,\cdot)$ by $J^{\theta}(t,\cdot)$. Another way of function approximation is {\it sectional}, namely, we approximate $J(t,\cdot)$ by
$J^{\theta_t}(t,\cdot)$ where the parameter $\theta_t$ may be time-varying.
More precisely, let the time discretization be fixed with the grid points $0=t_0 < t_1 <\cdots < t_K = T$, and let $J_0^{\theta_0}( x),\cdots, J_K^{\theta_K}(x)$  approximate the value function at these points, namely, $J_i^{\theta_i}( x) \approx J(t_i,x)$, $i=0,1,\cdots,K$. 

%

To compare these two methods of function approximation, the first thing to note is that the number of parameters to learn grows linearly in the number of time steps with the sectional approximation, while remains the same with the global one.
Hence, the latter has an edge in terms of computational cost when a finer time grid is used. Second, and indeed more importantly, the sectional approximation may become problematic for online learning. 
To see this, suppose we are now at $(t_i,X_{t_i})$ in the online setting.
Applying the idea of the conventional TD(0) algorithm, one can update the parameter $\theta_{i-1}$ by
\[ \theta_{i-1} \leftarrow \theta_{i-1} + \alpha \frac{\partial J_{i-1}^{\theta_{i-1}}}{\partial \theta_{i-1}}(X_{t_{i-1}})\left[ J^{\theta_i}_i(X_{t_i}) -  J^{\theta_{i-1}}_{i-1}(X_{t_{i-1}}) \right] .\]
The question is how to update $\theta_k$ for $k=i,i+1,\cdots, K$ without knowing
the {\it future} states $X_{t_k}$? It seems the best we could do is to update $\theta_k$
according to
%
%
\begin{equation}\label{updatek}
\theta_{k} \leftarrow \theta_{k} + \alpha \frac{\partial J_{k}^{\theta_{k}}}{\partial \theta_{k}}(X_{t_{i-1}})\left[ J^{\theta_i}_i(X_{t_i}) -  J^{\theta_{i-1}}_{i-1}(X_{t_{i-1}}) \right] .
\end{equation}
This form is less intuitive because we use the current and past states to update
parameters for future value functions. In contrast, the global parameterization views the value function as a whole; hence a temporal advancement  naturally leads to an update of  the whole bivariate function, including a prediction into the future as well as an updated evaluation of the past.

Finally, we run a simulation for Example \ref{eg:toy 1} to compare the learning results of the two function approximation approaches, both in offline learning (using martingale loss function) and online learning (using CTD(0) for the global approximation and (\ref{updatek}) for the sectional one). Recall that the ground truth is $J(t,x) = x$, and we have used the global approximation with $J^{\theta}(t,x) = [\theta(1-t)+1]x$. For the sectional approximation, we consider a simple form of $J_i^{\theta_i}(x) = \theta_i x$, with unknown parameters $\theta_0,\cdots,\theta_{K-1}$ while it is known that $\theta_K = 1$ based on the terminal condition.

We evaluate the performance of the different approximation approaches by MSVE as defined in \eqref{eq:value l2 error}. For the global approximation, this error is
\[ \E\int_0^T |J(t,X_t) - J^{\theta}(t,X_t)|^2\dd t = \E\int_0^T \theta^2(1-t)^2 W_t^2\dd t = \frac{1}{12}\theta^2 .   \]
For the sectional approximation, this error is calculated by
\[ \E\sum_{i=0}^{K-1}|J(t_i,X_{t_i}) - J_i^{\theta_i}(X_{t_i})|^2\Delta t  = \sum_{i=0}^{K-1}(\theta_i-1)^2 t_{i} \Delta t . \]

The results are presented in Figure \ref{fig:sliced}. For this simple example, the number of unknown parameters in the sectional approach is small so the difference in computational cost is insignificant. Otherwise, we observe that the global approximation performs similarly as the sectional one  in the offline setting (the ML method), but significantly better in the online setting (the CTD(0) method).


\begin{figure}[h]
\centering
\includegraphics[width=0.5\textwidth]{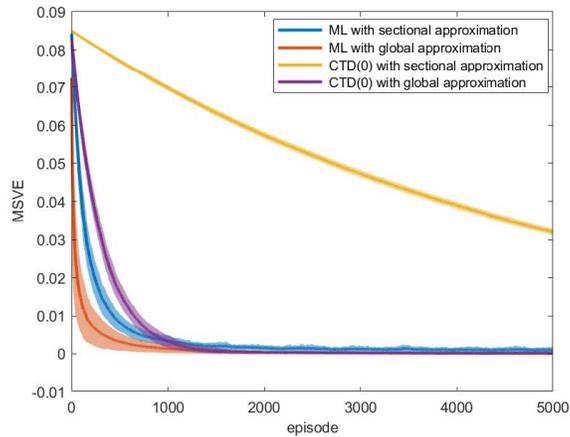}
\caption{\textbf{Comparison of the mean-square value errors of the learned value function using globel and sectional  approximation methods   with ML and CTD(0) algorithms}. The initial guess is $\theta = -1$ for global approximation and $\theta_i = t_i$, $0\leq i\leq K-1$,  for sectional approximation so that the two methods are initialized to be the same function. The learning rate is $\alpha=0.01$. We repeat the experiment for 100 times to calculate the standard deviations, which are represented as the shaded areas. The width of each shaded area is twice the corresponding standard deviation.}
\label{fig:sliced}
\end{figure}

\section{Conclusions}
\label{sec:conclusion}
In this paper, we provide a unified theoretical framework for studying PE in RL with continuous time and space. The  theory is premised upon the observation that PE is equivalent to enforcing
the martingality of a stochastic process. Many existing and popular PE algorithms, which somewhat scatter around in the MDP literature, can find a common ground through this ``martingale lens". These algorithms can be modified for solving PE in the continuous setting for actual implementation.

The martingale perspective is potentially useful for studying many important problems related to PE that have been well developed in the discrete setting but remain open in the continuous setting, including off-policy evaluation, state-action value estimation, and convergence analysis. Furthermore, it
may inspire new questions  that have not been posed by traditional RL research. For example, how to ``optimally" choose test functions and how their choices affect the convergence rate in both discrete and continuous settings.

Finally, PE is formulated for It\^o processes in this paper,  mainly because such a process has convenient and	well-studied  properties and can reasonably model many real-life dynamics. The martingale view, however, is generalizable beyond It\^o processes such as jump diffusions, non-Markov processes and semi-martingales.

\section*{Acknowledgement}
We are grateful for comments from the seminar participants at University of Southern California, Boston University, Imperial College, University of Connecticut,
the International Seminar on SDEs and Related Topics and the Joint Seminar by AIFT and Columbia University, and from the participants at the 6th Berlin Workshop for Young Researchers in Mathematical Finance.
We thank Jerome Detemple, Steven Kou, Moris Strub, Wenpin Tang, 
Renyuan Xu and Jianfeng Zhang for helpful discussions and comments on the paper. We are also indebted to the Action Editor and three anonymous referees for constructive comments which have led to an improved version of the paper.
Zhou gratefully acknowledges financial support through a start-up grant and the Nie Center for Intelligent Asset Management at Columbia University. 

\begin{appendix}

\section*{Appendix A:  A Summary of Popular PE Methods}
\label{appendix:table}
The following Table \ref{tab:summary algorithms} summarizes popular PE methods and algorithms, and the interpretations we have discovered in this paper in terms of objectives (loss/error functions to be minimized or equations to be solved) and limiting points of convergent algorithms.
\begin{table}[h]
\centering
\begin{minipage}[c]{1\textwidth}
	\centering
	\begin{tabular}{ccccc}
		\toprule
		Method & \tabincell{c}{Representative\\algorithms} & Online &  \tabincell{c}{Objective} & \tabincell{c}{Converging\\ point} \\
		\midrule
		\tabincell{c}{Monte Carlo\footnote{\cite{sutton2011reinforcement}.}} &  \tabincell{c}{gradient\\Monte Carlo} & No & \tabincell{c}{minimize\\ martingale\\ loss function} & \tabincell{c}{minimizers of\\ mean-square value\\ function error} \\
		&  &  &   &  \\
		\tabincell{c}{Residual\\ gradient\footnote{\cite{baird1995residual}.}} &  \tabincell{c}{na\"ive\\residual\\ gradient} & Yes  & \tabincell{c}{minimize\\ mean-square\\ TD error} &  \tabincell{c}{minimizers of\\ quadratic variation} \\
		&  &  &    &  \\
		\tabincell{c}{Semi-gradient\\ TD learning\footnote{ \cite{sutton1988learning,bradtke1996linear}. This terminology is taken from \citet[Chapter 9]{sutton2011reinforcement}.}} &  \tabincell{c}{TD($\lambda$)\\ LSTD($\lambda$)} & Yes  & \tabincell{c}{solve\\moment conditions}   & \tabincell{c}{zeros to\\ moment conditions} \\
		&  &  &   &  \\
		\tabincell{c}{Gradient\\ TD learning\footnote{ \cite{sutton2008convergent,sutton2009fast}.} } &  \tabincell{c}{GTD(0)\\ GTD2\\ TDC} & Yes  & \tabincell{c}{minimize\\ quadratic form\\ of moment\\ conditions} & \tabincell{c}{minimizers of\\ mean-square\\ projected\\ Bellman error} \\
		\bottomrule
	\end{tabular}%
\end{minipage}
\caption{\textbf{Summary of popular PE methods in RL literature.} The table summarizes different PE methods. The first three columns indicate the names of the methods, those of the representative algorithms, and whether applicable  online and/or offline. The last two columns reveal the objectives  and the converging  points of the corresponding  algorithms.}
\label{tab:summary algorithms}%
\end{table}%

\section*{Appendix B:  Stochastic Control Formulation of RL}
\label{appendix:control}
Let $d,m,n$ be given positive integers, $T>0$,  and $b: [0,T]\times \mathbb{R}^d\times \mathbb{R}^n \mapsto \mathbb{R}^d$ and $\sigma:
[0,T]\times \mathbb{R}^d\times \mathbb{R}^n\mapsto \mathbb{R}^{d\times m}$ be  given
functions.
A stochastic control problem is to control the {\it state} (or {\it feature}) dynamic governed by  an SDE:
\begin{equation}
\label{eq:model classical2 control}
\dd X_s = b\big( s,X_s,\bm{u}(s, X_s) \big)\dd t + \sigma\big( s,X_s,\bm{u}(s, X_s) \big) \dd W_s,\;s\in[0,T],
\end{equation}
where $\bm{u}:(t,x)\in [0,T] \times \mathbb{R}^d \mapsto \bm{u}(t,x)\in \mathcal{U}$ is a given (measurable) feedback control policy, with $\mathcal{U}\subseteq \mathbb{R}^n$ being  the {\it action space}  representing the constraints on
an agent's decisions ({\it controls} or {\it actions}).

Given a policy $\bm{u}$ and an initial time--state pair $(t,x)\in [0,T]\times \mathbb{R}^d$, let $\{X_s^{t,x,\bm{u}}, t\leq s\leq T\}$ be the solution to (\ref{eq:model classical2 control}) with $X_t=x$.
The \textit{value function} under the policy $\bm{u}$  is
\begin{equation}
\label{J control}
\begin{aligned}
	J(t,x;\bm{u}) = & \E\left[\int_t^T r(s,X_s^{t,x,\bm{u}},\bm{u}(s, X_s^{t,x,\bm{u}}))\dd s + h(X_T^{t,x,\bm{u}})\Big|X_t^{t,x,\bm{u}}= x\right],
\end{aligned}
\end{equation}
where $r: [0,T]\times \mathbb{R}^d\times \mathbb{R}^n \mapsto \mathbb{R}$ and $h:
\mathbb{R}^d\mapsto \mathbb{R}$ are given reward functions.

A policy $\bm{u}$ is called admissible if (\ref{eq:model classical2 control}) has a unique weak solution and (\ref{J control}) is finite for any $(t,x)\in [0,T]\times \mathbb{R}^d$.
A typical RL problem is to maximize (minimize) $J(t,x;\bm{u})$ over all  admissible policies $\bm{u}$. In the classical (model-based) stochastic control literature, the functional forms of $b,\sigma,r$ and $h$ are known, and there are well-developed theories to solve the problem; see, e.g., \citet{YZbook,fleming2006controlled}. However, in the RL context, these functional forms  are typically unknown, although in some applications that of $h$ may be known because it may be interpreted as a given target the agent specifies (e.g. in option pricing $h$ is the payoff function of an option, which is typically given and known; see Subsection \ref{OP}).


The PE task as a part of the general RL problem is, for a given policy $\bm{u}$, to devise a numerical procedure to find $J(t,x;\bm{u})$ as a function of $(t,x)$ using multiple sample trajectories of the process $\{ s,X_s^{t,x,\bm{u}},r\big( s,X_s^{t,x,\bm{u}},\bm{u}(t,X_s^{t,x,\bm{u}}) \big)\}_{t\leq s\leq T}$,
without the knowledge of the model parameters (the functional forms of $b,\sigma,r,h$).


If we suppress $\bm{u}$, which is fixed in PE, then we recover the formulation \eqref{eq:model classical}--\eqref{J}. 
Note that the formulation also covers the ``exploratory" setting of \citet{wang2020reinforcement} in which the admissible control policies are probability-distribution-valued, because the value function therein is of the same form as \eqref{J control} under a fixed distributional control policy.

\section*{Appendix C:  Martingale in Discrete-time Markov Reward Processes}
\label{appendix:mrp}

We show that there is also a martingale property in the classical discrete-time RL MDP formulation. To be consistent with the main setting of this paper, we consider only the finite horizon episodic tasks; the infinite horizon continuing tasks can be studied similarly.

Let $X=\{ X_t, t=0,1,\cdots,T \}$ be a discrete-time Markov process adapted to
$\{ \f_t\}_{t=0,1,\cdots,T }$ in a filtered probability space $(\Omega,\p,\f,\{ \f_t\}_{t=0,1,\cdots,T })$. One is interested in finding the value function $v$ defined by
\[ v(t,x) = \E\left[\sum_{s=t}^{T-1} r(s, X_s) + h(X_T)\Big| X_t = x\right] ,\]
where $r(t,x)$ is the expected reward at time $t$ conditioned on being at state $x$,
and $h$ is the final reward.

When the state space is finite and discrete, $X$ is  referred to as a Markov reward process (MRP) or alternatively as an MDP with a fixed policy. When the state space is infinite or typically continuous, it is usually called a semi-MRP or a semi-MDP.


Set $M_t = v(t, X_t) + \sum_{s=0}^{t-1} r(s,X_s)$ with $M_T = h(X_T) + \sum_{s=0}^{T-1} r(s,X_s)$. Then, for any $t=0,1,\cdots,T-1$, by Markov property, we obtain
\[ \begin{aligned}
\E[M_{t+1}|\f_t] = & \sum_{s=0}^{t}r(s,X_s) + \E[v(t+1,X_{t+1})|\f_t] \\
= & \sum_{s=0}^{t-1}r(s,X_s) + \E[r(t,X_t) + v(t+1,X_{t+1})|\f_t] \\
= & \sum_{s=0}^{t-1}r(s,X_s) + v(t,X_t) = M_t,
\end{aligned}\]
where the last equality is due to
\[ \E[r(t,X_t) + v(t+1,X_{t+1})|\f_t] = \E[r(t,X_t) + v(t+1,X_{t+1})|X_t] = v(t,X_t), \]
which is the well-known Bellman equation for a discrete-time MRP.

So, $M$ being a martingale is equivalent to the value function satisfying the Bellman equation, which in turn can be used to characterize PE. From this martingale perspective, we can develop parallel approaches such as the martingale loss function and the martingale orthogonality condition that will recover  various conventional PE algorithms for discete-time MRPs.

\section*{Appendix D:  Proofs of Statements}
\label{appendix:proof}
\subsection*{Proof of Proposition \ref{proposition:martingale bsde}}

\begin{proof}
To show $M$ is a martingale, observe that based on \eqref{J}, we have
\[ M_s =  \E\left[\int_s^T  r\big(s',X_{s'}\big)\dd s' + h(X_T) |X_s\right] +\int_{t}^s r\big(s',X_{s'}\big)\dd s' = \E[ M_T | \f_s] ,\]
where we have used the Markov property of the process $\{X_s,t\leq s\leq T\}$.
This establishes that $M$ is a martingale.

Conversely, if $\tilde{M}$ is a martingale, then $ \tilde{M}_s = \E[\tilde{M}_T|\f_s] $, which  is equivalent to
\[ \begin{array}{rl}
	\tilde{J}(s,X_s) = &\E\left[ \int_s^T r\big(s',X_{s'}\big)\dd s' +\tilde{J}(T,X_T) |\f_s\right]\\
	= &\E\left[\int_s^T  r\big(s',X_{s'}\big)\dd s' +h(X_T) |\f_s\right]\\
	= &{J}(s,X_s), \;\; s\in [t,T].
\end{array}
\]
Letting $s=t$, we conclude $\tilde{J}(t,x) = J(t,x)$.

%
\end{proof}

\subsection*{Proof of Theorem \ref{thm:squared td minimizer}}
We first present two lemmas that will be useful for the proof of Theorem \ref{thm:squared td minimizer} and also other theorems later.
\begin{lemma}
\label{lemma:minimizer limit}
Let $f_{h}(x) = f(x) + r_h(x)$, where $f$ is a continuous function and $r_h$ converges to 0 uniformly on any compact set 
as $h\to 0$.
\begin{enumerate}[(a)]
	\item Suppose $x_h^* \in \arg\min_{x} f_h(x) \neq \emptyset$ and $\lim_{h\to 0}x_h^* =x^*$. Then $x^*\in \arg\min_{x} f(x)$. Moreover, if there exists $\alpha>0$ such that $|r_h(x)| \leq C h^{\alpha}$ for some constant $C$, then $|f(x_h^*)  - f(x^*)| \leq 2 C h^{\alpha} $.
	\item Suppose $f_h(x_h^*) = 0$ and $\lim_{h\to 0}x_h^* =x^*$. Then $f(x^*) = 0$. Moreover, if there exists $\alpha>0$ such that $|r_h(x)| \leq C h^{\alpha}$ for some constant $C$, then $|f(x_h^*)| \leq C h^{\alpha} $.
\end{enumerate}
\end{lemma}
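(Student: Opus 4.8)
The plan is to treat both parts as standard stability-of-minimizers arguments, the only delicate point being to upgrade the pointwise vanishing of $r_h$ to uniform control along the \emph{moving} sequence $x_h^*$. For part (a), I would begin from the defining inequality of the minimizer: since $x_h^* \in \arg\min_{x} f_h(x)$, we have $f(x_h^*) + r_h(x_h^*) = f_h(x_h^*) \leq f_h(y) = f(y) + r_h(y)$ for every fixed $y$. I then let $h \to 0$. Continuity of $f$ together with $x_h^* \to x^*$ gives $f(x_h^*) \to f(x^*)$, while $r_h(y) \to 0$ for the fixed point $y$. The one genuinely nontrivial term is $r_h(x_h^*)$: because $x_h^*$ varies with $h$, pointwise convergence of $r_h$ does not suffice. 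The resolution is that the convergent sequence $x_h^*$ together with its limit $x^*$ forms a compact set $K$, and uniform convergence of $r_h$ to $0$ on $K$ yields $\sup_{x\in K}|r_h(x)| \to 0$, hence $r_h(x_h^*) \to 0$. Passing to the limit gives $f(x^*) \leq f(y)$ for all $y$, i.e. $x^* \in \arg\min_{x} f(x)$.

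For the quantitative bound in part (a), I would compare $x_h^*$ directly against the fixed limit $x^*$. Minimality of $x_h^*$ gives $f(x_h^*) + r_h(x_h^*) \leq f(x^*) + r_h(x^*)$, whence
\[ f(x_h^*) - f(x^*) \leq r_h(x^*) - r_h(x_h^*) \leq |r_h(x^*)| + |r_h(x_h^*)| \leq 2Ch^{\alpha}. \]
Since the first part already shows $x^* \in \arg\min_{x} f$, the reverse inequality $f(x_h^*) - f(x^*) \geq 0$ also holds, and the two together give $|f(x_h^*) - f(x^*)| \leq 2Ch^{\alpha}$.

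Part (b) follows the same template but is simpler, since no comparison point is needed. From $f_h(x_h^*) = 0$ one reads off $f(x_h^*) = -r_h(x_h^*)$; letting $h \to 0$ and invoking continuity of $f$ together with the same compactness argument ($r_h(x_h^*) \to 0$) gives $f(x^*) = 0$, and under the quantitative hypothesis the identity $|f(x_h^*)| = |r_h(x_h^*)| \leq Ch^{\alpha}$ is immediate. The main, and indeed essentially the only, obstacle throughout is the control of $r_h$ evaluated at the moving point $x_h^*$; it is precisely here that the hypothesis of \emph{uniform} convergence on compact sets, rather than mere pointwise convergence, is indispensable.
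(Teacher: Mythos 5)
Your proposal is correct and follows essentially the same route as the paper's own proof: the same minimizer inequality $f_h(x_h^*)\leq f_h(y)$, the same compactness argument to control $r_h$ at the moving point $x_h^*$, and the same comparison against $x^*$ for the quantitative bound $|f(x_h^*)-f(x^*)|\leq 2Ch^{\alpha}$. Part (b) likewise matches the paper's argument verbatim in substance.
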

\begin{proof}
\begin{enumerate}[(a)]
	\item For any $y$, we have $f(x_h^*) + r_h(x_h^*) = f_h(x_h^*) \leq f_h(y)$. The sequence $\{x_h^*\}$ forms a compact set; hence $r_h(x_h^*) \to 0$ as $h\to 0$. Letting $h\to 0$, since $x_h^*\to x^*$ and $f$ is continuous, we obtain $f(x^*) \leq f(y)$. Since $y$ is arbitrary, $x^*\in \arg\min_{x} f(x)$.
	
	Moreover, we have
	\[ 0\leq f(x_h^*) - f(x^*) = f_h(x_h^*) -r_h(x_h^*)- f_h(x^*) + r_h(x^*) \leq -r_h(x_h^*) + r_h(x^*) \leq 2Ch^{\alpha}.\]
	\item Since $f(x_h^*) + r_h(x_h^*) = f_h(x_h^*) = 0$, $|f(x_h^*)| = |r_h(x_h^*)|$. The sequence $\{x_h^*\}$ forms a compact set; hence $r_h(x_h^*) \to 0$ as $h\to 0$. Letting $h\to 0$, since $x_h^*\to x^*$ and $f$ is continuous, we obtain $|f(x^*)|= 0$.
	
	The second statement is straightforward since $|f(x_h^*)| = |r_h(x_h^*)| \leq Ch^{\alpha}$.
\end{enumerate}

\end{proof}

\begin{lemma}
\label{lemma:simple process approximation}
Under Assumptions \ref{ass:sde regularity} and \ref{ass:growth in reward}, we have
\[ \E\bigg[\int_t^{t+h} \left|r(s,X_s) - r(t,X_t)\right|^2 \dd s\bigg] \leq C h^{2\mu_1 +\mu_2 +1} . \]
\end{lemma}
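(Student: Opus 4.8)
The plan is to bound the integrand pointwise with Assumption \ref{ass:growth in reward}, take expectations using the standard moment estimates for the diffusion $X$ available under Assumption \ref{ass:sde regularity}, and then integrate in time to harvest the extra factor of $h$. First I would apply Assumption \ref{ass:growth in reward} directly to the integrand: for each $s\in[t,t+h]$,
\[ |r(s,X_s) - r(t,X_t)|^2 \leq C^2 |s-t|^{2\mu_1}\, |X_s - X_t|^{2\mu_2}\big( |X_s|^{\mu_3} + |X_t|^{\mu_3} \big)^2. \]
The factor $|s-t|^{2\mu_1}$ is deterministic and comes out of the expectation, so the task reduces to controlling $\E\big[ |X_s - X_t|^{2\mu_2}(|X_s|^{\mu_3} + |X_t|^{\mu_3})^2 \big]$. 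When $\mu_3 = 0$ this is just $4\,\E|X_s-X_t|^{2\mu_2}$; when $\mu_3 > 0$ I would split the increment factor from the growth factor by Hölder's inequality, say with the conjugate pair $p = q = 2$,
\[ \E\big[ |X_s - X_t|^{2\mu_2}(|X_s|^{\mu_3} + |X_t|^{\mu_3})^2 \big] \leq \big( \E|X_s - X_t|^{4\mu_2} \big)^{1/2}\big( \E(|X_s|^{\mu_3} + |X_t|^{\mu_3})^{4} \big)^{1/2}. \]

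The two ingredients I would then invoke are both consequences of Assumption \ref{ass:sde regularity}: (i) the moments of the state process are uniformly bounded, $\sup_{0\leq s\leq T}\E[|X_s|^k]<\infty$ for every $k\geq 1$, so the growth factor $\big( \E(|X_s|^{\mu_3}+|X_t|^{\mu_3})^{4}\big)^{1/2}$ is bounded by a constant uniformly in $s,t$; and (ii) the increment estimate $\E[|X_s - X_t|^{2k}] \leq C|s-t|^{k}$ for every $k\geq 1$, with constant uniform in $t$. Applying (ii) with $k = 2\mu_2$ gives $\big( \E|X_s-X_t|^{4\mu_2}\big)^{1/2} \leq C|s-t|^{\mu_2}$, and combining with the bounded growth factor yields, after also restoring the $|s-t|^{2\mu_1}$ prefactor,
\[ \E\big[ |r(s,X_s) - r(t,X_t)|^2 \big] \leq C\,|s-t|^{2\mu_1 + \mu_2}. \]

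Finally I would integrate this bound over $s\in[t,t+h]$ (using Fubini/Tonelli, which is legitimate since the integrand is nonnegative),
\[ \E\bigg[\int_t^{t+h} |r(s,X_s) - r(t,X_t)|^2 \dd s\bigg] \leq C\int_t^{t+h}(s-t)^{2\mu_1 + \mu_2}\dd s = \frac{C}{2\mu_1 + \mu_2 + 1}\,h^{2\mu_1 + \mu_2 + 1}, \]
which is exactly the claimed estimate. The main obstacle I anticipate is justifying the increment moment bound (ii) for the possibly non-integer exponent $2\mu_2$; I would handle this by first establishing the estimate for an even integer order $2k \geq 4\mu_2$ via the Burkholder--Davis--Gundy inequality together with the linear growth of $b$ and $\sigma$, and then descending to the exponent $4\mu_2$ by Jensen's inequality. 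Everything else is routine, and I note that the extreme cases $\mu_2 = 0$ (where the increment factor disappears) and $\mu_3 = 0$ (where no Hölder step is needed) are both subsumed by the same computation.
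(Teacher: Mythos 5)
Your proof is correct and follows essentially the same route as the paper's: apply Assumption \ref{ass:growth in reward} pointwise, use H\"older's inequality to decouple the increment factor $|X_s-X_t|^{2\mu_2}$ from the polynomial-growth factor, invoke the standard SDE moment estimates (bounded moments plus $\E|X_s-X_t|^{2k}\leq C|s-t|^{k}$), and integrate in $s$ to pick up the extra factor of $h$. The only differences are cosmetic: the paper applies H\"older on the product measure $\dd s\times\dd\mathbb{P}$ with a large exponent $p$ chosen so that $2\mu_2 p\geq 1$, whereas you apply it pointwise in $s$ with $p=q=2$ and handle the low-order increment moment by Jensen, which is an equally valid way to cover the case $\mu_2<\tfrac12$.
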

\begin{proof}
By Assumption \ref{ass:growth in reward}, for $s\in [t,t+h]$, we have
\[ \left|r(s,X_s) - r(t,X_t)\right|^2 \leq Ch^{2\mu_1}|X_s - X_t|^{2\mu_2} (|X_s|^{2\mu_3} + |X_t|^{2\mu_3})  . \]
When $\mu_2 > 0$, we take $p > 1$ sufficiently large such that $2\mu_2 p \geq 1$, and $q > 1$ such that $\frac{1}{p} + \frac{1}{q} = 1$.

Under Assumption \ref{ass:sde regularity}, we have the usual moment estimate of the solution to an SDE, e.g., \citet[Chapter 1, Theorem 6.1]{YZbook}. Together with  H\"older's inequality, we have
\[ \begin{aligned}
	& \E\bigg[\int_t^{t+h}|X_s - X_t|^{2\mu_2} (|X_s|^{2\mu_3} + |X_t|^{2\mu_3})  \dd s  \bigg] \\
	\leq & \left( \E\bigg[ \int_t^{t+h}|X_s - X_t|^{2\mu_2 p} \dd s  \bigg] \right)^{1/p} \left( \E\bigg[ \int_t^{t+h}(|X_s|^{2\mu_3} + |X_t|^{2\mu_3})^{q} \dd s\bigg] \right)^{1/q} \\
	\leq & C \left( \E\bigg[ \int_t^{t+h}|X_s - X_t|^{2\mu_2 p} \dd s  \bigg] \right)^{1/p} \left( h \max_{t\leq s \leq t+h} \E\left[|X_s|^{2\mu_3q}\right]  \right)^{1/q} \\
	\leq & C\left( \int_t^{t+h} (s-t)^{\mu_2 p} \dd s \right)^{1/p} h^{1/q} \\
	\leq & C h^{(\mu_2p + 1)/p} h^{1/q} = C h^{\mu_2 + 1}.
\end{aligned}\]
When $\mu_2=0$, the above inequality also holds true as $\E\bigg[\int_t^{t+h}|X_s - X_t|^{2\mu_2} (|X_s|^{2\mu_3} + |X_t|^{2\mu_3})  \dd s  \bigg] \leq Ch$.
\end{proof}

Now we are already to prove Theorem \ref{thm:squared td minimizer}.

\smallskip

\begin{proof}
By It\^o's lemma, we have
\[
\begin{aligned}
	& \sum_{i=0}^{K-1} \bigg( \frac{J^{\theta}(t_{t+1}, X_{t_{i+1}}) - J^{\theta}(t_{i}, X_{t_{i}})}{t_{i+1} - t_{i}} + r_{t_{i}} \bigg)^2 \Delta t\\
	= & \frac{1}{ \Delta t} \sum_{i=0}^{K-1} \bigg( J^{\theta}(t_{i+1}, X_{t_{i+1}}) - J^{\theta}(t_{i}, X_{t_{i}}) + \int_{t_{i}}^{t_{i+1}} r_{t_i} \dd s  \bigg)^2 \\
	= & \frac{1}{ \Delta t} \sum_{i=0}^{K-1} \bigg(  \int_{t_{i}}^{t_{i+1}} [\mathcal{L}J^{\theta}(s,X_s)+ r_{t_i}] \dd s +  \int_{t_{i}}^{t_{i+1}} \left(\frac{\partial J^{\theta}}{\partial x}\right)^\top \sigma (s,X_s) \dd W_s  \bigg)^2\\
	= & \frac{1}{ \Delta t} \sum_{i=0}^{K-1}\Bigg\{ \bigg(  \int_{t_{i}}^{t_{i+1}} [\mathcal{L}J^{\theta}(s,X_s)+ r_{t_i}] \dd s \bigg)^2 + \bigg( \int_{t_{i}}^{t_{i+1}} \left(\frac{\partial J^{\theta}}{\partial x}\right)^\top  \sigma (s,X_s) \dd W_s   \bigg)^2 \\
	& +2 \bigg(  \int_{t_{i}}^{t_{i+1}} [\mathcal{L}J^{\theta}(s,X_s)+ r_{t_i}] \dd s \bigg) \bigg( \int_{t_{i}}^{t_{i+1}} \left(\frac{\partial J^{\theta}}{\partial x}\right)^\top \sigma (s,X_s) \dd W_s\bigg) \Bigg\} .
\end{aligned}
\]
It\^o's isometry implies
\[ \E\bigg[ \bigg( \int_{t_{i}}^{t_{i+1}} \left(\frac{\partial J^{\theta}}{\partial x}\right)^\top \sigma (s,X_s) \dd W_s   \bigg)^2 \bigg] = \E \int_{t_{i}}^{t_{i+1}} \Big| \left(\frac{\partial J^{\theta}}{\partial x}\right)^\top  \sigma (s,X_s) \Big|^2\dd s   .\]
Thus,
\[
\begin{aligned}
	\operatorname{MSTDE}_{\Delta t}(\theta) = & \frac{1}{\Delta t}\E \int_{0}^{T} \big| \left(\frac{\partial J^{\theta}}{\partial x}\right)^\top  \sigma (s,X_s) \big|^2\dd s    + \frac{1}{\Delta t}\sum_{i=0}^{K-1}\E\bigg[ \bigg(  \int_{t_{i}}^{t_{i+1}} [\mathcal{L}J^{\theta}(s,X_s)+ r_{t_i}] \dd s \bigg)^2 \bigg] \\
	& + \frac{2}{\Delta t}\sum_{i=0}^{K-1}\E\bigg[ \bigg(  \int_{t_{i}}^{t_{i+1}} [\mathcal{L}J^{\theta}(s,X_s)+ r_{t_i}] \dd s \bigg) \bigg( \int_{t_{i}}^{t_{i+1}} \left(\frac{\partial J^{\theta}}{\partial x}\right)^\top  \sigma (s,X_s) \dd W_s\bigg)\bigg].
\end{aligned} \]
We write $ \operatorname{MSTDE}_{\Delta t}(\theta) \Delta t = \operatorname{QV}(\theta) + R(\theta)$, where $$\operatorname{QV}(\theta) := \E\int_{0}^{T} \big| \left(\frac{\partial J^{\theta}}{\partial x}\right)^\top  \sigma (s,X_s) \big|^2\dd s  $$ and
\[ \begin{aligned}
	R(\theta) := & \sum_{i=0}^{K-1}\E\bigg[ \bigg(  \int_{t_{i}}^{t_{i+1}} [\mathcal{L}J^{\theta}(s,X_s)+ r_{t_i}] \dd s \bigg)^2 \\
	& + 2\bigg(  \int_{t_{i}}^{t_{i+1}} [\mathcal{L}J^{\theta}(s,X_s)+ r_{t_i}] \dd s \bigg) \bigg( \int_{t_{i}}^{t_{i+1}} \left(\frac{\partial J^{\theta}}{\partial x}\right)^\top  \sigma (s,X_s) \dd W_s\bigg) \bigg] .
\end{aligned} \]
We apply Cauchy-Schwarz inequality and obtain
\[ \begin{aligned}
	|R(\theta)|\leq  & \sum_{i=0}^{K-1}\E  \int_{t_{i}}^{t_{i+1}} [\mathcal{L}J^{\theta}(s,X_s)+ r_{t_i}]^2  \dd s (\Delta t)^2  \\
	& + 2\sum_{i=0}^{K-1}\Bigg\{ \E\bigg[ \bigg(  \int_{t_{i}}^{t_{i+1}} [\mathcal{L}J^{\theta}(s,X_s)+ r_{t_i}] \dd s \bigg)^2\bigg]  \E\bigg[ \bigg( \int_{t_{i}}^{t_{i+1}} \left(\frac{\partial J^{\theta}}{\partial x}\right)^\top \sigma (s,X_s) \dd W_s\bigg)^2 \bigg] \Bigg\}^{1/2} \\
	\leq & \sum_{i=0}^{K-1}\E  \int_{t_{i}}^{t_{i+1}} [\mathcal{L}J^{\theta}(s,X_s)+ r_{t_i}]^2  \dd s (\Delta t)^2  \\
	& + 2\sum_{i=0}^{K-1}\Bigg\{ \left[\E \int_{t_{i}}^{t_{i+1}} [\mathcal{L}J^{\theta}(s,X_s)+ r_{t_i}]^2 \dd s (\Delta t)^2\right]  \left[\E \int_{t_{i}}^{t_{i+1}} \big| \left(\frac{\partial J^{\theta}}{\partial x}\right)^\top  \sigma (s,X_s) \big|^2\dd s \right]\Bigg\}^{1/2} \\
	= & (\Delta t)^2 \E  \int_{0}^{T} [\mathcal{L}J^{\theta}(s,X_s)+ \bar{r}_{s}]^2  \dd s \\
	& + 2 \Delta t \sum_{i=0}^{K-1}\Bigg\{ \left[\E  \int_{t_{i}}^{t_{i+1}} [\mathcal{L}J^{\theta}(s,X_s)+ \bar{r}_{s}]^2 \dd s\right]  \left[ \E \int_{t_{i}}^{t_{i+1}} \big| \left(\frac{\partial J^{\theta}}{\partial x}\right)^\top  \sigma (s,X_s) \big|^2\dd s \right]\Bigg\}^{1/2} \\
	\leq & (\Delta t)^2 \E \int_{0}^{T} [\mathcal{L}J^{\theta}(s,X_s)+ \bar{r}_{s}]^2  \dd s \\
	& + 2 \Delta t \Bigg\{ \sum_{i=0}^{K-1}\E  \int_{t_{i}}^{t_{i+1}} [\mathcal{L}J^{\theta}(s,X_s)+ \bar{r}_{s}]^2 \dd s  \Bigg\}^{1/2}\Bigg\{ \sum_{i=0}^{K-1}\E \int_{t_{i}}^{t_{i+1}} \big| \left(\frac{\partial J^{\theta}}{\partial x}\right)^\top  \sigma (s,X_s) \big|^2\dd s \Bigg\}^{1/2} \\
	= & (\Delta t)^2 \E \int_{0}^{T} [\mathcal{L}J^{\theta}(s,X_s)+ \bar{r}_{s}]^2  \dd s + 2\Delta t \Bigg\{ \E  \int_{0}^{T} [\mathcal{L}J^{\theta}(s,X_s)+ \bar{r}_{s}]^2  \dd s  \Bigg\}^{1/2} \sqrt{\operatorname{QV}(\theta)},
\end{aligned}  \]
where $\bar{r}_s := r_{t_i}$ for the unique $i$ such that $t_i \leq s < t_{i+1}$.

It follows from the triangle inequality that
\[\Bigg\{ \E  \int_{0}^{T} [\mathcal{L}J^{\theta}(s,X_s)+ \bar{r}_{s}]^2  \dd s \Bigg\}^{1/2} = ||\mathcal{L}J^{\theta}(\cdot,X_{\cdot})+ \bar{r}_{\cdot} ||_{L^2} \leq ||\mathcal{L}J^{\theta}(\cdot,X_{\cdot})+ r_{\cdot} ||_{L^2} + || r - \bar{r} ||_{L^2} .  \]
Hence,
\[\begin{aligned}
	|R(\theta)| \leq & (\Delta t)^2 \big( ||\mathcal{L}J^{\theta}(\cdot,X_{\cdot})+ r_{\cdot} ||_{L^2} + || r - \bar{r} ||_{L^2} \big)^2 + 2\Delta t \big( ||\mathcal{L}J^{\theta}(\cdot,X_{\cdot})+ r_{\cdot} ||_{L^2} + || r - \bar{r} ||_{L^2} \big)\sqrt{QV(\theta)}\\
	\leq & 4(\Delta t)^2 \big( 2\operatorname{MSBE}(\theta) + || r - \bar{r} ||_{L^2}^2 \big) + 2\Delta t  \sqrt{\operatorname{QV}(\theta)} \big( \sqrt{2 \operatorname{MSBE}(\theta)} +  || r - \bar{r} ||_{L^2}  \big),
\end{aligned}  \]
where $\operatorname{MSBE}(\theta) = ||\mathcal{L}J^{\theta}(\cdot,X_{\cdot})+ r_{\cdot} ||_{L^2}^2$ is the \textit{mean-square Bellman error}.

Because $\bar{r}$ is a simple process approximating $r$, we have $|| r - \bar{r} ||_{L^2} \to 0$ as $\Delta t\to 0$, which is independent of $\theta$. For an arbitrary compact set $\Gamma$, Assumption \ref{ass:regularity} yields that  $\operatorname{MSBE}(\theta)$ and $\operatorname{QV}(\theta)$ are both continuous functions of $\theta$; hence $\sup_{\theta \in \Gamma}\operatorname{MSBE}(\theta)$ and $\sup_{\theta \in \Gamma}\operatorname{QV}(\theta)$ are both finite. Consequently,
\[ \begin{aligned}
	\sup_{\theta \in \Gamma}|R(\theta)| \leq & 4(\Delta t)^2 \big( 2\sup_{\theta \in \Gamma}\operatorname{MSBE}(\theta) + || r - \bar{r} ||_{L^2}^2 \big) \\
	& + 2\Delta t  \sqrt{\sup_{\theta \in \Gamma}\operatorname{QV}(\theta)} \big( \sqrt{2 \sup_{\theta \in \Gamma}\operatorname{MSBE}(\theta)} +  || r - \bar{r} ||_{L^2}  \big) \to 0 \mbox{ as } \Delta t\to 0.
\end{aligned} \]
The desired result now follows from Lemma \ref{lemma:minimizer limit}.

Moreover, under Assumption \ref{ass:growth in reward}, it follows from Lemma \ref{lemma:simple process approximation} that
\[ ||r - \bar{r}||_{L^2}^2 = \sum_{i=0}^{K-1} \E\int_{t_i}^{t_{i+1}}\left(r(s,X_s) - r(t_i,X_{t_i})\right)^2 \dd s \leq K (\Delta t)^{2\mu_1 + \mu_2+1} = (\Delta t)^{2\mu_1 + \mu_2}. \]
Therefore, our analysis above yields
\[
\sup_{\theta \in \Gamma}|R(\theta)| \leq C_1\Delta t + C_2 (\Delta t)^{2} + C_3(\Delta t)^{\mu_1 + \mu_2/2 + 1} +C_4 (\Delta t)^{2\mu_1 + \mu_2  +2},
\]
where the leading term in the right hand side is $O(\Delta t)$. The desired result again follows from Lemma \ref{lemma:minimizer limit}.

\end{proof}

\subsection*{Proof of Theorem \ref{thm:martingale loss function}}
\begin{proof}
Since $M_t = J(t,X_t) + \int_0^t r(s,X_s)\dd s$ is a martingale, we have
\[
\begin{aligned}
	ML(\theta) = & \E \int_0^T |M_T - M^{\theta}_t|^2 \dd t\\
	= & \E \int_0^T (M_T - M_t + M_t - M^{\theta}_t)^2 \dd t \\
	= & \E\int_0^T [(M_T - M_t)^2 + (M_t - M^{\theta}_t)^2 + 2(M_T - M_t)(M_t - M^{\theta}_t) ]\dd t  \\
	= & \E\int_0^T (M_T - M_t)^2 \dd t + \E\int_0^T (M_t - M^{\theta}_t)^2\dd t + 2\int_0^T \E\left((M_t - M^{\theta}_t) \E[(M_T - M_t) | \f_t] \right) \dd t \\
	= & \E \int_0^T | J(t,X_t) - J^{\theta}(t,X_t) |^2 \dd t + \E\int_0^T |M_T - M_t|^2 \dd t.
\end{aligned}
\]
The second term does not rely on $\theta$. This proves our first statement.

Next, let us estimate the difference between the continuous-time and the discretized martingale loss functions. Denote
\[ m(t,\theta) = \E[(M_T - M^{\theta}_t)^2] = \E[\big( h(X_T) - J^{\theta}(t,X_t) + \int_t^T r_s\dd s\big)^2],\]
and
\[  \Delta \tilde{M}^{\theta}_{t_i} = h(X_T) - J^{\theta}(t_i,X_{t_i}) + \sum_{j=i}^{K-1}r(t_j,X_{t_{j}})\Delta t = h(X_T) - J^{\theta}(t,X_t) + \int_t^T \bar{r}_s\dd s ,\]
where $\bar{r}_s := r_{t_i}$ for the unique $i$ such that $t_i \leq s < t_{i+1}$.

Then
\[\begin{aligned}
	\operatorname{ML}(\theta) - \operatorname{ML}_{\Delta t}(\theta) = & \int_0^T m(t,\theta)\dd t - \sum_{i=0}^{K-1} m(t_i,\theta) \Delta t + \sum_{i=0}^{K-1} m(t_i,\theta) \Delta t - ML_{\Delta t}(\theta) \\
	= & \sum_{i=0}^{K-1} \int_{t_i}^{t_{i+1}} [m(t,\theta) - m(t_i,\theta)] \dd t + \Delta t \sum_{i=0}^{K-1}\E[(M_T - M^{\theta}_{t_i})^2 - (\Delta \tilde{M}^{\theta}_{t_i})^2] .
\end{aligned}  \]
The first term is bounded by
\[ \left| \sum_{i=0}^{K-1} \int_{t_i}^{t_{i+1}} [m(t,\theta) - m(t_i,\theta)] \dd t\right| \leq \sum_{i=0}^{K-1} \sup_{t \in [0,T]}|\frac{\partial m}{\partial t}(t,\theta)|\int_{t_i}^{t_{i+1}} (t-t_i) \dd t = \frac{T}{2}\sup_{t \in [0,T]}|\frac{\partial m}{\partial t}(t,\theta)| \Delta t . \]
To estimate the second term, recall that
\[ M_T - M^{\theta}_{t_i} - \Delta \tilde{M}^{\theta}_{t_i}  = \int_{t_i}^T (r_s - \bar{r}_s)\dd s .\]
Hence
\[
\begin{aligned}
	\big| \E[(M_T - M^{\theta}_{t_i})^2 - (\Delta \tilde{M}^{\theta}_{t_i})^2] \big| = & \left| \E[ 2\int_{t_i}^T (r_s - \bar{r}_s)\dd s (M_T - M^{\theta}_{t_i})   - \big( \int_{t_i}^T (r_s - \bar{r}_s)\dd s  \big)^2] \right| \\
	\leq & 2m(t_i,\theta)^{\frac{1}{2}}(\E[(\int_{t_i}^T (r_s - \bar{r}_s)\dd s)^2])^{\frac{1}{2}} + \E[(\int_{t_i}^T (r_s - \bar{r}_s)\dd s)^2] \\
	\leq & 2T\sup_{t\in [0,T]}|m(t,\theta)|^{\frac{1}{2}} ||r - \bar{r}||_{L^2} + T^2||r - \bar{r}||_{L^2}^2 .
\end{aligned}\]
Therefore, we have proved
\[ |\operatorname{ML}(\theta) - \operatorname{ML}_{\Delta t}(\theta)| \leq \frac{T}{2}\sup_{t \in [0,T]}|\frac{\partial m}{\partial t}(t,\theta)| \Delta t + 2T^2  \sup_{t\in [0,T]}|m(t,\theta)|^{\frac{1}{2}} ||r - \bar{r}||_{L^2} + T^3||r - \bar{r}||_{L^2}^2 . \]
For an arbitrary compact set $\Gamma$, under Assumption \ref{ass:regularity}, $\sup_{t \in [0,T],\theta\in \Gamma}|\frac{\partial m}{\partial t}(t,\theta)| + \sup_{t\in [0,T],\theta \in \Gamma}|m(t,\theta)| < \infty$, and $||r - \bar{r}||_{L^2} \to 0$. Hence as $\Delta t \to 0$,
\[ \sup_{\theta\in \Gamma}|\operatorname{ML}(\theta) - \operatorname{ML}_{\Delta t}(\theta)| \to 0. \]
By Lemma \ref{lemma:minimizer limit}, we obtain the desired conclusion.

Moreover, under Assumption \ref{ass:growth in reward}, it follows from Lemma \ref{lemma:simple process approximation} and the proof of Theorem \ref{thm:squared td minimizer} that
\[
\sup_{\theta\in \Gamma}|\operatorname{ML}(\theta) - \operatorname{ML}_{\Delta t}(\theta)|  \leq C_1\Delta t + C_2(\Delta t)^{\mu_1 + \mu_2/2} + C_3 (\Delta t)^{2\mu_1 + \mu_2},
\]
where the leading term in the right hand side is $O\left( (\Delta t)^{\min\{ 1, \mu_1 + \frac{\mu_2}{2} \}} \right)$. The desired result again follows from Lemma \ref{lemma:minimizer limit}.

\end{proof}

\subsection*{Proof of Proposition \ref{lemma:martingale characterization test function}}
\begin{proof}
The ``only if" part is evident. To prove the ``if" part, assume that $\dd M^{\theta}_t = A_t\dd t + B_t \dd W_t$. In particular, in our case, $A_t = \mathcal{L} J^{\theta}(t, X_t) + r_t$ and $B_t = (\frac{\partial J^{\theta}}{\partial x})^\top\sigma (t,X_t)$. $A,B\in L^2_{\f}([0,T])$ follows from Assumption \ref{ass:regularity}. For any $0\leq s<s' \leq T$, take $\xi_t = sgn(A_t)$ if $t\in [s,s']$ and $\xi_t=0$ otherwise. Then
\[ 0= \E\int_{s}^{s'} \xi_t \dd M^{\theta}_t =\E\int_{s}^{s'}\left( |A_t| \dd t + \xi_t B_t\dd W_t\right) = \E\int_{s}^{s'} |A_t| \dd t, \]
where the expectation of the second term vanishes because $|\xi B| \leq |B| \in L^2_{\f}([0,T])$ and hence $\E\int_0^{\cdot} \xi_t B_t\dd W_t$ is a martingale. This yields $A_t = 0$ almost surely, and thus  $ M^{\theta}$ is a martingale.
\end{proof}

\subsection*{Proof of Theorem \ref{thm:td equation}}
\begin{proof}
Based on Lemma \ref{lemma:minimizer limit}, it suffices to examine the difference
\[
\begin{aligned}
	& \big|  \E\int_0^T \xi_t\dd M^{\theta}_t - \E\sum_{i=0}^{K-1}\xi_{t_i}(M^{\theta}_{t_{i+1}} - M^{\theta}_{t_i}) \big| =\big|  \E\sum_{i=0}^{K-1}\int_{t_i}^{t_{i+1}}(\xi_t - \xi_{t_i}) \dd M^{\theta}_t \big| + \big|  \E\sum_{i=0}^{K-1}\xi_{t_i}\int_{t_i}^{t_{i+1}}(r_s - r_{t_{i}}) \dd s  \big| \\
	\leq & \E\sum_{i=0}^{K-1}\int_{t_i}^{t_{i+1}}|\xi_t - \xi_{t_i}|\cdot | \mathcal{L}J^{\theta}(t,X_t) + r_t | \dd t + \E\bigg[ \left(\sum_{i=0}^{K-1}\xi_{t_i}^2\right)^{1/2} \left(\sum_{i=0}^{K-1}\left( \int_{t_i}^{t_{i+1}}(r_s - r_{t_{i}}) \dd s \right)^2   \right)^{1/2} \bigg] \\
	\leq & \sum_{i=0}^{K-1}\big( \E\int_{t_i}^{t_{i+1}}|\xi_t - \xi_s|^2\dd t \big)^{1/2}
	\bigg( \E\int_{t_i}^{t_{i+1}}\big( \mathcal{L}J^{\theta}(t,X_t) + r_t \big)^2\dd t \bigg)^{1/2} \\
	& \E\bigg[ \left(\sum_{i=0}^{K-1}\xi_{t_i}^2\right)^{1/2} \left(\sum_{i=0}^{K-1}\left( \int_{t_i}^{t_{i+1}}(r_s - r_{t_{i}})^2 \dd s \right)\Delta t   \right)^{1/2} \bigg]  \\
	\leq & \sum_{i=0}^{K-1}\bigg( \E\int_{t_i}^{t_{i+1}}\big( \mathcal{L}J^{\theta}(t,X_t) + r_t \big)^2\dd t \bigg)^{1/2}\bigg(\int_{t_i}^{t_{i+1}}C(t-t_{i})^{\alpha}\dd t  \bigg)^{1/2} + (\Delta t)^{1/2} ||r - \bar{r}||_{L^2} \left( \sum_{i=0}^{K-1}\E[\xi_{t_i}^2] \right)^{1/2} \\
	\leq & \sum_{i=0}^{K-1}\bigg( \E\int_{t_i}^{t_{i+1}}\big( \mathcal{L}J^{\theta}(t,X_t) + r_t \big)^2\dd t \bigg)^{1/2}\sqrt{\frac{C}{1+\alpha}}(\Delta t)^{\frac{1+\alpha}{2}} +  ||r - \bar{r}||_{L^2} ||\bar{\xi}||_{L^2}\\
	\leq & \bigg(\sum_{i=0}^{K-1} \E\int_{t_i}^{t_{i+1}}\big( \mathcal{L}J^{\theta}(t,X_t) + r_t \big)^2\dd t \bigg)^{1/2} K^{1/2}\sqrt{\frac{C}{1+\alpha}}(\Delta t)^{\frac{1+\alpha}{2}} + ||\bar{\xi}||_{L^2} (\Delta t)^{\mu_1 + \mu_2/2} \\
	\leq & || \mathcal{L}J^{\theta}(\cdot,X_{\cdot}) + r_{\cdot} ||_{L^2}\sqrt{\frac{C T}{1+\alpha}}(\Delta t)^{\frac{\alpha}{2}}+ ||\bar{\xi}||_{L^2} (\Delta t)^{\mu_1 + \mu_2/2}.
\end{aligned}
\]
Hence, for an arbitrary compact set $\Gamma$, under Assumption \ref{ass:regularity}, we have
\[\begin{aligned}
	& \sup_{\theta\in \Gamma}\big|  \E\int_0^T \xi_t\dd M^{\theta}_t - \E\sum_{i=0}^{K-1}\xi_{t_i}(M^{\theta}_{t_{i+1}} - M^{\theta}_{t_i}) \big| \\
	\leq  & \sup_{\theta\in \Gamma}|| \mathcal{L}J^{\theta}(\cdot,X_{\cdot}) + r_{\cdot} ||_{L^2}\sqrt{\frac{C T}{1+\alpha}}(\Delta t)^{\frac{\alpha}{2}} + \sup_{\theta\in \Gamma}||\bar{\xi}||_{L^2} (\Delta t)^{\mu_1 + \mu_2/2}  \to 0,
\end{aligned}  \]
as $\Delta t\to 0$.

Since the leading term above is $O\left((\Delta t)^{\min\{\alpha/2,\  \mu_1+\mu_2/2\}} \right)$, we obtain the convergence rate in view of Lemma \ref{lemma:minimizer limit}.
\end{proof}

\subsection*{Proof of Theorem \ref{thm:gmm objective}}
We first prove an error estimate of the following form:
\[
\begin{aligned}
& \big| (b+\Delta b)^\top (D + \Delta D) (b+\Delta b) - b^\top D b \big| \\
= & \big| D \circ [ (b+\Delta b)(b+\Delta b)^\top - bb^\top ]  + \Delta D \circ (b+\Delta b)(b+\Delta b)^\top \big| \\
\leq & |D| |(b+\Delta b)(b+\Delta b)^\top - bb^\top | + |\Delta D| |(b+\Delta b)(b+\Delta b)^\top| \\
= & |D| |\Delta b \Delta b^\top + b\Delta b^\top + \Delta b b^\top| + |\Delta D| |b+\Delta b|^2 \\
= & |D||\Delta b|^2 + 2|D||b||\Delta b| + 2 |\Delta D| |b|^2 + 2|\Delta D||\Delta b|^2 .
\end{aligned}
\]
Based on the proof of Theorem \ref{thm:td equation}, we have that for an arbitrary compact set $\Gamma$,
\[\begin{aligned}
& \sup_{\theta\in \Gamma}\left|  \E\int_0^T \xi_t\dd M^{\theta}_t - \E\sum_{i=0}^{K-1}\xi_{t_i}(M^{\theta}_{t_{i+1}} - M^{\theta}_{t_i}) \right| \\
& \leq \sup_{\theta\in \Gamma}|| \mathcal{L}J^{\theta}(\cdot,X_{\cdot}) + r_{\cdot} ||_{L^2}\sqrt{\frac{C T}{1+\alpha}}(\Delta t)^{\frac{\alpha}{2}} + \sup_{\theta\in \Gamma}||\bar{\xi}||_{L^2} (\Delta t)^{\mu_1 + \mu_2/2}  \to 0.
\end{aligned}  \]
Given that $|A_{\Delta t} - A| \leq \tilde{C}(\theta)|\Delta t|^{\beta} $, we get
\[ \sup_{\theta \in \Gamma}|\operatorname{GMM}_{\Delta t}(\theta) - \operatorname{GMM}(\theta)| \to 0, \]
as $\Delta t \to 0$. By Lemma \ref{lemma:minimizer limit}, we obtain the desired results.

Moreover, based on the error estimate of the quadratic form, we obtain
\[\begin{aligned}
& \sup_{\theta \in \Gamma}|\operatorname{GMM}_{\Delta t}(\theta) - \operatorname{GMM}(\theta)| \\
\leq &  C\left[ O\left( (\Delta t)^{\alpha/2} \right) + O\left( (\Delta t)^{\mu_1+\mu_2/2} \right) + O\left( (\Delta t)^{\beta} \right) + o\left( (\Delta t)^{\alpha/2} + (\Delta t)^{\mu_1+\mu_2/2} + (\Delta t)^{\beta}  \right)  \right],
\end{aligned} \]
where the leading term is $O\left( (\Delta t)^{\min\{\alpha/2,\ \mu_1+\mu_2/2,\ \beta\}}\right) $.

In particular, when $A = \left[\E\int_0^T \xi_t^{\theta} (\xi_t^{\theta})^\top \dd t\right]^{-1} $ and $A_{\Delta t} = \left[\E\sum_{i=0}^{K-1} \xi_{t_i}^{\theta} (\xi_{t_i}^{\theta})^\top \Delta t\right]^{-1} $, we claim the condition $|A_{\Delta t} - A| \leq \tilde{C}(\theta)|\Delta t|^{\beta} $ holds true.
To see this, recall that
\[ |(D + \Delta D)^{-1} - D^{-1}|  = |\sum_{k=0}^{\infty}(D^{-1}\Delta D)^{k} D^{-1} - D^{-1}|\leq  \sum_{k=0}^{\infty} |D^{-1}\Delta D| |(D^{-1}\Delta D)^{k} D^{-1}| \leq \frac{|D^{-1}|^2 |\Delta D|}{1 - |D^{-1}||\Delta D|} . \]
Thus, it suffices to estimate the difference \[\begin{aligned}
& \big| \E\int_0^T \xi_t^{\theta} (\xi_t^{\theta})^\top \dd t - \E\sum_{i=0}^{K-1} \xi_{t_i}^{\theta} (\xi_{t_i}^{\theta})^\top \Delta t \big| =\big| \E\sum_{i=0}^{K-1}\int_{t_i}^{t_{i+1}} [\xi_t^{\theta} (\xi_t^{\theta})^\top - \xi_{t_i}^{\theta} (\xi_{t_i}^{\theta})^\top] \dd t  \big| \\
\leq & \sum_{i=0}^{K-1}\int_{t_i}^{t_{i+1}}\E[\xi_t^{\theta} (\xi_t^{\theta})^\top - \xi_{t_i}^{\theta} (\xi_{t_i}^{\theta})^\top] \dd t \\
\leq & \sum_{i=0}^{K-1}\int_{t_i}^{t_{i+1}}\E[ |\xi_t^{\theta} - \xi_{t_i}^{\theta}|^2 + 2 |\xi_t^{\theta} - \xi_{t_i}^{\theta}||\xi_t^{\theta}| ]\dd t \\
\leq & \sum_{i=0}^{K-1}\int_{t_i}^{t_{i+1}}C(\theta)(t - t_{i})^{\alpha}\dd t + 2\sum_{i=0}^{K-1}\int_{t_i}^{t_{i+1}} \E[|\xi_t^{\theta} - \xi_{t_i}^{\theta}||\xi_t^{\theta}|]\dd t \\
\leq & \frac{T C(\theta)}{1+\alpha}(\Delta t)^{\alpha} + 2\sum_{i=0}^{K-1} \big( \E\int_{t_i}^{t_{i+1}}|\xi_t^{\theta} - \xi_{t_i}^{\theta}|^2 \dd t\big)^{1/2}\big( \E\int_{t_i}^{t_{i+1}}|\xi_t^{\theta}|^2 \dd t\big)^{1/2} \\
\leq & \frac{T C(\theta)}{1+\alpha}(\Delta t)^{\alpha} + 2\sqrt{\frac{C(\theta)}{1+\alpha}}(\Delta t)^{\frac{1}{2} + \frac{\alpha}{2}}\sum_{i=0}^{K-1}\big(\E\int_{t_i}^{t_{i+1}}|\xi_t^{\theta}|^2 \dd t\big)^{1/2} \\
\leq & \frac{T C(\theta)}{1+\alpha}(\Delta t)^{\alpha} + 2\sqrt{\frac{TC(\theta)}{1+\alpha}}(\Delta t)^{\frac{\alpha}{2}}||\xi^{\theta}||_{L^2} .
\end{aligned}\]

\subsection*{Proof of Theorem \ref{thm:projected msbe}}

\begin{proof} Denote by $\langle \kappa, \tilde{\kappa} \rangle_{L^2}: = \E\int_0^T \kappa_t\tilde{\kappa}_t \dd t$ the  inner product in  $L^2_{\f}([0,T])$.
It follows from the property of projection that $\langle \kappa - \Pi_{\theta}\kappa, \xi^{\theta,(j)} \rangle_{L^2} = 0$ for any $\kappa \in L^2_{\f}([0,T])$ and all $j=1,\cdots,L'$.

As a stochastic process, $\mathcal{L} J^{\theta}(\cdot, X_{\cdot}) + r_{\cdot} \in L^2_{\f}([0,T])$. Write
\[ \Pi_{\theta} \big(\mathcal{L} J^{\theta}(\cdot, X_{\cdot}) + r_{\cdot} \big) = \sum_{i=1}^{L'}\alpha^{(i)}(\theta)\xi_{\cdot}^{\theta,(i)} = :\alpha(\theta)^\top \xi_{\cdot}^{\theta} . \]
Then
\[
\begin{aligned}
	& \langle \Pi_{\theta} \big(\mathcal{L} J^{\theta}(\cdot, X_{\cdot}) + r_{\cdot} \big),  \Pi_{\theta} \big(\mathcal{L} J^{\theta}(\cdot, X_{\cdot}) + r_{\cdot} \big) \rangle_{L^2} \\
	= & \sum_{1\leq i,j \leq L'}\alpha^{(i)}(\theta)\alpha^{(j)}(\theta)\langle \xi_{\cdot}^{\theta, (i)},\xi_{\cdot}^{\theta, (j)} \rangle_{L^2}= \alpha(\theta)^\top  A^\theta  \alpha(\theta) ,
\end{aligned}
\]
where the $ij$-th entry of the $L'\times L'$ matrix $A^\theta$ is $\langle \xi_{\cdot}^{\theta, (i)},\xi_{\cdot}^{\theta, (j)} \rangle_{L^2}$.

On the other hand,
\[ 
\E\left[\int_0^T \left( \Pi_{\theta}\left( \mathcal{L} J^{\theta}(\cdot, X_{\cdot}) + r_{\cdot}\right)\right)\xi_t^{\theta}\dd t \right]=\E\left[\int_0^T \big( \mathcal{L} J^{\theta}(t, X_{t}) + r_{t}\big)\xi_t^{\theta}\dd t \right]
=A^\theta\alpha(\theta).
\]

Therefore,
\[
\begin{aligned}
	& \frac{1}{2}\E\left[\int_0^T \big( \mathcal{L} J^{\theta}(t, X_{t}) + r_{t}\big)\xi_t^{\theta}\dd t \right]^\top \left[\E\int_0^T \xi_t^{\theta} (\xi_t^{\theta})^\top \dd t\right]^{-1} \E\left[\int_0^T \big( \mathcal{L} J^{\theta}(t, X_{t}) + r_{t}\big)\xi_t^{\theta}\dd t \right] \\
	=& \frac{1}{2}\alpha(\theta)^\top A^\theta(A^\theta)^{-1}A^\theta\alpha(\theta)\\
	= & \frac{1}{2}\E\int_0^T \bigg|  \Pi_{\theta} \big(\mathcal{L} J^{\theta}(\cdot, X_{\cdot}) + r_{\cdot} \big) \bigg|^2    \dd t = \frac{1}{2}|| \Pi_{\theta} \big(\mathcal{L} J^{\theta}(\cdot, X_{\cdot}) + r_{\cdot} \big)  ||_{L^2}^2= \operatorname{MSPBE}(\theta).
\end{aligned}
\]

\end{proof}

\end{appendix}

\vskip 0.2in
\bibliography{reference}

\end{document}